\documentclass[11pt,letterpaper]{article}

\usepackage[margin=1in]{geometry}
\usepackage{float}

\usepackage[T1]{fontenc}
\usepackage{lmodern}
\usepackage{microtype}

\usepackage{amsmath, amsthm, amssymb}
\usepackage{mathtools}
\usepackage{bbm}

\usepackage{graphicx}
\usepackage{booktabs}
\usepackage{tabularx}
\usepackage[table]{xcolor}
\usepackage{longtable}
\usepackage{subcaption}

\usepackage[numbers,sort&compress]{natbib}
\usepackage{aliascnt}
\usepackage[hidelinks]{hyperref}
\usepackage[nameinlink]{cleveref}
\newcommand{\authorcite}[1]{\citeauthor{#1}~\cite{#1}}

\usepackage[printonlyused]{acronym}
\acrodef{bmnd}[BMND]{n-dimensional block matching}
\acrodef{bm3d}[BM3D]{3-dimensional block matching}
\acrodef{bm4d}[BM4D]{4-dimensional block matching}
\acrodef{nlm}[NLM]{nonlocal means}
\acrodef{sscf}[SSCF]{similar-segment cooperative filtering}
\acrodef{psd}[PSD]{power spectral density}
\acrodef{ecg}[ECG]{electrocardiography}
\acrodef{dct}[DCT]{discrete cosine transform}
\acrodef{vst}[VST]{variance-stabilizing transform}
\acrodef{dwt}[DWT]{discrete wavelet transform}
\acrodef{bsd68}[BSD68]{Berkeley Segmentation Dataset}
\acrodef{set12}[Set12]{standard 12-image grayscale denoising test set}
\acrodef{kodak24}[Kodak24]{Kodak Lossless True Color Image Suite}
\acrodef{fmd}[FMD]{Fluorescence Microscopy Denoising}
\acrodef{xcat}[XCAT]{eXtended CArdiac-Torso}
\acrodef{ssd}[SSD]{sum of squared distances}
\acrodef{psnr}[PSNR]{peak signal-to-noise ratio}
\acrodef{ssim}[SSIM]{structural similarity index measure}
\acrodef{snr}[SNR]{signal-to-noise ratio}
\acrodef{rmse}[RMSE]{root-mean-square error}
\acrodef{cv}[CV]{coefficient of variation}

\usepackage[refpage,intoc]{nomencl}
\makenomenclature
\usepackage{etoolbox}

\theoremstyle{plain}

\newtheorem{theorem}{Theorem}[section]

\newaliascnt{lemma}{theorem}
\newtheorem{lemma}[lemma]{Lemma}
\aliascntresetthe{lemma}

\newaliascnt{proposition}{theorem}
\newtheorem{proposition}[proposition]{Proposition}
\aliascntresetthe{proposition}

\newaliascnt{corollary}{theorem}
\newtheorem{corollary}[corollary]{Corollary}
\aliascntresetthe{corollary}

\newaliascnt{conjecture}{theorem}

\aliascntresetthe{conjecture}

\theoremstyle{definition}

\newaliascnt{definition}{theorem}
\newtheorem{definition}[definition]{Definition}
\aliascntresetthe{definition}

\newaliascnt{example}{theorem}
\newtheorem{example}[example]{Example}
\aliascntresetthe{example}

\newaliascnt{exercise}{theorem}

\aliascntresetthe{exercise}

\newaliascnt{problem}{theorem}

\aliascntresetthe{problem}

\theoremstyle{remark}

\newaliascnt{remark}{theorem}
\newtheorem{remark}[remark]{Remark}
\aliascntresetthe{remark}

\newaliascnt{notation}{theorem}

\aliascntresetthe{notation}

\newaliascnt{claim}{theorem}

\aliascntresetthe{claim}

\crefname{theorem}{theorem}{theorems}
\Crefname{theorem}{Theorem}{Theorems}

\crefname{lemma}{lemma}{lemmas}
\Crefname{lemma}{Lemma}{Lemmas}

\crefname{proposition}{proposition}{propositions}
\Crefname{proposition}{Proposition}{Propositions}

\crefname{corollary}{corollary}{corollaries}
\Crefname{corollary}{Corollary}{Corollaries}

\crefname{conjecture}{conjecture}{conjectures}
\Crefname{conjecture}{Conjecture}{Conjectures}

\crefname{definition}{definition}{definitions}
\Crefname{definition}{Definition}{Definitions}

\crefname{example}{example}{examples}
\Crefname{example}{Example}{Examples}

\crefname{exercise}{exercise}{exercises}
\Crefname{exercise}{Exercise}{Exercises}

\crefname{problem}{problem}{problems}
\Crefname{problem}{Problem}{Problems}

\crefname{remark}{remark}{remarks}
\Crefname{remark}{Remark}{Remarks}

\crefname{notation}{notation}{notations}
\Crefname{notation}{Notation}{Notations}

\crefname{claim}{claim}{claims}
\Crefname{claim}{Claim}{Claims}

\usepackage{customcoms}

\title{BMND: Direct Poisson Denoising by N-Dimensional Block Matching and Collaborative Filtering}
\author{Christof Duhme, Lars Schiefelbein, Florian Büther, Xiaoyi Jiang}
\date{University of Münster}

\begin{document}

\maketitle

\begin{abstract}
Poisson denoising of scientific data requires methods that account for signal-dependent noise while accommodating different data dimensionalities and preserving quantitative intensity information.
We present BMND, a dimension-independent extension of block matching and collaborative filtering for Gaussian and Poisson observations.
Building on the two-stage structure of BM3D and BM4D, BMND processes Poisson data directly, without a variance-stabilizing transform, by combining noise-aware patch matching with propagation of signal-dependent noise variances through collaborative filtering and aggregation.
A dimension-independent reference-patch traversal scheme supports arrays with an arbitrary number of axes.
An optional aggregation-aware mass conservation preserves the observed total intensity after weighted overlap-add.
We evaluate the framework on one-dimensional physiological signals, two-dimensional images, and three-dimensional volumes, using controlled noise experiments and measured fluorescence microscopy acquisitions.
The experiments demonstrate improved reconstruction quality from noise-aware matching and Wiener filtering, while low-count phantom experiments show reduced denoising-induced intensity loss through mass conservation.
The framework provides a unified, non-learning-based approach to denoising across arbitrary data dimensions and is released as an open-source library.
\end{abstract}

\section{Introduction}
\label{sec:introduction}
Image and signal denoising is a fundamental ill-posed inverse problem that aims to recover a clean signal $x$ from a noisy observation $y$. 
The noise model is application-dependent: while additive white Gaussian noise with variance $\sigma^2$ is a common assumption, many imaging modalities - such as fluorescence microscopy and positron emission tomography - are dominated by Poisson noise, where the observed intensity follows a Poisson distribution with mean equal to the true signal~\cite{bertero2021introduction}.
Unlike additive white Gaussian noise, Poisson noise is signal-dependent, making it more challenging to remove~\cite{luisier2010image,makitalo2010optimal}.
Classical approaches to this problem include spatial filtering, such as Gaussian filtering and anisotropic diffusion~\cite{weickert1998anisotropic,perona1990scale}, as well as transform-domain methods~\cite{rudin1992nonlinear} like wavelet shrinkage~\cite{donoho1995denoising, coifman1995translation, chang2000adaptive}. 
Although these methods are computationally efficient, they operate locally or with a fixed basis and inherently struggle to preserve fine structural details, often introducing significant smoothing artifacts~\cite{milanfar2012tour, buades2005review}.
Furthermore, most classical methods are designed for Gaussian noise and require \acp{vst} (e.g., the Anscombe transform~\cite{anscombe1948transformation, makitalo2010optimal}) to handle Poisson data, which struggles in low-count regimes.

These classical methods operate exclusively locally or in a fixed transform basis and neglect the often pronounced self-similarity of natural signals.
This oversight then motivated the creation of nonlocal approaches that take advantage of the redundancy of similar structures across the entire data volume.
One of the forerunners of this technique was the \ac{nlm} algorithm~\cite{buades2005review}, which computes for each pixel a weighted average of all other pixels, with weight depending on the similarity of the surrounding image patches.
The \ac{nlm} algorithm showed that the exploitation of self-similarity can improve denoising quality substantially, thus laying the foundation for an entire family of patch-based denoising methods.
A multitude of refinements improved both robustness and efficiency of \ac{nlm}, like optimized blockwise processing~\cite{coupe2008optimized}, iterative weight estimation~\cite{deledalle2009iterative} and adaptive parameter selection~\cite{kervrann2006optimal}.
With these developments nonlocal self-similarity established itself as a core principle in modern image denoising, paving the way for more sophisticated patch-based approaches.

Building upon the success of nonlocal methods, \Ac{bm3d}~\cite{dabov2007image} represents a landmark advancement in image denoising.
\Ac{bm3d} combines block matching with collaborative filtering in a 3D transform domain: first 2D patches are grouped into 3D stacks, which then are jointly transformed using a separable 3D transform (e.g. wavelet or \ac{dct}), afterwards they are shrunk via hard thresholding or Wiener filtering, and finally inverse-transformed to produce the denoised image.
The synergy of nonlocal self-similarity and sparse transform-domain representation established \Ac{bm3d} as a powerful tool for removing white Gaussian noise in 2D images.
The capabilities of \Ac{bm3d} have been further improved by numerous extensions, including shaped adaptive variants~\cite{katkovnik2010local}, integration of PCA-based dictionaries~\cite{danielyan2011bm3d}, incorporation of color information~\cite{dabov2007color} and combinations with nonlocal sparse models~\cite{mairal2009non}.
The success of \Ac{bm3d} motivated its extension to higher-dimensional data.
\Ac{bm4d}~\cite{maggioni2012nonlocal} generalizes the collaborative filtering paradigm to volumetric data (3D) and video sequences by grouping 3D patches into 4D arrays, achieving positive results in these domains.
However, the transfer to arbitrary dimensions has not yet been done.
The block-matching step suffers from higher dimensionality, as both computational complexity as well as memory footprint grow exponentially with the number of dimensions~\cite{kolda2009tensor}.

In recent years, deep learning-based methods have achieved remarkable success in image denoising, often outperforming classical nonlocal approaches in terms of \ac{psnr} and \ac{ssim} on standard benchmarks.
Discriminative models such as DnCNN~\cite{zhang2017beyond} and FFDNet~\cite{zhang2018ffdnet} learn a direct mapping from noisy to clean images, while encoder-decoder architectures like U-Net~\cite{ronneberger2015u} have proven highly effective for various restoration tasks.
Self-supervised approaches, including Noise2Noise~\cite{lehtinen2018noise2noise}, Noise2Void~\cite{krull2019noise2void} and Noise2Self~\cite{batson2019noise2self}, have further pushed the boundaries by eliminating the need for clean ground-truth data.
However, these models are typically trained for a specific dimensionality and specific noise model (usually Gaussian).
Adapting them to n-dimensional data or to Poisson noise requires architectural modifications and retraining, which is often impractical for high-dimensional scientific data where large annotated training datasets are scarce~\cite{zhang2017learning}.
A further point of limitation is the black-box nature of deep networks, limiting interpretability, which is crucial in medical and scientific imaging~\cite{rudin2019stop}.
While hybrid approaches such as Plug-and-Play priors~\cite{venkatakrishnan2013plug, burger2012image} integrate classical denoisers into iterative schemes, they still rely on pre-trained models that are not inherently dimension-agnostic. 
In conclusion, despite the success, these methods do not provide a direct solution to the challenges of n-dimensional, Poisson-corrupted data, highlighting the need for a dedicated framework like \Ac{bmnd}.

Rather than simply adapting existing block-matching filters, our method is a nontrivial extension of \Ac{bm3d} and \Ac{bm4d}.
Our key contributions are fourfold:

\begin{enumerate}
    \item \textbf{Native Poisson handling.} Directly processes Poisson-corrupted data without requiring a \ac{vst}, preserving accuracy even at low photon counts.
    \item \textbf{Aggregation-aware mass conservation.} Conserves the total intensity within each patch group under Poisson noise.
    \item \textbf{Arbitrary-dimensional support.} Enables denoising of data with any number of dimensions without manual adaptation.
    \item \textbf{Unified reference-patch traversal.}
    Replaces dimension-specific lookup tables with a unified scheme for arbitrary-dimensional arrays, enabling seamless application to new data types.
\end{enumerate}

In addition, we release our implementation as an open-source library to ensure easy use and reproducibility. Together, these contributions go beyond incremental modifications and establish a self-contained denoising framework for a wide range of data types.

The \texttt{bmnd} package can be downloaded from PyPI (\url{https://pypi.org/project/bmnd/}).
The source code of the library can be found at \url{https://github.com/cduhme/bmnd}.

\section{Method Overview}
\label{sec:method-overview}
This section formalizes the observation model that underlies the proposed denoising framework.
We consider both additive Gaussian and Poisson-distributed noise, which require different treatment in the filtering process.
The subsequent description of the algorithm is structured along the two processing stages illustrated in \Cref{fig:method-overview}.
This provides a high-level overview of the proposed \Ac{bmnd} framework, which builds upon the collaborative filtering scheme of \Ac{bm3d}~\cite{dabov2007image} and \Ac{bm4d}~\cite{maggioni2012nonlocal}.
The algorithm consists of two stages over the entire input volume: a hard-thresholding stage followed by a Wiener filtering stage.
Both stages share the same core structure - block matching, collaborative transform, coefficient shrinkage, inverse transform, and aggregation - but differ in the shrinkage rule and the use of reference data.
The first stage produces a complete intermediate estimate, which is then used as a reference for the second stage to guide block matching and compute Wiener shrinkage coefficients.

The first stage, the hard-thresholding stage, operates directly on the noisy observation.
Candidate patches are identified using block matching (\Cref{sec:block-matching}), grouped into stacks, and transformed using an $N$-dimensional transform.
The transformed coefficients are then hard-thresholded (\Cref{sec:hard-thresholding}).
After applying the inverse transform and a Kaiser window, the processed patches are aggregated (\Cref{sec:aggregation-weights}) to form the hard-thresholding estimate.

The second stage, the Wiener filtering stage, refines this estimate.
It again performs block matching, but now uses the hard-thresholding estimate to find candidate patches, while the actual values used for filtering are taken from the original noisy observation (\Cref{sec:block-matching}).
The grouped patches are transformed with another, different $N$-dimensional transform, and Wiener shrinkage is applied (\Cref{sec:wiener}).
An optional mass-conservation constraint (\Cref{sec:mass-conservation}) is then imposed before the inverse transform and Kaiser window.
Finally, aggregation (\Cref{sec:aggregation-weights}) yields the Wiener shrinkage estimate, which is the output of the algorithm.

\begin{figure}[htbp]
    \centering
    \includegraphics[width=\textwidth]{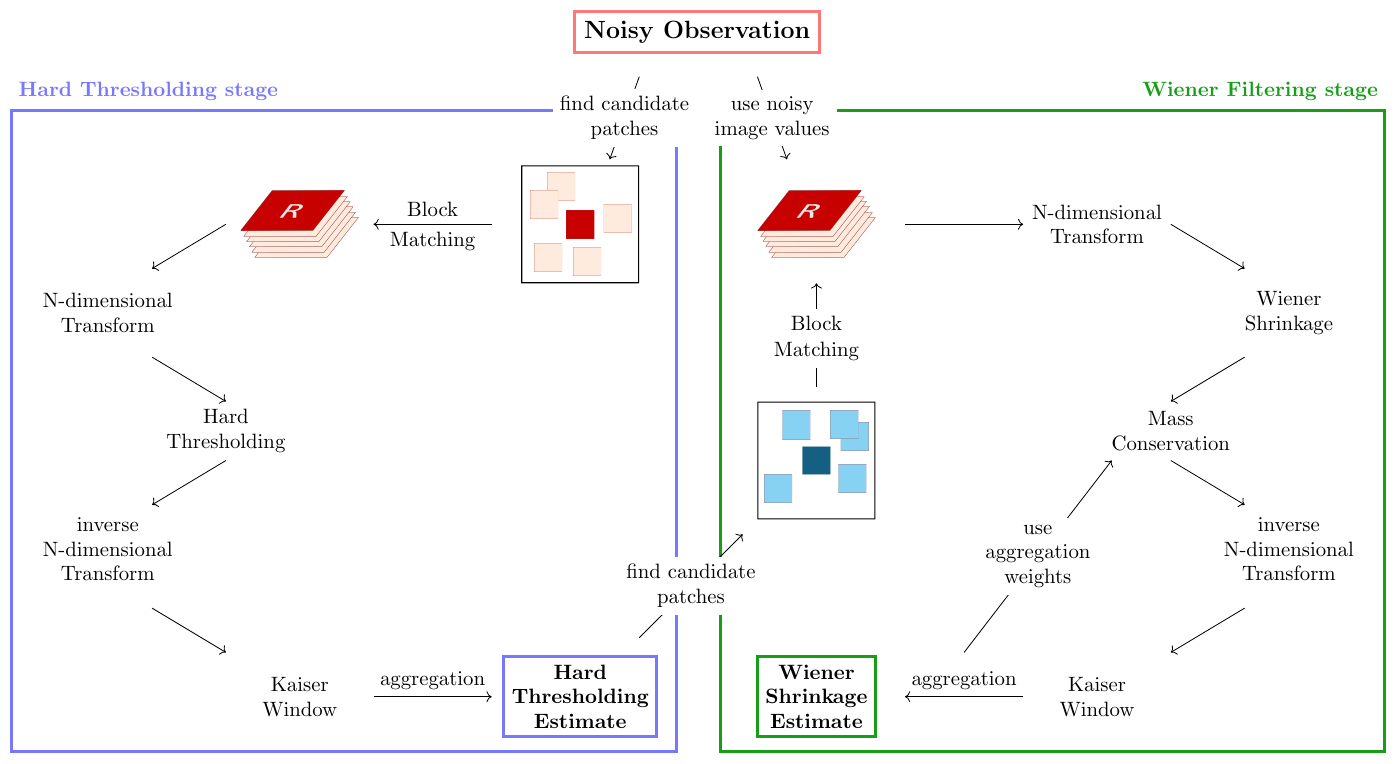}
    \caption{Overview of the algorithm.}
    \label{fig:method-overview}
\end{figure}

\begingroup
\footnotesize
\renewcommand{\arraystretch}{1.5}
\begin{longtable}{@{}p{0.57\textwidth} p{0.38\textwidth}@{}}
    \caption{Top-down overview of the problems addressed by \Ac{bmnd}, the limitations of existing approaches, and the corresponding methodological solutions.}
    \label{tab:bmnd-problem-overview} \\
        \toprule
        \textbf{Problem and existing limitations} &
        \textbf{\Ac{bmnd} approach} \\
        \midrule
        \endfirsthead

        \toprule
        \textbf{Problem and existing limitations} &
        \textbf{\Ac{bmnd} approach} \\
        \midrule
        \endhead

        \midrule
        \multicolumn{2}{r}{Continued on next page} \\
        \endfoot

        \bottomrule
        \endlastfoot

        \textbf{Different data modalities and noise models.}
        Parameters, dimensions, and noise assumptions are tightly coupled in existing implementations, which limits their reuse across imaging modalities and acquisition settings. &
        A common formulation with configurable profiles for geometry, noise, matching, shrinkage, and aggregation. \\
        \midrule

        \textbf{Signal-dependent Poisson noise.}
        Standard squared Euclidean block matching and fixed-variance shrinkage are designed for data-independent, stationary Gaussian noise and do not account for the signal dependence of Poisson noise. &
        Direct scaled-Poisson processing with a plug-in variance map and Poisson-aware matching metrics. \\
        \midrule

        \textbf{Correlated or heteroscedastic noise.}
        A single scalar variance per coefficient is inadequate for nonstationary or correlated noise. &
        A common covariance interface that propagates noise through transforms, thresholds, Wiener gains, and aggregation weights. \\
        \midrule

        \textbf{Arbitrary-dimensional data.}
        \Ac{bm3d} and \Ac{bm4d} are formulated for specific data dimensionalities and therefore require dimension-dependent implementations of patch extraction, grouping, transforms, and traversal. &
        Coupled reference-patch schedules and group transforms for arrays with an arbitrary number of data axes. \\
        \midrule

        \textbf{Low-count measurements.}
        High-count asymptotics for discrepancy moments become inaccurate for small pooled counts. &
        Exact finite-count conditional moments for calibrating acceptance rules and matching scales. \\
        \midrule

        \textbf{Noise-aware block matching.}
        Raw \ac{ssd} includes the expected contribution of noise and does not distinguish structural mismatch from noise fluctuations. &
        Poisson deviance, Pearson and Anscombe discrepancies, and noise-bias-corrected \ac{ssd} under Gaussian or Poisson models. \\
        \midrule

        \textbf{Hard thresholding with correlated Gaussian noise.}
        Standard coefficientwise thresholding relies on a scalar variance per coefficient and does not propagate noise covariance through the transform. &
        Exact transform-domain variance propagation and partial covariance propagation. \\
        \midrule
        
        \textbf{Hard thresholding with signal-dependent Poisson noise.}
        Thresholding rules based on fixed-variance assumptions do not account for the signal dependence of Poisson noise. &
        Direct-Poisson variance maps and shared-source covariance model for repeated voxels. \\
        \midrule
        
        \textbf{Scalar Wiener risk and pilot signal power.}
        The classical Wiener gain uses a scalar minimum-risk rule and raw pilot powers that include a noise floor, which is suboptimal for coefficient-dependent noise. &
        Coefficient-dependent gains, bias-corrected pilot powers, and covariance-aware risk. \\
        \midrule
        
        \textbf{Group-level aggregation weights.}
        Standard aggregation assigns the same weight to all patches in a group and does not exploit patch-level reliability. &
        Patch-level weighting based on propagated variance and predicted marginal risk. \\
        \midrule
        
        \textbf{Aggregation weights ignoring bias and overlap covariance.}
        Existing reliability weights do not account for bias in the pilot or covariance between overlapping estimates. &
        Bias-aware risk and overlap-aware covariance. \\
        \midrule
        
        \textbf{Loss of total intensity under shrinkage.}
        Aggregation weights alone do not ensure that the reconstructed array preserves the observed total intensity, which is proportional to the number of detected events under the scaled-Poisson model. &
        Optional aggregation-aware constraint enforcing each filtered group to reproduce the normalized overlap-add contribution of its noisy source group via a transform-domain correction. \\

\end{longtable}
\endgroup

In the following chapters we systematically walk through the problems and limitations of existing nonlocal denoising methods and present the corresponding \Ac{bmnd} solutions.
The order of presentation follows \Cref{tab:bmnd-problem-overview} and, at the same time, the order in which the algorithm encounters individual issues.
We therefore use the table as a top-down roadmap: each row is a problem-solution pair that is discussed in the corresponding chapter. 

Existing implementations such as \Ac{bm3d} and \Ac{bm4d} are designed for specific data dimensionalities and noise models.
As a result, their parameters, dimensions, and noise assumptions are tightly coupled, which limits their reuse across imaging modalities and acquisition settings.
\Ac{bmnd} addresses this limitation by introducing a common formulation with configurable profiles for geometry, noise, matching, shrinkage, and aggregation.
In \Cref{sec:method-overview}, we establish this general framework: we define the notation, the patch-group observation model, and the covariance propagation rules that are used throughout the manuscript.
The covariance serves as the common interface between the noise model and the algorithm.
All thresholds and shrinkage factors are expressed in terms of the transform-domain variances.

Because the standard \ac{ssd} block matching and fixed-variance shrinkage are specifically designed for data-independent, stationary Gaussian noise, the preexisting implementations are limited in what types of data they may handle.
This is additionally visible in the used variance map, which originated as a single scalar variance per coefficient. 
These limitations do not represent nonstationary or correlated noise and thus in particular not the signal dependence of Poisson noise.
In \Cref{sec:observation-models}, we introduce both the stationary Gaussian model as well as the scaled Poisson model and discuss the handling of the variance in both noise modalities.
This allows us to establish direct scaled-Poisson processing with a plug-in variance map and Poisson-aware matching metrics in the upcoming sections.

The algorithms \Ac{bm3d} and \Ac{bm4d} use a precomputed lookup table to select reference-patch shifts. 
These are therefore limited to the provided patch and step sizes. 
A generalized reference-patch schedule, to overcome these challenges, has thus been added in \Cref{subsec:reference-patch-schedule} to allow for block matching in arbitrary dimensions.
The resulting construction supports an arbitrary number of data axes while balancing spatial coverage, boundary validity and computational cost.
Shifted traversal improves the coverage of coarse lattice without incurring the full cost of refining every axis simultaneously. 

Following the definition of the reference locations, \Cref{subsec:metrics} deals with comparing candidate patches. 
The commonly used method is \ac{ssd}, which is particularly suitable for data-independent stationary Gaussian noise.
Under signal-dependent Poisson noise, however, the variability of the observation changes with the underlying signal and can therefore reflect noise fluctuations rather than structural differences.
We consequently consider four matching metrics: \ac{ssd}, Poisson deviance, Pearson discrepancy, and Anscombe \ac{ssd}.

The scale of the matching metric is particularly important in the low-count regime.
The usual $\chi_1^2-$based calibration can be inaccurate for small pooled counts because the conditional moments of the discrepancy depend on the total count.
\Cref{subsec:ex-fin-count-mom-cali} therefore derives exact finite-count moments under an equal-rate Poisson null and uses them to calibrate low-count acceptance rules.
These are moment-calibrated criteria rather than exact conditional tests.

\Ac{ssd} does not distinguish structural differences between patches from discrepancies caused by noise.
The expected noise contribution depends on the spatial covariance under Gaussian noise and on the underlying signal under Poisson noise.
In \Cref{subsec:ssd-noise-bias}, we therefore derive the noise contribution for both observation models and subtract it from the raw \ac{ssd}.
For Poisson noise, the unknown contribution is estimated using the observed patch values.

Following block matching, the grouped patches are transformed and their coefficients are thresholded.
Thresholding rules based on a common noise variance do not account for spatial correlations or the signal dependence of Poisson noise.
In \Cref{sec:hard-thresholding}, the noise covariance is therefore propagated through the group transform to determine the individual coefficient variances.
For Poisson noise, a plug-in variance map and a shared-source covariance model additionally account for signal-dependent variances and repeated voxels in overlapping patches.
Partial covariance propagation allows the accuracy of this model to be balanced against computational cost.
The resulting variances are used to scale the coefficientwise thresholds.

The subsequent Wiener stage uses the hard-thresholding estimate to determine the signal power of each coefficient.
This estimate still contains residual noise, so the raw squared coefficients can overestimate the underlying signal power.
In \Cref{sec:wiener}, we derive the scalar minimum-risk gain using coefficient-dependent noise variances and introduce noise-floor corrections to the pilot-power estimates.
The resulting gains account for the estimated signal and noise contributions when filtering the original noisy coefficients.

Following inverse transformation, the filtered patches are aggregated to form the reconstructed array.
Standard group-level aggregation assigns the same weight to all patches in a group and therefore does not account for differences in their reliability.
\Cref{sec:aggregation-weights} introduces patch-level weighting alongside group-level weighting, using propagated variance and predicted marginal risk to determine the individual patch weights.
This allows more reliable patches to contribute more strongly to the reconstructed estimate, even when they belong to the same group.

Aggregation weights based only on the transmitted noise do not account for the signal attenuation introduced by shrinkage.
Overlapping patches additionally share source voxels, which introduces dependence between the estimates within a group.
In \Cref{subsec:wiener-weighting-models}, the predicted Wiener risk therefore includes both the residual noise and the estimated signal loss.
The covariance-aware models in \Cref{subsec:windowed-weighting-models} further account for shared-source dependence through inverse transformation and aggregation windowing.
This allows the reliability weights to reflect both shrinkage-induced bias and overlap-induced covariance.

Finally, shrinkage can alter the total intensity of the reconstructed array, and aggregation weights alone do not ensure its preservation.
Under the scaled-Poisson model, the observed total intensity is proportional to the number of detected events.
In \Cref{sec:mass-conservation}, we therefore introduce an optional aggregation-aware constraint that makes each filtered group reproduce the normalized overlap-add contribution of its noisy source group.
The constraint is enforced through a transform-domain correction that accounts for the aggregation weights, window, and overlap normalization.
With the aggregation weights held fixed, this preserves the observed total intensity after aggregation.

To describe these steps within a common mathematical framework, we first introduce the notation, the patch-group observation model, and the covariance propagation rule used throughout the manuscript.
The following sections then develop the individual processing steps in the order outlined above.

\paragraph{Notation.} 
Spatial-domain group vectors are written in bold lowercase, such as $\mathbf y_g$, and their transform-domain counterparts in bold uppercase, such as $\mathbf Y_g = T_g \mathbf y_g$.
Individual entries are written without boldface, for example $y_{g,i}$ and $Y_{g,k}$.
Matrices and linear operators generally use non-bold uppercase letters, such as $T_g$ and $U_g$.
Calligraphic letters generally denote sets or collections, with locally defined exceptions for distributions, likelihoods, and selected operators.
Hats denote estimates.
The superscripts ${}^{\top}$ and ${}^{\mathrm H}$ denote transpose and conjugate transpose, respectively.\\

Let $D = N - 1$ denote the number of data axes.
Stacking $D$-dimensional patches along an additional grouping axis produces the $N$-dimensional groups processed by \ac{bmnd}.
Let $\Omega \subset \mathbb{Z}^D$ denote the finite sampling domain, and let $u \in \Omega$ denote a sampling location.
Let $y \colon \Omega \to \R$ denote the observed array and $x \colon \Omega \to \R$ its unknown clean mean.
For each reference origin, block matching searches a local window, scores the candidate patches, and retains the best matches.
The resulting group is the basic object processed by both collaborative-filtering stages.
 
\nomenclature[AAA]{\(\Omega\)}{discrete sampling domain}
\nomenclature[AAB]{\(u\)}{sampling location}
\nomenclature[AAC]{\(y\)}{observed array}
\nomenclature[AAD]{\(x\)}{unknown clean mean}
\nomenclature[AAE]{\(n\)}{zero-mean noise array}

\begin{definition}[Patch-group observation model]
    \label{def:patch-group-model}
    After vectorization, a noisy patch group is written as
    \begin{equation}
        \yy_g = \xx_g + \nn_g,
    \end{equation}
    where $\xx_g\in\R^d$ is the unknown clean group, $\nn_g\in\R^d$ is the corresponding noise vector with $\E[\nn_g]=0$ and $d$ is the group dimension.
    A possibly complex-valued combined patch and nonlocal group transform $T\in\C^{d\times d}$ produces
    \begin{equation}
        \begin{aligned}
            \mathbf{Y}_g &= T\mathbf{y}_g = \XX_g + \NN_g,\\
            \XX_g &= T\xx_g,
            &\qquad
            \NN_g &= T\nn_g.
        \end{aligned}
    \end{equation}
    Spatial-domain entries are denoted by $y_{g,i}$, $x_{g,i}$, and $n_{g,i}$, whereas transformed entries are denoted by $Y_{g,k}$, $X_{g,k}$, and $N_{g,k}$.
    When one group is fixed, the group index is suppressed, giving symbols such as $n_i$, $Y_k$, $X_k$, and $N_k$.

    For complex-valued random vectors, the covariance of $\nn_g$ is defined as 
    \begin{equation}
        \Sigma_g=\Cov(\nn_g)=\E\!\left[\nn_g\nn_g^{\mathrm{H}}\right],
    \end{equation}
    where the zero-mean assumption is used. The transformed covariance is denoted by 
    \begin{equation}
        \Sigma_{N,g}= \Cov(\NN_g).
    \end{equation}
\end{definition}

\nomenclature[AAF]{\(g\)}{patch-group index}
\nomenclature[AAG]{\(\yy_g\)}{observation group}
\nomenclature[AAH]{\(\xx_g\)}{unknown clean group}
\nomenclature[AAI]{\(\nn_g\)}{zero-mean noise group}
\nomenclature[AAJ]{\(d\)}{number of entries in a patch group}
\nomenclature[AAK]{\(T\)}{group transform operator}
\nomenclature[AAL]{\(\YY_g\)}{transformed observation group}
\nomenclature[AAM]{\(\XX_g\)}{transformed unknown clean group}
\nomenclature[AAN]{\(\NN_g\)}{transformed noise group}
\nomenclature[AAO]{\(Y_{g,k}\)}{observation coefficient $k$ of group $g$}
\nomenclature[AAP]{\(X_{g,k}\)}{clean coefficient $k$ of group $g$}
\nomenclature[AAQ]{\(N_{g,k}\)}{noise coefficient $k$ of group $g$}
\nomenclature[AAR]{\(\Sigma_g\)}{covariance of noise group $g$}
\nomenclature[AAS]{\(\Sigma_{N,g}\)}{covariance of transformed noise group $g$}

\begin{remark}[Complex-valued transforms]
    \label{rem:complex-valued-transforms}
    Real-valued source groups may have complex transform coefficients, e.g. when $T$ contains a Fourier transform.
    When $T$ is complex-valued, it and its inverse $U$ are understood as complex-linear operators on the complex spaces.
    A real group $\xx_g\in\mathbb{R}^d$ is embedded in $\mathbb{C}^d$, its coefficients lie in the real-valued subspace $\mathcal{S}_g = T(\mathbb{R}^d) = \{T \xx: \xx \in \R^d\}$, and the restriction of $U$ to $\mathcal{S}_g$ maps back to $\mathbb{R}^d$.
    For coefficients outside $\mathcal{S}_g$, the synthesized group need not be real-valued.
\end{remark}

\nomenclature[AAT]{\(U\)}{inverse group transform operator}
\nomenclature[AAU]{\(\mathcal{S}_g\)}{coefficient subspace corresponding to real-valued groups}

For real- or complex-valued random variables, covariance is understood in the Hermitian sense,
\begin{equation}
    \Cov(z,w)
    = \E \left[(z - \E[z]) \overline{(w - \E[w])}\right],
    \qquad
    \Cov(\mathbf{z})
    = \E \left[(\mathbf{z} - \E[\mathbf{z}])(\mathbf{z} - \E[\mathbf{z}])^{\H} \right].
    \label{eq:hermitian-covariance}
\end{equation}
The corresponding variance is $\Var(z)=\Cov(z,z)=\E[|z-\E[z]|^2]$.
For real random variables, these expressions reduce to the usual covariance and variance definitions.

The algorithm applies the following sequence twice.
In the first pass, transformed coefficients that are small relative to their own noise standard deviations are removed.
Inverse transformation and weighted overlap-add produce a pilot estimate.
The second pass repeats block matching using that pilot, transforms the pilot and noisy groups together, estimates signal power from the pilot, and applies a Wiener gain to the noisy coefficients.
A second overlap-add produces the output.
Thus, the covariance is the common interface between the noise model and the algorithm.
In what follows, all thresholds and shrinkage factors are defined in terms of the diagonal entries of $\Sigma_{N,g}$.

\begin{lemma}[Covariance propagation]
    \label{lem:covariance-propagation}
    Let $\nn_g$ have finite second moments and covariance $\Sigma_g$. For a deterministic linear transform $T$, the transformed noise covariance is
    \begin{equation}
        \Sigma_{N,g} = \Cov(\NN_g) = T \Sigma_g T^\H.
    \end{equation}
    In particular, the noise variance of the transformed coefficient $k$ is
    \begin{equation}
        \sigma_k^2 = (\Sigma_{N,g})_{kk}
        = \sum_{i,j} T_{ki} \overline{T_{kj}} \Cov(n_{g,i}, n_{g,j}). 
        \label{eq:transformed-coefficient-variance}
    \end{equation}
\end{lemma}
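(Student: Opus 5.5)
The plan is a direct computation from the Hermitian covariance definition in \eqref{eq:hermitian-covariance}, using only linearity of expectation. The steps are: compute the mean of $\NN_g$, then the full covariance matrix, then read off the diagonal.

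\textbf{Mean.} Since $T$ is deterministic and linear, and $\E[\nn_g]=0$ by \Cref{def:patch-group-model}, linearity of expectation applied entrywise gives $\E[\NN_g]=T\,\E[\nn_g]=0$. Finite second moments guarantee that all expectations below exist, because each entry of $\NN_g$ is a finite linear combination of square-integrable variables.

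\textbf{Full covariance.} With zero mean, $\Cov(\NN_g)=\E[\NN_g\NN_g^{\H}]$. Substituting $\NN_g=T\nn_g$ and using $(T\nn_g)^{\H}=\nn_g^{\H}T^{\H}$ gives $\E[T\nn_g\nn_g^{\H}T^{\H}]$. Because $T$ is deterministic, it can be pulled out on both sides entrywise. Concretely, the $(k,l)$ entry is
\[
\E\Bigl[\textstyle\sum_{i,j}T_{ki}n_{g,i}\overline{T_{lj}n_{g,j}}\Bigr]=\sum_{i,j}T_{ki}\,\E[n_{g,i}\overline{n_{g,j}}]\,\overline{T_{lj}},
\]
which is the $(k,l)$ entry of $T\Sigma_gT^{\H}$.

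\textbf{Diagonal entries.} Setting $l=k$ yields \eqref{eq:transformed-coefficient-variance}. Here $\E[n_{g,i}\overline{n_{g,j}}]=\Cov(n_{g,i},n_{g,j})$ by the zero-mean assumption, and $(\Sigma_{N,g})_{kk}=\Var(N_{g,k})$ by the definition of variance following \eqref{eq:hermitian-covariance}.

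\textbf{Main obstacle.} There is essentially no obstacle. The only care needed is the complex case: the conjugation must land on the second factor, consistent with the Hermitian convention in \eqref{eq:hermitian-covariance}, so the result is $T\Sigma_gT^{\H}$ rather than $T\Sigma_gT^{\top}$. For real $T$ the two coincide.
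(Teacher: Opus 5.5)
Your proposal is correct and follows essentially the same route as the paper: use linearity to pull the deterministic $T$ out of the expected outer product, giving $T\Sigma_g T^{\mathrm H}$, then read off the $k$th diagonal entry. The only cosmetic differences are that the paper centers explicitly via $\mathbf{N}_g-\E[\mathbf{N}_g]=T(\mathbf{n}_g-\E[\mathbf{n}_g])$, so it does not need the zero-mean assumption, and it works at the matrix level rather than entrywise.
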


\nomenclature[AAV]{\(T^\H\)}{conjugate transpose of $T$}
\nomenclature[AAW]{\(\sigma_k^2\)}{noise variance of transformed coefficient $k$}

\begin{proof}
    Since $T$ is deterministic and linear, 
    \begin{equation*}
        \NN_g - \E[\NN_g]
        = T\left(\nn_g-\E[\nn_g]\right).
    \end{equation*}
    Consequently,
    \begin{equation*}
        \begin{split}
            \Cov(\NN_g)
            &= \E\left[T\left(\nn_g - \E[\nn_g]\right) \left(\nn_g - \E[\nn_g]\right)^\H T^\H \right]\\
            & = T \E\left[(\nn_g-\E[\nn_g])
                        (\nn_g-\E[\nn_g])^\H\right] T^\H\\
            & = T\Sigma_gT^\H.
        \end{split}
    \end{equation*}
    \Cref{eq:transformed-coefficient-variance} follows by taking the $k$th diagonal entry.
\end{proof}

\begin{remark}[Effect of data-dependent group selection]
    \label{rem:data-dependent-selection}
    \Cref{lem:covariance-propagation} is stated for both a fixed group and a fixed deterministic transform.
    In the algorithm, however, the group is selected by a data-dependent block-matching procedure.
    A fully conditional analysis would therefore require the distribution of the noise given the matching outcome.
    This issue applies to both Gaussian and Poisson observations.
    In the latter case, the noise covariance is additionally signal-dependent.
    Unless stated otherwise, the selected group is treated as fixed when computing the local covariance used by the filtering operations.     
\end{remark}
\section{Observation Models}
\label{sec:observation-models}
Observation models describe how the measured data relate to the underlying clean signal.
Two observation models are considered here: additive Gaussian noise and scaled-Poisson noise.
The additive white-Gaussian model is inherited from \Ac{bm3d}/\Ac{bm4d}~\cite{dabov2007image, maggioni2012nonlocal}, while stationary correlated-Gaussian modeling follows generalized collaborative filtering~\cite{makinen2020collaborative}.
For such stationary Gaussian fields, the autocovariance and power spectral density are linked via the discrete Wiener--Khinchin theorem~\cite{oppenheim2010discrete}.
On the other hand, Poisson noise arises in photon-limited imaging, where the observed counts follow a Poisson distribution.
Stored intensities may differ from these counts by a fixed multiplicative conversion factor.
This motivates the scaled-Poisson model, meaning that mean and variance formulas follow from standard Poisson moment identities~\cite{kingman1993poisson}.
Direct scaled-Poisson processing and its plug-in variance map extend these noise models in \Ac{bmnd}.

Despite their different origins, Gaussian and Poisson noise models share a common structure: both define a covariance matrix $\Sigma_g$ that characterizes the noise in the observation domain.
Once this covariance is known, the subsequent linear propagation through the filtering pipeline is identical.
In the following, these two noise models are formalized, starting with the Gaussian case, and the properties used throughout this work are derived.

\subsection{Stationary Gaussian Noise}
\begin{definition}[Additive Gaussian observation model]
    \label{def:additive-gaussian-model}
    An observation follows the additive Gaussian model if
    \begin{equation}
        y = x + n. \qquad n\sim\mathcal{N}(0, \Sigma_n),
    \end{equation}
    where $\Sigma_n$ is the covariance matrix of the vectorized noise array.
\end{definition}

\begin{definition}[Wide-sense stationary Gaussian noise]
    \label{def:stationary-gaussian}
    The Gaussian noise field is wide-sense stationary if its mean is constant and its covariance depends only on the displacement $\tau$ between sampling locations $u$ and $u + \tau \in \Omega$:
    \begin{equation}
        \Cov (n(u),n(u + \tau)) = \mathfrak{c}(\tau).
    \end{equation}
    The function $\mathfrak{c}$ is called the autocovariance.
\end{definition}

\nomenclature[BAA]{\(\mathfrak{c}(\tau)\)}{noise autocovariance at displacement $\tau$}

Under discrete Fourier normalization, the \ac{psd} samples $S(\omega)$ and the autocovariance satisfy
\begin{equation}
    \mathfrak{c}(\tau) = \frac{1}{|\Omega|}\operatorname{ifftn}(S)(\tau),
\end{equation}
where $|\Omega|$ is the number of samples in the array.

For white noise with \ac{psd} $S(\omega) = |\Omega|\sigma^2$ this simplifies.
\begin{corollary}[White-noise transform variance]
    \label{cor:white-noise-variance}
    If $\Sigma_g = \sigma^2 I$ and $T$ is orthonormal, then 
    \begin{equation}
        \Sigma_{N,g}=\sigma^2 I,
    \end{equation}
    and every transformed coefficient has noise variance $\sigma^2$.
\end{corollary}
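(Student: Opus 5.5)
The plan is to obtain the corollary as a direct specialization of \Cref{lem:covariance-propagation}. First, I will check the lemma's hypotheses. The noise group $\nn_g$ has covariance $\Sigma_g = \sigma^2 I$, so its second moments are finite. The transform $T$ is deterministic and linear. Following \Cref{rem:data-dependent-selection}, the selected group is treated as fixed, so the lemma applies without conditioning on the matching outcome.

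Substituting into the propagation formula gives
\begin{equation*}
    \Sigma_{N,g} = T(\sigma^2 I)T^\H = \sigma^2\, T T^\H.
\end{equation*}
The only point needing care is how ``orthonormal'' is read when $T$ may be complex-valued (\Cref{rem:complex-valued-transforms}). I will take it to mean unitary, $T^\H T = I$. Since $T$ is square ($d\times d$), this implies $T T^\H = I$ as well: a one-sided inverse of a square matrix is two-sided. In the real case, $T^\H = T^\top$ and this is ordinary orthogonality. It follows that $\Sigma_{N,g} = \sigma^2 I$.

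The statement about individual coefficients then comes from reading off the diagonal, as in \eqref{eq:transformed-coefficient-variance}. The diagonal entries give $\sigma_k^2 = \sigma^2 \sum_i |T_{ki}|^2 = \sigma^2$, because each row of a unitary matrix has unit norm.

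I expect no real obstacle here. The only subtlety is stating explicitly that orthonormality of the square matrix $T$ yields $TT^\H=I$, and not merely $T^\H T=I$.
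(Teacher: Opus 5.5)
Your proposal is correct and matches the paper's proof: both substitute $\Sigma_g=\sigma^2 I$ into \Cref{lem:covariance-propagation} and use orthonormality to conclude $\sigma^2 TT^{\mathrm H}=\sigma^2 I$. Your extra remark, that $T^{\mathrm H}T=I$ for the square matrix $T$ also gives $TT^{\mathrm H}=I$, just makes explicit a detail the paper leaves implicit.
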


\begin{proof}
    \Cref{lem:covariance-propagation} gives
    \begin{equation*}
        \Sigma_{N,g} = T (\sigma^2 I) T^\H = \sigma^2 T T^\H = \sigma^2 I.
    \end{equation*}
\end{proof}
Colored noise instead produces non-zero off-diagonal patch covariance and generally coefficient-dependent variances.

\subsection{Scaled-Poisson Noise}
For Poisson noise, there is not only the normal Poisson noise but also its scaled variant.
Choosing the Poisson scale $s_P = 1$, all subsequent formulas reduce to the standard Poisson case.
\begin{definition}[Scaled-Poisson observation model]
    \label{def:scaled-poisson-model}
    Let $K(u)$ denote the number of detected events at location $u$.
    Under the scaled-Poisson model,
    \begin{equation}
        K(u)\sim\Poi(\lambda(u)), \qquad Y(u)=s_P K(u),~ s_P >0.
    \end{equation}
    The corresponding clean mean in stored-intensity units is
    \begin{equation}
        X(u)= \E[Y(u)] 
        = s_P \lambda(u). 
    \end{equation}
\end{definition}

\nomenclature[BAB]{\(s_P\)}{stored intensity per detected event}
\nomenclature[BAC]{\(K(u)\)}{number of detected events at sampling location $u$}

The standard Poisson model additionally assumes that the random variables $K(u)$ are independent across sampling locations.

\begin{lemma}[Scaled-Poisson moments]
    \label{lem:scaled-poisson-moments}
    Under \Cref{def:scaled-poisson-model},
    \begin{equation}
        \E[Y(u)] = X(u),
        \qquad
        \Var[Y(u)] = s_P X(u).
    \end{equation}
    If the raw counts are independent, then the spatial-domain noise covariance is diagonal, i.e. 
    \begin{equation}
        \Cov (Y(u),Y(v)) 
        = \begin{cases}
            s_PX(u), & u=v \\
            0, & u\neq v.
        \end{cases}
        \label{eq:poisson-covariance}
    \end{equation}
\end{lemma}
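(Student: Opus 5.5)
The plan is to reduce everything to the two standard Poisson moment identities for $K(u)\sim\Poi(\lambda(u))$, namely $\E[K(u)]=\lambda(u)$ and $\Var[K(u)]=\lambda(u)$. Both follow from the series $\E[K]=\sum_{k\ge 0}k\,e^{-\lambda}\lambda^k/k!=\lambda$ and $\E[K(K-1)]=\lambda^2$, which give $\Var[K]=\E[K(K-1)]+\E[K]-\E[K]^2=\lambda$. These identities are classical, so in the proof I would simply cite them, for example from~\cite{kingman1993poisson}, rather than rederive them.

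Given those identities, the first claim is linearity of expectation together with the quadratic scaling of variance under a deterministic multiplicative factor. Since $s_P>0$ is deterministic, $\E[Y(u)]=s_P\E[K(u)]=s_P\lambda(u)=X(u)$, which is consistent with the definition of $X(u)$ in \Cref{def:scaled-poisson-model}. For the variance, $\Var[Y(u)]=s_P^2\Var[K(u)]=s_P^2\lambda(u)$. Substituting $\lambda(u)=X(u)/s_P$ then gives $\Var[Y(u)]=s_P X(u)$. This could also be read as the one-dimensional special case of \Cref{lem:covariance-propagation} with $T=s_P$.

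For the covariance structure, I would split \eqref{eq:poisson-covariance} into two cases. On the diagonal $u=v$, $\Cov(Y(u),Y(u))=\Var[Y(u)]$, which is the variance computed above. Off the diagonal $u\neq v$, independence of $K(u)$ and $K(v)$ gives $\E[K(u)K(v)]=\E[K(u)]\E[K(v)]$. Hence $\Cov(K(u),K(v))=0$, and bilinearity yields $\Cov(Y(u),Y(v))=s_P^2\Cov(K(u),K(v))=0$. Because $Y$ is real-valued, the Hermitian covariance in \eqref{eq:hermitian-covariance} coincides with the ordinary covariance, so no conjugation issues arise. Taken together, the two cases show that the noise covariance of the vectorized array (and so of any group $\Sigma_g$ built from distinct locations) is diagonal with entries $s_PX(u)$.

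There is no real obstacle in this argument. The only step needing care is a bookkeeping one: the variance scales with $s_P^2$, and the result is re-expressed in terms of $X$ rather than $\lambda$, which is where the single factor of $s_P$ comes from. I would also note that the off-diagonal zero relies only on pairwise uncorrelatedness, which independence implies. Finally, I would remark that this zero holds for distinct sampling locations only. Repeated voxels in overlapping patches would need a separate shared-source treatment, but that lies outside the statement.
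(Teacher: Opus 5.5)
Your proof is correct and follows the same route as the paper's. Both use the Poisson identities $\E[K(u)]=\Var[K(u)]=\lambda(u)$, scale by $s_P$ and $s_P^2$, rewrite the result in terms of $X(u)=s_P\lambda(u)$, and use independence of the raw counts for the off-diagonal zeros; your bilinearity argument for the case $u\neq v$ simply spells out the paper's one-line appeal to independence.
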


\begin{proof}
    Since a Poisson random variable with parameter $\lambda(u)$ has mean and variance $\lambda(u)$.
    \begin{equation*}
        \begin{split}
            \E[Y(u)]
            & = s_P\E[K(u)]
            = s_P\lambda(u)
            = X(u), \\
            \Var(Y(u)) 
            & = s_P^2 \Var(K(u))
            = s_P^2\lambda(u)
            = s_PX(u).
        \end{split}
    \end{equation*}
    Independence of the raw counts yields \Cref{eq:poisson-covariance}.
\end{proof}
The factor $s_P$ is a change of units, not an additional free noise parameter.
An intensity $X$ represents $X/s_P$ expected events.
After each event is scaled by $s_P$ , the variance in stored-intensity units is $s_P X$.

As a Poisson process does not have a spectral component, a variance map $m_\mathrm{v}$ is defined instead of the \Ac{psd}.
\begin{definition}[Poisson variance map]
    \label{def:poisson-variance-map}
    Let $\widehat X$ be a nonnegative surrogate of the clean stored-intensity mean.
    Define the variance map
    \begin{equation}
        m_\mathrm{v}(u) = s_P [\widehat X(u)]_+.
    \end{equation}
\end{definition}

\nomenclature[BAD]{\(m_{\mathrm{v}}(u)\)}{estimated Poisson noise variance at sampling location $u$}

The variance identity in \Cref{lem:scaled-poisson-moments} motivates this map.
The positive part respects the support of Poisson intensities.

Let $c > 0$ be a conversion factor between stored-intensity units.
If the observations, clean means, and Poisson scale are rescaled consistently as
\begin{equation}
    Y'(u) = c Y(u),
    \qquad
    X'(u) = c X(u),
    \qquad
    s'_P = c s_P,
\end{equation}
then
\begin{equation}
    \frac{Y'(u)}{s'_P}
    = \frac{Y(u)}{s_P}.
\end{equation}
Thus, any matching statistic based only on equivalent raw counts is invariant to the chosen storage units.
\section{Block Matching}
\label{sec:block-matching}
Block matching identifies patches that are likely to contain similar underlying signal content and groups them for subsequent collaborative filtering.
For each reference patch, the algorithm selects a valid reference origin, searches a local neighborhood, evaluates a matching metric against candidate patches, and retains the best matches.
In \Ac{bmnd}, the reference origins are generated along an arbitrary number of data axes, so the traversal must balance spatial coverage, boundary validity, and computational cost.
The matching metric is selected according to the observation model and may require finite-count calibration or correction for the expected contribution of noise.
This section first defines the reference patch schedules and then develops the matching metrics and their noise-aware extensions.

Local window search, \ac{ssd} comparison, and retention of the best matching patches are inherited from \Ac{bm3d}/\Ac{bm4d}~\cite{dabov2007image,maggioni2012nonlocal}.
The Gaussian SSD noise bias correction follows generalized collaborative filtering~\cite{makinen2020collaborative}.
Poisson deviance, Pearson discrepancy, the Anscombe transform, and conditional Poisson splitting are standard statistical results~\cite{mccullagh1989generalized,pearson1900criterion,
anscombe1948transformation,kingman1993poisson}.
Coupled arbitrary-dimensional schedules, exact finite-count calibration, and noise bias-corrected matching extend that construction in \Ac{bmnd}.

\subsection{Reference-Patch Schedule}
\label{subsec:reference-patch-schedule}
Using zero-based indexing, a patch origin is the coordinate of its first sample.
Along an axis of length $L_d$, a patch of size $j_d \leq L_d$ has valid origins $0, \ldots, L_d - j_d$.
Therefore, the last valid patch origin is $\ell_d = L_d - j_d$.
Evaluating every valid patch origin gives the densest traversal, but its cost grows approximately as $\prod_d (\ell_d + 1)$.
A coarse traversal with step sizes $h_d$ requires $\prod_d \left(\left\lceil\frac{\ell_d}{h_d}\right\rceil + 1\right)
$ evaluations.
Reducing the step sizes $h_d$ therefore improves coverage at a rapidly increasing cost, particularly when several axes are refined simultaneously.
A coarse unshifted lattice is cheaper, but it always samples the same positions and may underrepresent structures lying between them.

Shifted traversal provides an intermediate solution.
It retains the coarse step sizes but displaces the lattice between a prescribed number of schedule passes.
This samples different patch alignments while increasing the number of reference origins by the number of passes rather than by the Cartesian product of finer axiswise lattices.

The \Ac{bm3d}/\Ac{bm4d} reference implementations use a precomputed lookup table to select such shifts for supported patch and step sizes.
Unsupported patch and step sizes, as well as dimensions outside the supported two- and three-dimensional cases, fall back to zero shifts.
Because this lookup table does not define a general arbitrary-dimensional traversal, \Ac{bmnd} replaces it with schedules whose shifts are generated directly from the patch sizes, step sizes, and density parameters.

\begin{definition}[Clipped reference origin]
    \label{def:clipped-reference-origin}
    Let $D$ be the number of data axes, indexed by $d \in \{0, \ldots, D - 1\}$.
    Along axis $d$, let $j_d$ be the patch size, $h_d$ the step size, and $\ell_d$ the last valid patch origin.
    The coarse lattice contains
    \begin{equation}
        Z_d = \left\lceil \frac{\ell_d}{h_d} \right\rceil + 1
    \end{equation}
    slots, indexed by $q_d \in \{0, \ldots, Z_d - 1\}$.
    The nominal origin associated with slot $q_d$ is
    \begin{equation}
        s_d^{\mathrm{slot}} = \min\{q_d h_d, \ell_d\}.
    \end{equation}
    For a nonnegative offset $o_d$, the corresponding reference origin is
    \begin{equation}
        r_d = \min\{\max\{s_d^{\mathrm{slot}} - o_d, 0\}, \ell_d\}.
        \label{eq:clipped-origin}
    \end{equation}
    For the last slot $q_d = Z_d - 1$, one sets $r_d = \ell_d$ regardless of $o_d$.
    Thus every reference patch is valid, and the last slot represents the far boundary independently of the selected offset.
\end{definition}

\nomenclature[CAA]{\(D\)}{number of data axes}
\nomenclature[CAB]{\(j_d\)}{patch size along axis $d$}
\nomenclature[CAC]{\(h_d\)}{reference-lattice step size along axis $d$}
\nomenclature[CAD]{\(\ell_d\)}{last valid patch origin along axis $d$}
\nomenclature[CAE]{\(Z_d\)}{number of coarse lattice slots along axis $d$}
\nomenclature[CAF]{\(s_d^\text{slot}\)}{slot origin along axis $d$}
\nomenclature[CAG]{\(o_d\)}{slot offset along axis $d$}
\nomenclature[CAH]{\(q_d\)}{slot index along axis $d$}
\nomenclature[CAI]{\(r_d\)}{reference patch origin along axis $d$}

Clipping determines whether an origin is valid but not how offsets are selected.
\Ac{bmnd} constructs a short candidate list for each axis.
Let
\begin{equation}
    n_d = \min\left\{j_d,
        \max\left(1, \left\lceil \rho_s \frac{j_d - 1}{h_d} \right\rceil\right)
    \right\}.
    \label{eq:number-shifts}
\end{equation}
Define the ordered candidate shift list by
\begin{equation}
    \mathcal{S}_d = (s_{d, 0}, \ldots, s_{d, n_d - 1}),
    \qquad
    s_{d, j} =
    \begin{cases}
        0, & n_d = 1,\\
        \displaystyle
        \left\lfloor
            \frac{j (j_d - 1)}{n_d - 1} + \frac{1}{2}
        \right\rfloor,
        & n_d > 1,
    \end{cases}
    \label{eq:candidate-shift-list}
\end{equation}
for $j = 0, \ldots, n_d - 1$.
Thus $\mathcal{S}_d$ consists of rounded uniformly spaced samples of $[0, j_d - 1]$.
The shift-density parameter $\rho_s$ controls the resolution of these candidate lists.

\nomenclature[CAJ]{\(\mathcal{S}_d\)}{ordered candidate shift list along axis $d$}
\nomenclature[CAK]{\(\rho_s\)}{candidate-list resolution}

Let $K_s \geq 1$ denote the number of schedule passes.
For each slot tuple $\mathbf{q} = (q_0, \ldots, q_{D - 1})$, \Ac{bmnd} visits that tuple once in every pass $k \in \{0, \ldots, K_s - 1\}$.
Each pass selects one offset vector $\mathbf{o}$ and obtains $\mathbf{r}$ from \Cref{eq:clipped-origin} coordinatewise.
Consequently, the $K_s$ passes construct $K_s$ candidate origins for each slot tuple, of which at most $K_s$ are distinct.
Choosing the offset on every axis independently would instead construct $\prod_d |\mathcal{S}_d|$ candidates.
Repeated origins created by clipping or by different passes are retained only once.

\nomenclature[CAL]{\(K_s\)}{number of schedule passes}
\nomenclature[CAM]{\(\mathbf{q}\)}{slot-index tuple}
\nomenclature[CAN]{\(\mathbf{o}\)}{slot-offset tuple}
\nomenclature[CAO]{\(\mathbf{r}\)}{reference patch origin tuple}

\begin{definition}[Generated reference schedule]
    \label{def:generated-schedule}
    The generated schedule uses the next-axis slot index to select the offset of the current axis:
    \begin{equation}
        o_d =
        \begin{cases}
            \mathcal{S}_d[(q_{d + 1} + k) \bmod |\mathcal{S}_d|], & d < D - 1,\\
            0, & d = D - 1.
        \end{cases}
        \label{eq:generated-schedule}
    \end{equation}
    The last axis remains unshifted and anchors the resulting asymmetric hierarchy.
\end{definition}

\begin{definition}[Balanced reference schedule]
    \label{def:balanced-reference-schedule}
    Let $Q = \sum_e q_e$.
    The balanced schedule sets
    \begin{equation}
        o_d = \mathcal{S}_d[(k + Q - q_d) \bmod |\mathcal{S}_d|].
        \label{eq:balanced-schedule}
    \end{equation}
    Because $Q - q_d$ is the sum of the slot indices on all other axes, no axis is given a distinguished role.
\end{definition}

\begin{definition}[Sparse reference schedule]
    \label{def:sparse-reference-schedule}
    Define the active axes by
    \begin{equation}
        \mathcal{A}
        = \{d : Z_d > 1 \text{ or } j_d > 1\}.
    \end{equation}
    For each $d \in \mathcal{A}$ that has a next active axis $e = \min\{a \in \mathcal{A} : a > d\}$, the sparse schedule sets
    \begin{equation}
        o_d = \mathcal{S}_d[(q_e + k) \bmod |\mathcal{S}_d|].
    \end{equation}
    All remaining axes use $o_d = 0$.
    Hence singleton axes do not interrupt the hierarchy, while its last active axis remains an unshifted anchor.
\end{definition}

\nomenclature[CAP]{\(\mathcal{A}\)}{set of active axes}

\begin{definition}[Off reference schedule]
    \label{def:off-reference-schedule}
    The off schedule sets $o_d = 0$ and traverses
    \begin{equation}
        0, h_d, 2 h_d, \ldots, \ell_d
    \end{equation}
    along axis $d$, appending $\ell_d$ when the stepped sequence would otherwise omit it.
\end{definition}

The four schedules differ only in how they couple the axiswise offsets.
The off schedule is the baseline coarse lattice.
The generated schedule is an asymmetric hierarchy over consecutive axes, and the sparse schedule removes inactive axes from that hierarchy.
The balanced schedule instead couples every axis symmetrically.
In all three shifted schedules, increasing the pass index $k$ advances each selected list index cyclically.

\subsection{Metrics}
\label{subsec:metrics}
The original \Ac{bm3d}/\Ac{bm4d} algorithms use the \acf{ssd} as their block-matching metric.
For white Gaussian noise, which is data-independent and stationary, \ac{ssd} is a natural choice.
As the \Ac{bmnd} algorithm further wants to deal with Poisson as an additional noise modality one must verify if \ac{ssd} remains the correct choice or if perhaps another metric should be used.
Poisson noise naturally is data dependent and could thus distort the correctness of correct matches using \ac{ssd}.
Therefore, a total of four metrics are currently usable for denoising using \Ac{bmnd}.
These are \ac{ssd}, Poisson Deviance, Pearson, and Anscombe SSD. \\

Throughout this section, let $P$ denote the number of entries per patch, and let $p = (p_1, \dots, p_P)$ and $q = (q_1, \dots, q_P)$ denote the reference and candidate patches, respectively.
For Poisson metrics, their equivalent raw counts are defined later in \Cref{eq:raw-counts}.

\nomenclature[CAQ]{\(p\)}{reference patch}
\nomenclature[CAR]{\(q\)}{candidate patch}
\nomenclature[CAS]{\(P\)}{number of entries per patch}

\begin{definition}[Gaussian SSD matching statistic]
    \label{def:gaussian-ssd-matching}
    For patches $p$ and $q$ containing $P$ entries, the Gaussian matching statistic is the sum of squared differences
    \begin{equation}
        d_{\SSD}(p, q)
        = \sum_{i=1}^{P} (q_i - p_i)^2.
        \label{eq:ssd}
    \end{equation}
    This score is measured in square stored-intensity units.
\end{definition}

The \ac{ssd} is directly tied to the Gaussian noise model. The following proposition formalizes these ties. 

\begin{proposition}[SSD as Gaussian negative log-likelihood]
    \label{prop:ssd-gaussian-nll}
    Assume that the entries of two patches $p$ and $q$ are generated by
    \begin{equation}
        p_i = \mu_i + \varepsilon_i^{(p)},
        \qquad
        q_i = \mu_i + \varepsilon_i^{(q)},
    \end{equation}
    with i.i.d. Gaussian noise $\varepsilon_i^{(p)},\, \varepsilon_i^{(q)}\sim\mathcal{N}(0, \sigma^2)$.
    Then minimizing the \ac{ssd} between $p$ and $q$ is equivalent to maximizing the likelihood that both patches share the same underlying mean $\mu$.
\end{proposition}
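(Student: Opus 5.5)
The plan is to write the joint likelihood of $(p,q)$ under the hypothesis that both patches share a common mean vector $\mu=(\mu_1,\dots,\mu_P)$. Then I will eliminate the nuisance parameter $\mu$ by maximization, giving the profile likelihood. By independence of all $2P$ noise terms, the joint density is
\begin{equation*}
    \mathcal{L}(\mu; p, q)
    = \prod_{i=1}^{P} \frac{1}{2\pi\sigma^2}
      \exp\!\left(-\frac{(p_i-\mu_i)^2 + (q_i-\mu_i)^2}{2\sigma^2}\right),
\end{equation*}
so the negative log-likelihood is $P\log(2\pi\sigma^2) + \frac{1}{2\sigma^2}\sum_i \bigl[(p_i-\mu_i)^2+(q_i-\mu_i)^2\bigr]$. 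The first term does not depend on the data or on $\mu$, since $\sigma^2$ is fixed and known.

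Next I minimize over $\mu$ coordinatewise. Each summand is a strictly convex quadratic in $\mu_i$, minimized at $\hat\mu_i=(p_i+q_i)/2$. I will use the elementary identity
\begin{equation*}
    (p_i-\mu_i)^2+(q_i-\mu_i)^2 = 2\Bigl(\mu_i-\tfrac{p_i+q_i}{2}\Bigr)^2 + \tfrac12 (q_i-p_i)^2 .
\end{equation*}
It gives both the minimizer and the minimum value $\tfrac12(q_i-p_i)^2$ directly. Substituting yields the profile negative log-likelihood
\begin{equation*}
    -\log \sup_{\mu}\mathcal{L}(\mu;p,q) = P\log(2\pi\sigma^2) + \frac{1}{4\sigma^2}\, d_{\SSD}(p,q),
\end{equation*}
with $d_{\SSD}$ as in \Cref{eq:ssd}.

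Finally, this is a strictly increasing affine function of $d_{\SSD}(p,q)$ whose constants depend only on $P$ and $\sigma^2$. These are the same for every candidate patch in the search window, so ordering candidates $q$ by increasing SSD is the same as ordering them by decreasing maximized shared-mean likelihood. In particular the argmin of SSD equals the argmax of the likelihood, which is the claimed equivalence.

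The only step needing care is interpretive rather than computational: making precise what ``the likelihood that both share the same mean'' means. I will read it as the profile (maximized) likelihood over the unknown common $\mu$, as above. I would add a remark on two further points. First, the same conclusion holds for the generalized likelihood-ratio statistic against the unrestricted model with separate means, because the unrestricted maximum is constant, so the log-ratio is again $d_{\SSD}/(4\sigma^2)$. Second, the equivalence requires equal, known, white variance $\sigma^2$; this foreshadows why the Poisson case needs different metrics.
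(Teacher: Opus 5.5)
Your proof is correct, but it eliminates the unknown mean differently from the paper. The paper never writes the joint likelihood of $(p,q)$. It passes directly to the contrasts $d_i = q_i - p_i$, observes that under the shared-mean model $d_i \sim \mathcal{N}(0, 2\sigma^2)$ independently of $\mu$, and reads off the log-likelihood $-\frac{P}{2}\log(4\pi\sigma^2) - \frac{1}{4\sigma^2}\sum_i (q_i-p_i)^2$. In other words, it uses a marginal likelihood of the differences.

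You instead keep all $2P$ observations and profile out $\mu$. Completing the square gives $\hat\mu_i = (p_i+q_i)/2$ and the same data-dependent term $\frac{1}{4\sigma^2} d_{\SSD}(p,q)$; only the irrelevant constant differs, $P\log(2\pi\sigma^2)$ in your version.

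The paper's route is shorter and needs no optimization, because the law of the differences is already free of $\mu$. Its notation, which conditions the differences on $\mu$, slightly obscures this point.

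Your route has two advantages. First, it makes explicit what ``the likelihood that both patches share the same mean'' means. Second, it is structurally the same argument as the paper's Poisson derivation in \Cref{prop:deviance-likelihood-ratio}, where the pooled estimate $(r_i+c_i)/2$ is plugged in and compared against separate rates. Your closing remark on the generalized likelihood ratio makes this parallel exact. There $-2\log$ of the ratio equals $d_{\SSD}(p,q)/(2\sigma^2)$, the Gaussian counterpart of $\sum_i \Lambda_i$, and it is exactly $\chi^2_P$ under the null. The paper's difference-based proof does not exhibit this connection directly.
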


\begin{proof}
    The log-likelihood of the observations $(d_i)_{i=1}^{P}$ is
    \begin{equation*}
        \log \P (d_1,\dots,d_P \mid \mu) 
        = \sum_{i=1}^{P} \log f(d_i\mid\mu),
    \end{equation*}
    where $f$ is the Gaussian density.
    Under the model above, the difference $d_i= q_i - p_i$ satisfies
    \begin{equation*}
        d_i\sim \mathcal{N}(0, 2\sigma^2).
    \end{equation*}
    This is given by distribution of $\varepsilon_i^{(p)}$ and $\varepsilon_i^{(q)}$.
    Up to additive constants, the log-likelihood is
    \begin{equation*}
        \begin{split}
            \log \P(d_1,\dots,d_P \mid\mu)
            & = \sum_{i=1}^{P}\log\left(\frac{1}{\sqrt{4\pi\sigma^2}}\exp\!\left( -\frac{d_i^2}{4\sigma^2} \right) \right) \\
            & = -\frac{P}{2} \log (4\pi\sigma^2) - \frac{1}{4\sigma^2}\sum_{i=1}^{P}(q_i - p_i)^2.
        \end{split}
    \end{equation*}
    For fixed $\sigma^2$, the only data-dependent term is proportional to $-\sum_i(q_i-p_i)^2$. Hence, maximizing this log-likelihood is equivalent to minimizing
    \begin{equation*}
        \sum_{i=1}^{P}(q_i-p_i)^2
        = d_{\ssd}(q,p).
    \end{equation*}
\end{proof}

For Poisson noise, this does not hold statistically. This is because the variance depends on the mean. Therefore, different metrics which derive from the Poisson likelihood have to be considered.

For direct Poisson observation with scale $s_P>0$, one defines the equivalent raw counts associated with a patch entry $p_i$ as
\begin{equation}
    r_i 
    = \left[\frac{p_i}{s_P}\right]_+,
    \qquad c_i
    = \left[\frac{q_i}{s_P}\right]_+.
    \label{eq:raw-counts}
\end{equation}
The positive part enforces non-negativity, and $r_i,c_i$ are interpreted as expected event counts in stored-intensity units.

\nomenclature[CAT]{\(r_i\)}{raw count at entry $i$ of the reference patch}
\nomenclature[CAU]{\(c_i\)}{raw count at entry $i$ of the candidate patch}

\begin{definition}[Poisson deviance matching statistic]
    \label{def:poisson-deviance-matching}
    Let $r_i$ and $c_i$ be the equivalent raw counts from \Cref{eq:raw-counts}. The symmetric Poisson deviance between patches $p$ and $q$ is defined as
    \begin{equation}
        d_{\dev}(p,q)
        = \frac{2}{P}\sum_{i=1}^{P}\left(r_i \log \frac{2 r_i}{r_i+c_i} + c_i \log \frac{2 c_i}{r_i + c_i}\right)
        \label{eq:deviance}
    \end{equation}
\end{definition}

\begin{remark}
    Here the following convention is adopted: $0\log 0 = 0$, and any term with $r_i + c_i = 0$ contributes zero to the sum.
\end{remark}

\begin{definition}[Poisson Pearson matching statistic]
    \label{def:poisson-pearson-matching}
    With $r_i$ and $c_i$ as in \Cref{eq:raw-counts}, the pooled Pearson statistic between patches $p$ and $q$ is 
    \begin{equation}
        d_{\Pearson}(p,q)
        = \frac{1}{P}\sum_{i=1}^{P} \frac{(c_i - r_i)^2}{r_i + c_i}.
        \label{eq:pearson}
    \end{equation}
    Terms with $r_i + c_i = 0$ are defined to contribute zero.
\end{definition}

\begin{definition}[Anscombe-SSD matching statistic]
    \label{def:anscombe-ssd-matching}
    Let $A(n) = 2\sqrt{n + 3/8}$ be the Anscombe transform~\cite{anscombe1948transformation}. 
    For equivalent raw counts $r_i, c_i$ from \Cref{eq:raw-counts}, define
    \begin{equation}
        d_{\mathrm{A}}(p, q)
        = \frac{1}{2 P}\sum_{i=1}^{P} \bigl(A(c_i) - A(r_i)\bigr)^2.
        \label{eq:anscombe-distance}
    \end{equation}
    The factor $1/2$ accounts for the approximately unit variance of each transformed observation and hence the approximately twofold variance of their difference.
\end{definition}

\nomenclature[CAV]{\(A(n)\)}{variance-stabilizing Anscombe transform of count $n$}

Both $d_{\dev}$ and $d_{\Pearson}$ arise from the Poisson likelihood model. The deviance corresponds to a likelihood-ratio statistic comparing separate rates to a pooled equal-rate model~\cite{mccullagh1989generalized}, whereas the Pearson statistic is the classical $\chi^2$-type goodness-of-fit measure~\cite{pearson1900criterion}.
The next Proposition shows that the deviance is exactly the likelihood-ratio statistic for comparing two rates.

\begin{proposition}[Poisson deviance as symmetric likelihood-ratio distance]
    \label{prop:deviance-likelihood-ratio}
    Let $r_i$ and $c_i$ be as in \Cref{eq:raw-counts}.
    Define the entrywise likelihood-ratio statistic
    \begin{equation}
        \Lambda_i
        = -2\log\frac{\mathcal{L}_0(\widehat{\lambda}_{0,i})}{\mathcal{L}_1(\widehat{\lambda}_{r,i},\widehat{\lambda}_{c,i})},
        \label{eq:lambda-i-def}
    \end{equation}
    where $\mathcal{L}_0$ and $\mathcal{L}_1$ are the likelihoods under the null hypothesis of a common rate and under the alternative of separate rates, respectively. Then 
    \begin{equation}
        \Lambda_i 
        = 2\left(
            r_i\log\frac{2r_i}{r_i+c_i}
            + c_i\log\frac{2c_i}{r_i+c_i} \right).
        \label{eq:lambda-i-explicit}
    \end{equation}
    The symmetric deviance $d_{\dev}(p,q)$ in \Cref{def:poisson-deviance-matching} is the average of these entrywise statistics:
    \begin{equation}
        d_{\dev}(p,q) = \frac{1}{P}\sum_{i=1}^{P}\Lambda_i
    \end{equation}
\end{proposition}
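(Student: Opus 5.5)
We treat each entry $i$ as a pair of independent Poisson observations, $r_i \sim \Poi(\lambda_{r,i})$ and $c_i \sim \Poi(\lambda_{c,i})$. In the raw counts of \Cref{eq:raw-counts} the scale $s_P$ has already been removed, so the Poisson likelihood is written directly in these units.

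Under the alternative, the two rates are free. The log-likelihood is
\begin{equation*}
    \ell_1(\lambda_r,\lambda_c) = r_i\log\lambda_r - \lambda_r + c_i\log\lambda_c - \lambda_c - \log(r_i!\,c_i!),
\end{equation*}
which is maximized by $\widehat\lambda_{r,i}=r_i$ and $\widehat\lambda_{c,i}=c_i$. Under the null there is a single common rate $\lambda$, with log-likelihood $(r_i+c_i)\log\lambda - 2\lambda$ minus the same factorial constant. Setting the derivative to zero gives $\widehat\lambda_{0,i}=(r_i+c_i)/2$.

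We then form $\Lambda_i = 2\bigl(\ell_1(\widehat\lambda_{r,i},\widehat\lambda_{c,i}) - \ell_0(\widehat\lambda_{0,i})\bigr)$. Three cancellations occur:
\begin{itemize}
    \item the factorial constants are identical in both likelihoods and drop out;
    \item the linear terms cancel, since $-r_i-c_i+2\cdot\tfrac{r_i+c_i}{2}=0$;
    \item the remaining terms are $r_i\log r_i + c_i\log c_i - (r_i+c_i)\log\frac{r_i+c_i}{2}$, which we regroup by distributing $(r_i+c_i)$ over the two summands.
\end{itemize}
The result is $r_i\log\frac{2r_i}{r_i+c_i} + c_i\log\frac{2c_i}{r_i+c_i}$, and multiplying by $2$ gives \Cref{eq:lambda-i-explicit}. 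Averaging over $i$ and comparing with \Cref{eq:deviance} then yields $d_{\dev}=\frac1P\sum_i\Lambda_i$ immediately.

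The main obstacle is rigor rather than algebra. The raw counts may be non-integer because of scaling and the positive part, so the factorial has to be read as a Gamma function, or simply treated as a constant common to $\mathcal L_0$ and $\mathcal L_1$. That is harmless, because only $r_i,c_i\ge0$ and the cancellation are used.

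The boundary cases also need care and should follow the paper's convention $0\log0=0$:
\begin{itemize}
    \item If $r_i=0$, the unconstrained MLE is $\widehat\lambda_{r,i}=0$ on the boundary, and $\ell_1$ is evaluated by continuity, consistent with $0\log0=0$.
    \item If $r_i+c_i=0$, then both likelihoods equal $1$, so $\Lambda_i=0$, matching the zero-contribution convention.
\end{itemize}
We verify both cases briefly and conclude.
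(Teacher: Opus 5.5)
Your proposal is correct and follows essentially the same route as the paper: you compute the maximum-likelihood estimates under the common-rate null and the separate-rate alternative, substitute them into the likelihood ratio, cancel the factorial and linear terms, and regroup the logarithms around the pooled mean $(r_i+c_i)/2$. Your remarks on non-integer raw counts (reading the factorials as Gamma functions, or as constants common to both likelihoods) and on the boundary cases $r_i=0$ and $r_i+c_i=0$ add care that the paper's proof omits, and they are consistent with its $0\log 0 = 0$ and zero-total conventions.
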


\nomenclature[CAW]{\(\mathcal{L}_0\)}{likelihood under the null hypothesis}
\nomenclature[CAX]{\(\mathcal{L}_1\)}{likelihood under the alternative hypothesis}
\nomenclature[CAY]{\(\Lambda_i\)}{likelihood-ratio statistic at entry $i$}

\begin{proof}
    Under the null hypothesis of a common rate $\lambda_i$, the joint likelihood is
    \begin{equation*}
        \mathcal{L}_0(\lambda_i)
        = \frac{e^{-\lambda_i}\lambda_i^{r_i}}{r_i!} \cdot \frac{e^{-\lambda_i}\lambda_i^{c_i}}{c_i!}
        = \frac{e^{-2\lambda_i}\lambda_i^{r_i+c_i}}{r_i!\,c_i!}.
    \end{equation*}
    The maximum-likelihood estimate under $H_0$ is
    \begin{equation*}
        \widehat{\lambda}_{0,i}
        =
        \frac{r_i+c_i}{2}.
    \end{equation*}
    Under the alternative of separate rates $\lambda_i^{(r)}$ and
    $\lambda_i^{(c)}$, the likelihood factorizes and the MLEs are
    \begin{equation*}
        \widehat{\lambda}_{r,i}=r_i,
        \qquad
        \widehat{\lambda}_{c,i}=c_i.
    \end{equation*}
    The likelihood-ratio statistic for entry $i$ is
    \begin{equation*}
        \Lambda_i = -2\log\frac{\mathcal{L}_0(\widehat{\lambda}_{0,i})}{\mathcal{L}_1(\widehat{\lambda}_{r,i},\widehat{\lambda}_{c,i})}.
    \end{equation*}
    Substituting the likelihoods gives
    \begin{equation*}
        \begin{split}
            \Lambda_i
            & = -2\Bigl[-2\widehat{\lambda}_{0,i} + (r_i+c_i)\log\widehat{\lambda}_{0,i} - \log(r_i!\,c_i!)
            \Bigr] \\
            &\quad
            +2\Bigl[
                -(\widehat{\lambda}_{r,i}+\widehat{\lambda}_{c,i})
                +r_i\log\widehat{\lambda}_{r,i}
                +c_i\log\widehat{\lambda}_{c,i}
                -\log(r_i!\,c_i!)
            \Bigr] \\
            & =
            2\Bigl[
                r_i\log\frac{r_i}{\widehat{\lambda}_{0,i}}
                +
                c_i\log\frac{c_i}{\widehat{\lambda}_{0,i}}
            \Bigr].
        \end{split}
    \end{equation*}
    Using $\widehat{\lambda}_{0,i}=(r_i+c_i)/2$, the following is obtained
    \begin{equation*}
        \Lambda_i
        = 2\left( r_i\log\frac{2r_i}{r_i+c_i} + c_i\log\frac{2c_i}{r_i+c_i} \right).
    \end{equation*}
    The quantity defined in \Cref{eq:deviance} is exactly the average of these entrywise likelihood-ratio statistics over all $P$ entries:
    \begin{equation*}
        d_{\mathrm{dev}}(p,q)
        = \frac{1}{P}\sum_{i=1}^{P}\Lambda_i.
    \end{equation*}    
\end{proof}

The Pearson statistic is the classical $\chi^2$ approximation to the same likelihood-ratio test~\cite{pearson1900criterion}.

\begin{remark}[Pearson statistic as classical chi-square test for two Poisson counts]
    \label{rem:pearson-chi-square}
    The Pearson matching statistic $d_{\Pearson}(p,q)$ in \Cref{def:poisson-pearson-matching} is the average of the entrywise classical Pearson $\chi^2$-statistics for testing equality of two Poisson rates.
    This follows directly by substituting the maximum-likelihood estimate $\widehat{\lambda}_{0,i}= (r_i+c_i)/2$ under the null hypothesis into the classical Pearson $\chi^2$ formula.
    This identity explains why the convergence to $\chi^2_1$ in \Cref{thm:high-count-limit} is exact for the Pearson metric in the high-count limit, rather than merely an approximation. 
\end{remark}

After the introduction of the four metrics an examination of their statistical properties is needed. The optimality of the \ac{ssd} for Gaussian noise, as claimed above, will be discussed first.

\begin{proposition}[Optimality of SSD under Gaussian noise]
    \label{prop:optimality-ssd-gaussian}
    Let $p$ and $q$ be two patches generated under the i.i.d. Gaussian model 
    \begin{equation}
        p_i = \mu_i + \varepsilon_i^{(p)}
        \qquad
        q_i = \mu_i + \varepsilon_i^{(q)},
    \end{equation}
    with $\varepsilon_i^{(p)},\,\varepsilon_i^{(q)}\stackrel{\text{i.i.d.}}{\sim}\mathcal{N}(0,\sigma^2)$. 
    Then, among all metrics that depend only on the difference $q-p$, the \ac{ssd} is, up to a positive scaling factor, the unique metric that satisfies $d(p,p)=0$ and that is proportional to the negative log-likelihood of the hypothesis that $p$ and $q$ share the same underlying mean $\mu$.
\end{proposition}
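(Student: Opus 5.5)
The plan is to reduce the statement to \Cref{prop:ssd-gaussian-nll} and then pin down uniqueness by writing the Gaussian negative log-likelihood explicitly as a function of the difference vector. First, set $\mathbf d = q - p$ with entries $d_i = q_i - p_i$. Under the stated model, $d_i = \varepsilon_i^{(q)} - \varepsilon_i^{(p)}$, and these are independent $\mathcal{N}(0, 2\sigma^2)$ variables under the hypothesis of a common mean $\mu$. This step is exactly as in the proof of \Cref{prop:ssd-gaussian-nll}. The point to stress is that $\mu$ cancels, so the likelihood of the shared-mean hypothesis, viewed as a function of $\mathbf d$, is
\begin{equation*}
    -\log \P(\mathbf d \mid \text{shared } \mu) = \frac{P}{2}\log(4\pi\sigma^2) + \frac{1}{4\sigma^2}\sum_{i=1}^P d_i^2 .
\end{equation*}
In particular, it is well defined on the class of metrics that depend only on $q-p$.

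Second, I formalize ``proportional to the negative log-likelihood.'' I will read it as requiring $d(p,q) = \alpha\,\bigl(-\log \P(\mathbf d\mid \mu)\bigr) + \beta$ for some $\alpha>0$ and some constant $\beta$ independent of the data. The constant is needed because the negative log-likelihood carries the normalizing term $\frac{P}{2}\log(4\pi\sigma^2)$. Imposing $d(p,p)=0$, that is $d(\mathbf 0)=0$, forces $\beta = -\alpha\frac{P}{2}\log(4\pi\sigma^2)$. Hence $d(p,q) = \frac{\alpha}{4\sigma^2}\sum_i (q_i-p_i)^2 = c\, d_{\SSD}(p,q)$ with $c>0$. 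This shows that every admissible metric is a positive multiple of the \ac{ssd}. Conversely, $c\,d_{\SSD}$ depends only on $q-p$, vanishes at $q=p$, and is an affine image of the negative log-likelihood with positive slope. This gives existence and uniqueness up to scaling.

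The main obstacle is the interpretive step: making ``proportional'' precise enough that uniqueness is actually a statement. Literal proportionality without an offset fails, because the negative log-likelihood is nonzero at $\mathbf d = 0$. For this reason I will state explicitly that the additive normalizing constant is discarded, as in \Cref{prop:ssd-gaussian-nll}, where likelihoods are compared up to additive constants. The condition $d(p,p)=0$ then serves exactly to remove that constant. Once this convention is fixed, the remaining algebra is immediate, since two affine functions of $\sum_i d_i^2$ that agree at $0$ and have positive slope differ only by the positive factor $c$.
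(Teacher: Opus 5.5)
Your proposal is correct and follows the paper's own route. Both arguments write the negative log-likelihood of the difference vector, with $q_i - p_i \sim \mathcal{N}(0, 2\sigma^2)$ so that $\mu$ drops out, read ``proportional'' as $c_1 \sum_i (q_i - p_i)^2 + c_0$ with $c_1 > 0$, and use $d(p,p) = 0$ to force $c_0 = 0$; your explicit remark that the additive normalizing constant must be allowed, and your converse check, only spell out what the paper leaves implicit.
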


\begin{proof}
    Define $d_i$ as in \Cref{prop:ssd-gaussian-nll}. Under the model, $d_i \sim \mathcal{N}(0, 2\sigma^2)$ independently.
    The joint density of $\widetilde{d}=(d_1,\dots,d_P)$ is
    \begin{equation*}
        f(\widetilde{d}\mid \mu)
        = \prod_{i=1}^{P} \frac{1}{\sqrt{4\pi\sigma^2}} \exp\!\left(-\frac{d_i^2}{4\sigma^2}\right).
    \end{equation*}
    The log-likelihood is thus
    \begin{equation*}
        \log \Pr(d_1,\dots,d_P \vert \mu)
        = -\frac{P}{2}\log(4\pi\sigma^2) -\frac{1}{4\sigma^2} \sum_{i=1}^{P}d_i^2.
    \end{equation*}
    Any metric that depends only on $q-p$ and is proportional to the negative log-likelihood must be of the form
    \begin{equation*}
        d(p,q) = c_1 \sum_{i=1}^{P}(q_i-p_i)^2 + c_0
    \end{equation*}
    with some constants $c_1>0$ and $c_0\in\R$.
    The requirement of $d(p,p) = 0$ requires $c_0 = 0$, such that  
    \begin{equation*}
        d(p,q) = c_1 \sum_{i=1}^{P}(q_i-p_i)^2.
    \end{equation*}
    This is precisely the \ac{ssd} up to scaling.
\end{proof}

Consequently, for stationary Gaussian noise with constant variance, the \ac{ssd} is the statistically canonical matching metric. 
Therefore the $d_{\ssd}$ is used as the default metric for Gaussian observations in \Ac{bmnd}.

\begin{corollary}[Deviance as exact likelihood-based metric for Poisson]
    \label{cor:deviance-exact-poisson}
    Let $p$ and $q$ be two patches with equivalent raw counts $r_i, c_i$ defined in \Cref{eq:raw-counts}, 
    and assume that $r_i, c_i$ are independent Poisson realizations with rates $\lambda_i^{(r)}$ and $\lambda_i^{(c)}$. Then the symmetric deviance $d_{\dev}(p,q)$ in \Cref{def:poisson-deviance-matching} is, up to the factor $1/P$, the likelihood-ratio statistic for testing equality of rates $\lambda_i^{(r)} = \lambda_i^{(c)}$ against separate rates.
\end{corollary}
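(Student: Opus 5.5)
The corollary follows almost directly from \Cref{prop:deviance-likelihood-ratio}. The plan is to make precise which likelihood-ratio test is meant, check that it decouples across entries, and then identify the resulting statistic with $P\,d_{\dev}(p,q)$.

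\textbf{Setting up the test.} Under the stated assumptions the $2P$ counts $r_1,\dots,r_P,c_1,\dots,c_P$ are mutually independent Poisson variables. The joint likelihood is therefore the product of the entrywise Poisson likelihoods. The null hypothesis is $H_0\colon \lambda_i^{(r)}=\lambda_i^{(c)}$ for all $i$, each entry with its own free common rate. The alternative allows all $2P$ rates to be separate. Both parameter spaces are products over entries, so each maximization splits into independent per-entry maximizations. The joint log-likelihood ratio is then the sum of the entrywise log-likelihood ratios, giving
\begin{equation*}
    \Lambda = -2\log\frac{\sup_{H_0}\mathcal{L}}{\sup_{H_1}\mathcal{L}} = \sum_{i=1}^{P}\Lambda_i ,
\end{equation*}
where $\Lambda_i$ is defined as in \Cref{eq:lambda-i-def}.

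\textbf{Identifying the statistic.} \Cref{prop:deviance-likelihood-ratio} already gives the explicit form \Cref{eq:lambda-i-explicit} of each $\Lambda_i$, together with $d_{\dev}(p,q)=\frac1P\sum_i\Lambda_i$. Combining these yields $\Lambda = P\,d_{\dev}(p,q)$, which is exactly the claim that $d_{\dev}$ equals the likelihood-ratio statistic up to the factor $1/P$.

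\textbf{Boundary cases.} The only step needing care is when some $r_i$ or $c_i$ is zero. In that case the MLE $\widehat\lambda_{r,i}=0$ lies on the boundary, and a term of the form $r_i\log r_i$ appears. I would check directly that the Poisson likelihood at rate $0$ with observation $0$ equals $1$, so the limits agree with the conventions $0\log 0=0$ and "a term with $r_i+c_i=0$ contributes zero." This shows the per-entry identity still holds there.

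There is also a subtlety about integrality. The likelihood requires integer counts, whereas \Cref{eq:raw-counts} only yields nonnegative reals. Since the corollary assumes $r_i,c_i$ are Poisson realizations, they are integers, so this is not an obstruction. I would state this explicitly.
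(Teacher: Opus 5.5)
Your proposal is correct and takes the same approach as the paper, which simply applies \Cref{prop:deviance-likelihood-ratio} entrywise and notes that $d_{\dev}$ is the average of the $\Lambda_i$. You also spell out details the paper leaves implicit: that independence makes the joint supremum factor across entries, so the joint statistic is $\sum_i \Lambda_i = P\,d_{\dev}(p,q)$, and the checks for zero counts and integer-valued counts.
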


\begin{proof}
    This follows directly from \Cref{prop:deviance-likelihood-ratio}: for each entry $i$, $\Lambda_i$ is the likelihood-ratio statistic for testing $\lambda_i^{(r)}=\lambda_i^{(c)}$. The total deviance is the average over all entries, hence $d_{\dev}$ is the normalized likelihood-ratio statistic.
\end{proof}

The above may be concluded as follows:
For Poisson data $d_{\dev}$ is the exact metric, but it comes with a more expensive computational cost than $d_{\Pearson}$.
The Pearson statistic is an approximation that becomes exact as the counts grow.
Furthermore $d_{\mathrm{A}}$ offers a Gaussian approximation that is convenient but not exact.
The following remark and proposition quantify these relationships.

\begin{remark}[Pearson as asymptotic approximation of Poisson deviance]
    \label{rem:pearson-approx-deviance}
    Under the setting of \Cref{cor:deviance-exact-poisson}, if all counts are sufficiently large, the Pearson contribution $X^2_i = (r_i - c_i)^2/(r_i+c_i)$ converges in probability to the deviance contribution $\Lambda_i$.
    
    This follows from a second-order Taylor expansion of the log terms in $\Lambda_i$ around the pooled mean.
    Consequently, $d_{\Pearson}$ is a computationally cheaper approximation of the exact likelihood-based deviance in the high-count regime.
\end{remark}

\begin{proposition}[Anscombe-SSD as variance-stabilized Gaussian approximation]
    \label{prop:anscombe-ssd-approx}
    Let $K\sim\Poi(\lambda)$ with $\lambda$ sufficiently large, and let $A(n)=2\sqrt{n+3/8}$ be the Anscombe transform~\cite{anscombe1948transformation}.
    Then 
    \begin{equation}
        \Var(A(K)) \to 1 \quad \text{as } \lambda \to \infty,
    \end{equation}
    and the distribution of $A(K)$ is approximately Gaussian.
    For two independent Poisson counts $r_i, c_i$ with large rates, the squared difference $(A(c_i)-A(r_i))^2/2$ is an approximate likelihood-based matching statistic under a Gaussian model with unit variance.
\end{proposition}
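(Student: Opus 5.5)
The argument is a delta-method computation for the variance of $A(K)$, the classical Poisson central limit theorem for approximate normality, and a repetition of the Gaussian likelihood computation of \Cref{prop:ssd-gaussian-nll} for the matching statement.

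\textbf{Variance.} Write $K=\lambda+\sqrt{\lambda}\,Z_\lambda$, where $Z_\lambda=(K-\lambda)/\sqrt{\lambda}$ has mean $0$ and variance $1$. Expand
\begin{equation*}
A(K)=2\sqrt{\lambda+3/8}\,\sqrt{1+\frac{K-\lambda}{\lambda+3/8}}
\end{equation*}
by Taylor's theorem in $\delta=(K-\lambda)/(\lambda+3/8)$ up to second order with a remainder. This gives
\begin{equation*}
A(K)=2\sqrt{\lambda+3/8}+\frac{K-\lambda}{\sqrt{\lambda+3/8}}-\frac{(K-\lambda)^2}{4(\lambda+3/8)^{3/2}}+R_\lambda .
\end{equation*}
The linear term has variance $\lambda/(\lambda+3/8)\to 1$. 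For the quadratic term, the Poisson central moments $\E[(K-\lambda)^3]=\lambda$ and $\E[(K-\lambda)^4]=3\lambda^2+\lambda$ show that it contributes $O(\lambda^{-1})$ to the variance and $O(\lambda^{-1})$ to the cross-covariance with the linear term. Hence $\Var(A(K))=1+O(\lambda^{-1})$. (Anscombe's sharper statement $1+O(\lambda^{-2})$, with the choice $3/8$ killing the first correction, is not needed.)

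\textbf{Main obstacle: the remainder.} Controlling $R_\lambda$ rigorously is the step I expect to be hardest, because $\sqrt{1+\delta}$ is not analytic at $\delta=-1$. I would split on the event $E=\{|K-\lambda|\le\lambda/2\}$.
\begin{itemize}
\item On $E$, the Lagrange remainder is bounded by $C|K-\lambda|^3/\lambda^{5/2}$. Its second moment is $O(\lambda^{-2})$ by the Poisson sixth central moment, which is $O(\lambda^3)$.
\item On $E^c$, $A(K)$ is at most polynomial in $K$, while a Chernoff bound gives $\P(E^c)\le e^{-c\lambda}$. So this region contributes only exponentially small terms to the first two moments.
\end{itemize}
Combining the two regions with Cauchy--Schwarz for the cross terms gives $\Var(A(K))\to 1$.

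\textbf{Approximate normality.} Since $Z_\lambda\Rightarrow\mathcal N(0,1)$ by the Poisson CLT (infinite divisibility as a sum of $\lfloor\lambda\rfloor$ independent Poisson variables plus a small remainder), the delta method yields
\begin{equation*}
A(K)-2\sqrt{\lambda+3/8}\Rightarrow\mathcal N(0,1).
\end{equation*}
The informal phrase "approximately Gaussian" in the statement is to be read as this convergence in distribution.

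\textbf{Matching statistic.} Under the equal-rate hypothesis $\lambda_i^{(r)}=\lambda_i^{(c)}=\lambda_i$, independence and the two previous steps show that $A(c_i)-A(r_i)$ converges to $\mathcal N(0,2)$. Its mean is $0$ exactly by symmetry. Treating the transformed entries as Gaussian with unit variance and a common mean places us exactly in the setting of \Cref{prop:ssd-gaussian-nll} and \Cref{prop:optimality-ssd-gaussian} with $\sigma^2=1$. The negative log-likelihood of the differences is therefore, up to additive constants, $\frac14\sum_i(A(c_i)-A(r_i))^2$. This is proportional to $\tfrac12(A(c_i)-A(r_i))^2$ per entry, and after normalisation by $P$ it is $d_{\mathrm A}$. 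The statistic is thus an approximate likelihood-based matching statistic, with the approximation being exactly the Gaussian limit established above.
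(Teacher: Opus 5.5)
Your proposal is correct and follows the same route as the paper. Both arguments linearize $A$ around $\lambda$, use the Poisson central limit theorem for approximate normality, and read $\tfrac12(A(c_i)-A(r_i))^2$ through the unit-variance Gaussian likelihood of \Cref{prop:ssd-gaussian-nll}.

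The only difference is rigor. The paper stops at the first-order heuristic $\Var(A(K))\approx A'(\lambda)^2\Var(K)=\lambda/(\lambda+3/8)$, which by itself gives no control of moments. Its displayed $\lambda^{-1}$ correction is also not the true one, since the quadratic and cubic Taylor terms contribute at the same order. Your truncation at $|K-\lambda|\le\lambda/2$, combined with the sixth-moment and Chernoff bounds, actually proves $\Var(A(K))=1+O(\lambda^{-1})$.
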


\begin{proof}
    The derivative of $A$ is
    \begin{equation*}
        A'(n)
        = \frac{\partial}{\partial n}\,2\sqrt{n+3/8}
        = \frac{1}{\sqrt{n+3/8}}.
    \end{equation*}
    By the delta method (cf.~\cite[Chapter~3]{vaart1998asymptotic}), for $K\sim\Poi(\lambda)$,
    \begin{equation*}
        \Var(A(K))
        \approx (A'(\lambda))^2 \Var(K)
        = \frac{\lambda}{\lambda+3/8}
        = 1 - \frac{3}{8\lambda+3}.
    \end{equation*}
    The last expression tends to $1$ as $\lambda\to\infty$, so the variance of $A(K)$ converges to $1$.
    By the central limit theorem, $K$ is approximately Gaussian for large $\lambda$, and the smooth transform $A$ preserves approximate Gaussianity. 
    For two independent such variables $A(r_i), A(c_i)$, the difference has approximate variance $2$, so that 
    \begin{equation*}
        \frac{(A(c_i)-A(r_i))^2}{2}
    \end{equation*}
    is an approximate squared standardized Gaussian difference, i.e., an approximate likelihood-based statistic under a unit-variance Gaussian model.
\end{proof}

Let $m \in \{\dev, \Pearson, \mathrm{A}\}$ label the matching metric, and let $\varphi_m(r_i, c_i)$ denote its per-entry contribution:
\begin{equation}
    \begin{aligned}
        \varphi_{\dev}(r_i, c_i)
        &= \Lambda_i
        && \text{for the Poisson deviance,}\\
        \varphi_{\Pearson}(r_i, c_i)
        &= \frac{(r_i - c_i)^2}{r_i + c_i}
        && \text{for the Pearson statistic,}\\
        \varphi_{\mathrm{A}}(r_i, c_i)
        &= \frac{1}{2} \bigl(A(c_i) - A(r_i)\bigr)^2
        && \text{for the Anscombe-SSD.}
    \end{aligned}
    \label{eq:varphi}
\end{equation}
The corresponding patch-level distance is
\begin{equation}
    d_m(p, q)
    = \frac{1}{P} \sum_{i = 1}^{P} \varphi_m(r_i, c_i).
    \label{eq:metric-from-contributions}
\end{equation}

\nomenclature[CAZ]{\(\varphi_m\)}{per-entry contribution of matching metric $m$}

All three Poisson-based metrics $d_{\dev},\, d_{\Pearson}\text{ and }d_{\mathrm{A}}$ are asymptotically equivalent in the high-count regime.
This can be specified, as done in the below theorem, where their common $\chi^2$ limit, under the null hypothesis of equal rates, is shown.

\begin{theorem}[Common high-count limit]
    \label{thm:high-count-limit}
    For each entry $i$, write
    \begin{equation}
        m_i
        = \frac{r_i + c_i}{2},
        \qquad \delta_i = r_i - c_i.
    \end{equation}
    If $m_i \to \infty$ while $|\delta_i| / m_i \to 0$, the deviance, Pearson, and normalized Anscombe contributions
    each satisfy
    \begin{equation}
        \varphi_m(r_i, c_i)
        = \frac{\delta_i^2}{2 m_i}
          + o \left(\frac{\delta_i^2}{m_i}\right).
        \label{eq:common-high-count}
    \end{equation}
    For Pearson's statistic the leading term is exact:
    \begin{equation}
        \varphi_{\Pearson}(r_i,c_i)
        = \frac{(r_i-c_i)^2}{r_i+c_i}
        = \frac{\delta_i^2}{2m_i}.
    \end{equation}
    If, in addition, $r_i$ and $c_i$ are independent $\Poi(\mu_i)$ variables under the equal-rate null and $\mu_i \to \infty$, then
    \begin{equation}
        \varphi_m(r_i, c_i) \xrightarrow{d} \chi^2_1.
        \label{eq:common-chi-square-limit}
    \end{equation}
\end{theorem}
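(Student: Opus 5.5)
The proof has two parts: a deterministic expansion, and then a distributional limit that follows from it by Slutsky's lemma.

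\textbf{Deterministic expansion.} Write $r_i = m_i(1+t)$ and $c_i = m_i(1-t)$, where $t = \delta_i/(2m_i) \to 0$.

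\emph{Pearson.} Since $r_i + c_i = 2m_i$, we get $\varphi_{\Pearson} = \delta_i^2/(2m_i)$ immediately, with no remainder.

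\emph{Deviance.} Since $2r_i/(r_i+c_i) = 1+t$ and $2c_i/(r_i+c_i) = 1-t$, \Cref{eq:lambda-i-explicit} becomes
\begin{equation*}
    \Lambda_i = 2m_i\bigl[(1+t)\log(1+t) + (1-t)\log(1-t)\bigr].
\end{equation*}
The function $h(t) = (1+t)\log(1+t) + (1-t)\log(1-t)$ is even. It satisfies $h(0) = 0$, $h''(0) = 2$, and has Taylor series $h(t) = t^2 + t^4/6 + O(t^6)$. Hence
\begin{equation*}
    \Lambda_i = 2m_i t^2\bigl(1 + O(t^2)\bigr) = \frac{\delta_i^2}{2m_i}\bigl(1 + O(t^2)\bigr).
\end{equation*}
Because $t \to 0$, the remainder is $o(\delta_i^2/m_i)$.

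\emph{Anscombe.} Write $A(r_i) - A(c_i) = 2(r_i - c_i)/\bigl(\sqrt{r_i + 3/8} + \sqrt{c_i + 3/8}\bigr)$. This gives
\begin{equation*}
    \varphi_{\mathrm A} = \frac{2\delta_i^2}{\bigl(\sqrt{r_i+3/8} + \sqrt{c_i+3/8}\bigr)^2}.
\end{equation*}
The denominator equals $4m_i\bigl(1 + O(t^2) + O(1/m_i)\bigr)$, since $\sqrt{1\pm t} = 1 \pm t/2 + O(t^2)$ and the odd terms cancel. Therefore $\varphi_{\mathrm A} = (\delta_i^2/(2m_i))(1 + o(1))$.

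This route avoids any Taylor expansion of the singular logarithms near zero. The only care needed is that all three remainders are relative, of the form $(1+o(1))$, uniformly as $t \to 0$ and $m_i \to \infty$.

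\textbf{Distributional limit under the null.} Let $r_i, c_i$ be independent $\Poi(\mu_i)$. Then $\delta_i$ has mean $0$ and variance $2\mu_i$. By the CLT for Poisson variables (each is a sum of $\mu_i$ unit Poissons, or use the Lindeberg condition), $\delta_i/\sqrt{2\mu_i} \xrightarrow{d} \mathcal N(0,1)$. The law of large numbers gives $m_i/\mu_i \to 1$ in probability.

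Consequently $|\delta_i|/m_i = O_p(\mu_i^{-1/2}) \to 0$ and $m_i \to \infty$ in probability. The hypotheses of the deterministic part therefore hold with probability tending to one.

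It follows that $\varphi_m = \bigl(\delta_i^2/(2\mu_i)\bigr)\cdot(\mu_i/m_i)\cdot(1 + o_p(1))$. The first factor converges in distribution to $\chi^2_1$ by the continuous mapping theorem, and the other two factors converge to $1$ in probability. Slutsky's lemma then gives \Cref{eq:common-chi-square-limit}.

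\textbf{Main obstacle.} The subtle step is passing from the deterministic $o(\cdot)$ statement to a stochastic $o_p$ one. The plan is to note that the expansions give $\varphi_m/\varphi_{\Pearson} = 1 + g(t_i, m_i)$, where $g$ is a deterministic function tending to $0$ as $t \to 0$ and $m \to \infty$. Since $t_i$ and $1/m_i$ tend to $0$ in probability, $g(t_i, m_i) \to 0$ in probability.

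Events with $r_i + c_i = 0$, or with a count equal to zero, need separate treatment, since they are where $\log$ is singular and the conventions of the Remark on $0\log 0$ apply. These events have probability tending to zero, so they do not affect the limit.
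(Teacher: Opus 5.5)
Your proof is correct and follows the paper's argument. It uses the same parametrization $r_i = m_i(1+t)$, $c_i = m_i(1-t)$, the same even expansion $t^2 + t^4/6 + O(t^6)$ for the deviance, and CLT plus Slutsky plus continuous mapping for the $\chi^2_1$ limit. The only differences are minor: you rationalize the Anscombe difference where the paper uses the mean value theorem, and you spell out, via the relative error $g(t_i,m_i)\to 0$ in probability, why the deterministic remainder vanishes in probability, which the paper merely asserts.
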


\begin{proof}
    Fix an entry and suppress its index.
    Write
    \begin{equation*}
        r = m + \frac{\delta}{2} = m (1 + t),
        \qquad
        c = m - \frac{\delta}{2} = m (1 - t),
        \qquad
        t = \frac{\delta}{2 m}.
    \end{equation*}
    The assumptions imply $t \to 0$.
    
    For the symmetric deviance, the single-entry contribution is obtained from \Cref{eq:deviance} by omitting the patch-average factor $1 / P$:
    \begin{equation*}
        \varphi_{\dev}(r, c)
        = 2 \left(
            r \log \frac{2 r}{r + c}
            + c \log \frac{2 c}{r + c}
        \right).
    \end{equation*}
    Since $r + c = 2 m$, the two ratios in the logarithms satisfy $2 r / (r + c) = r / m$ and $2 c / (r + c) = c / m$.
    The definition $t = \delta / (2 m)$ and the parameterization of $r$ and $c$ give
    \begin{equation*}
        r = m (1 + t),
        \qquad
        c = m (1 - t),
        \qquad
        \frac{r}{m} = 1 + t,
        \qquad
        \frac{c}{m} = 1 - t.
    \end{equation*}
    Substituting these four identities gives
    \begin{align*}
        \varphi_{\dev}(r, c)
        &= 2 \left(
            r \log \frac{r}{m}
            + c \log \frac{c}{m}
        \right) \\
        &= 2 m \left(
            (1 + t) \log(1 + t)
            + (1 - t) \log(1 - t)
        \right).
    \end{align*}
    Expanding the two terms separately around $t = 0$ gives
    \begin{align*}
        (1 + t) \log(1 + t)
        &= t + \frac{t^2}{2} - \frac{t^3}{6}
           + \frac{t^4}{12} - \frac{t^5}{20} + O(t^6), \\
        (1 - t) \log(1 - t)
        &= -t + \frac{t^2}{2} + \frac{t^3}{6}
           + \frac{t^4}{12} + \frac{t^5}{20} + O(t^6).
    \end{align*}
    Adding these expansions cancels every displayed odd-power term and yields
    \begin{equation*}
        (1 + t) \log(1 + t)
        + (1 - t) \log(1 - t)
        = t^2 + \frac{t^4}{6} + O(t^6).
    \end{equation*}
    Substituting $t = \delta / (2 m)$ therefore gives
    \begin{align*}
        \varphi_{\dev}(r, c)
        &= \frac{\delta^2}{2 m}
           + \frac{\delta^4}{48 m^3}
           + O\left(\frac{\delta^6}{m^5}\right) \\
        &= \frac{\delta^2}{2 m}
           + o\left(\frac{\delta^2}{m}\right),
    \end{align*}
    because $|\delta| / m \to 0$.
    
    For the pooled Pearson statistic, no expansion is required:
    \begin{equation*}
        \varphi_{\Pearson}(r, c)
        = \frac{(c - r)^2}{r + c}
        = \frac{\delta^2}{2 m}.
    \end{equation*}
    
    For the normalized Anscombe statistic, the mean-value theorem gives a point $\xi$ between $r$ and $c$ such that
    \begin{equation*}
        A(c) - A(r)
        = A'(\xi)(c - r)
        = -\frac{\delta}{\sqrt{\xi + 3 / 8}}.
    \end{equation*}
    Since $|\xi - m| \le |\delta| / 2$,
    \begin{equation*}
        \frac{\xi + 3 / 8}{m}
        = 1 + O\left(\frac{|\delta|}{m}\right) + O\left(\frac{1}{m}\right)
        \longrightarrow 1.
    \end{equation*}
    Consequently,
    \begin{align*}
        \varphi_{\mathrm{A}}(r, c)
        &= \frac{1}{2} (A(c) - A(r))^2 \\
        &= \frac{\delta^2}{2 (\xi + 3 / 8)} \\
        &= \frac{\delta^2}{2 m}
           + o\left(\frac{\delta^2}{m}\right).
    \end{align*}
    Thus, all three single-entry contributions have the stated common high-count limit.
    
    Under the equal-rate Poisson null, the Poisson central limit theorem and independence give
    \begin{equation*}
        \frac{r-c}{\sqrt{2 \mu}} \xrightarrow{d} \Norm(0, 1),
    \end{equation*}
    while
    \begin{equation*}
        \frac{r+c}{2 \mu} \longrightarrow 1
    \end{equation*}
    in probability.
    Because the square-root function is continuous at $1$, this also implies
    \begin{equation*}
        \sqrt{\frac{r+c}{2 \mu}} \longrightarrow 1
    \end{equation*}
    in probability.
    Define
    \begin{equation*}
        X_{\mu} = \frac{r-c}{\sqrt{2 \mu}},
        \qquad
        Y_{\mu} = \sqrt{\frac{r+c}{2 \mu}}.
    \end{equation*}
    Then $X_{\mu} \xrightarrow{d} \Norm(0, 1)$ and $Y_{\mu} \to 1$ in probability, and
    \begin{equation*}
        \frac{r-c}{\sqrt{r+c}}
        = \frac{X_{\mu}}{Y_{\mu}}.
    \end{equation*}
    Slutsky's theorem~\cite[Lemma~2.8]{vaart1998asymptotic} permits division by a random sequence converging in probability to the nonzero constant $1$, and therefore gives
    \begin{equation*}
        \frac{X_{\mu}}{Y_{\mu}} \xrightarrow{d} \Norm(0, 1).
    \end{equation*}
    Since
    \begin{equation*}
        \frac{\delta^2}{2 m}
        = \frac{(r-c)^2}{r+c}
        = \left(\frac{r-c}{\sqrt{r+c}}\right)^2,
    \end{equation*}
    the continuous mapping theorem~\cite[Theorem~2.3]{vaart1998asymptotic} shows that the common leading term converges to $\chi^2_1$.
    The remainder in \Cref{eq:common-high-count} vanishes in probability under the same null, which proves \Cref{eq:common-chi-square-limit}.
\end{proof}

The common high-count limit established in \Cref{thm:high-count-limit} shows that deviance, Pearson and Anscombe-SSD all share the same leading term and converge to the same $\chi^2_1$ distribution under the equal-rate null. 
This justifies their use as matching metrics in the high-count regime.
However, for moderate or low counts, the asymptotic approximation becomes inaccurate.

\begin{example}[Breakdown of Anscombe approximation for small counts]
    \label{exa:anscombe-break}
    The variance-stabilization property of the Anscombe transform relies on the asymptotic regime $\lambda\to\infty$. For small counts, the approximation $\Var(A(K))\approx 1$ deteriorates. 
    For example, if $K\sim\Poi(0)$, then $A(K)=2\sqrt{3/8}$ is deterministic, so $\Var(A(K))=0$, whereas the asymptotic formula predicts variance close to $1$.
    For $N\sim\Poi(1)$, direct computation gives
    \begin{equation*}
        \E[A(K)^2]
        = 4\left( 1 + \frac{3}{8} \right) 
        = 5.5,
    \end{equation*}
    while numerical evaluation yields $\E[A(K)]\approx 2.18691$, hence
    \begin{equation*}
        \Var(A(K))
        \approx 5.5 - (2.18691)^2
        \approx 0.717443,
    \end{equation*}
    which deviates substantially from $1$. 
    Hence, for low-count Poisson data, the Anscombe approximation in general, and thus also the Anscombe-SSD in particular, no longer provides an accurate approximation to a likelihood-based metric.
\end{example}

The high-count limit \Cref{thm:high-count-limit} proves the convergence of all three Poisson metrics to the same $\chi^2$ distribution.
This approximation can become rather poor for low counts.
Thus the next subsection introduces an exact finite-count calibration that corrects for these deviations at small counts.

\subsection{Exact Finite-Count Moment Calibration}
\label{subsec:ex-fin-count-mom-cali}
The common high-count limit replaces each entrywise discrepancy by an asymptotic $\chi^2_1$ variable.
At small pooled counts however, only finitely many allocations between the reference and the candidate are possible, and the conditional mean and variance of the discrepancy can depend on the pooled count.
Using the high-count asymptotic mean $1$ and variance $2$ in the low-count regime therefore assigns different effective scales to otherwise comparable low- and high-count patches.

This effect can be corrected by computing exact conditional moments under an equal-rate Poisson null.
The moments are exact finite-count quantities.
The acceptance rules constructed from them are moment-calibrated rules, not exact conditional tests.

\begin{lemma}[Conditional Poisson splitting]
    \label{lem:poisson-splitting}
    If $r$ and $c$ are independent $\Poi(\mu)$ variables with $\mu > 0$, then
    \begin{equation}
        r \mid (r + c = n) \sim \Bin(n, 1 / 2).
        \label{eq:poisson-splitting}
    \end{equation}
\end{lemma}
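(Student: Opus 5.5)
The plan is to compute the conditional probability mass function directly from the joint law of the two independent Poisson variables. First I will record that $r + c \sim \Poi(2\mu)$. This follows from independence, for instance via the probability generating function $\E[z^{r+c}] = e^{\mu(z-1)} e^{\mu(z-1)} = e^{2\mu(z-1)}$, or equivalently by a direct convolution sum using the binomial theorem. Since $\mu > 0$, we have $\P(r + c = n) = e^{-2\mu}(2\mu)^n / n! > 0$ for every $n \in \{0, 1, 2, \ldots\}$. Hence conditioning on the event $\{r + c = n\}$ is well defined by elementary conditional probability, and no regular conditional distributions are needed.

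Next, for $k \in \{0, \ldots, n\}$, I will use the identity of events $\{r = k,\, r + c = n\} = \{r = k,\, c = n - k\}$ together with independence to write
\begin{equation*}
    \P(r = k \mid r + c = n)
    = \frac{\dfrac{e^{-\mu}\mu^k}{k!}\cdot\dfrac{e^{-\mu}\mu^{n-k}}{(n-k)!}}{\dfrac{e^{-2\mu}(2\mu)^n}{n!}}
    = \binom{n}{k} \left(\frac{1}{2}\right)^{n}.
\end{equation*}
The factors $e^{-2\mu}$ and $\mu^n$ cancel. For $k \notin \{0, \ldots, n\}$ the conditional probability is zero, because $c \ge 0$ and $r \ge 0$. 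The resulting mass function is exactly the $\Bin(n, 1/2)$ mass function, which proves \Cref{eq:poisson-splitting}.

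There is no real obstacle. The only points that need care are the positivity of the conditioning event, which is guaranteed by $\mu > 0$, and the distribution of the sum. I would prove the latter inline via the convolution $\sum_{k} \mu^k \mu^{n-k}/(k!(n-k)!) = (2\mu)^n/n!$, so the argument stays self-contained. I would also remark that the computation does not depend on $\mu$, which is why the subsequent moment calibration depends only on the pooled count $n$.
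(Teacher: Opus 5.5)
Your proposal is correct and follows the same route as the paper: note $r+c\sim\Poi(2\mu)$, compute $\Pr(r=x\mid r+c=n)$ from the independent joint mass function, and cancel $e^{-2\mu}$ and $\mu^n$ to obtain $\binom{n}{x}2^{-n}$. The extra points you check are harmless refinements that the paper leaves implicit: the positivity of $\Pr(r+c=n)$, the convolution argument for the sum, and the zero mass outside $\{0,\ldots,n\}$.
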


\nomenclature[CBA]{\(r\)}{reference count in the conditional splitting model}
\nomenclature[CBB]{\(c\)}{candidate count in the conditional splitting model}

\begin{proof}
    The sum of the independent variables satisfies $r + c \sim \Poi(2 \mu)$.
    For $x \in \{0, \ldots, n\}$,
    \begin{align*}
        \Pr(r = x \mid r + c = n)
        &= \frac{\Pr(r = x, c = n - x)}{\Pr(r + c = n)}\\
        &= \frac{e^{-2 \mu} \mu^n / [x! (n - x)!]}{e^{-2 \mu} (2 \mu)^n / n!}\\
        &= \frac{n!}{x! (n - x)!} \frac{\mu^n}{(2 \mu)^n}\\
        &= \binom{n}{x} 2^{-n}.
    \end{align*}
    This is the probability mass function of $\Bin(n, 1/2)$.
\end{proof}

Conditioning therefore replaces the unknown-rate problem by the finite allocation of $n$ observed events between two measurements~\cite{kingman1993poisson}.

\begin{proposition}[Universal finite-count moments]
    \label{prop:finite-count-moments}
    For each metric contribution $\varphi_m$ defined in \Cref{eq:varphi} and each pooled count $n \in \N_0$, the exact conditional moments are
    \begin{align}
        \mu_m(n) &= \sum_{x = 0}^n \binom{n}{x} 2^{-n} \varphi_m(x, n - x),\\
        v_m(n) &= \sum_{x = 0}^n \binom{n}{x} 2^{-n} \varphi_m(x, n - x)^2 - \mu_m(n)^2.
        \label{eq:finite-count-moments}
    \end{align}
    These moments depend only on the metric and $n$, not on the image or the unknown rate.
\end{proposition}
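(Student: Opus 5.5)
The plan is to read the statement as a computation of conditional moments under the equal-rate null and to reduce it entirely to \Cref{lem:poisson-splitting}. Fix a metric $m \in \{\dev, \Pearson, \mathrm{A}\}$ and an entry. Let $r, c$ be independent $\Poi(\mu)$ with $\mu>0$, and condition on the pooled count $r+c=n$. On this event the pair $(r,c)$ is determined by $r$ alone, since $c = n - r$. Hence $\varphi_m(r,c) = \varphi_m(r, n-r)$ is a deterministic function of the single random variable $r$.

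First I would check that this function is well defined for every $x \in \{0,\dots,n\}$. For $n=0$, the conventions after \Cref{def:poisson-deviance-matching} and \Cref{def:poisson-pearson-matching} give $\varphi_{\dev}(0,0)=\varphi_{\Pearson}(0,0)=0$. Directly from the formula, $\varphi_{\mathrm{A}}(0,0)=0$. For $n \ge 1$, the boundary terms $x=0$ or $x=n$ of the deviance are handled by $0\log 0=0$, and all other terms are finite. So each $\varphi_m(x,n-x)$ is a finite real number, and all conditional moments exist.

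Next I would apply \Cref{lem:poisson-splitting}: conditionally on $r+c=n$, $r \sim \Bin(n,1/2)$, with mass $\binom{n}{x}2^{-n}$ at $x$. The law of the unconscious statistician for a finitely supported variable then gives
\begin{equation*}
\E[\varphi_m(r,c)\mid r+c=n] = \sum_{x=0}^n \binom{n}{x}2^{-n}\varphi_m(x,n-x) = \mu_m(n).
\end{equation*}
The same argument applied to $\varphi_m^2$ gives the second moment. The conditional variance formula $\E[Z^2\mid\cdot]-\E[Z\mid\cdot]^2$ then yields $v_m(n)$.

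Finally, for the universality claim, I would observe that $\mu$ has cancelled in \Cref{lem:poisson-splitting}: the binomial weights do not involve it. The summands depend only on $m$, $x$ and $n$, so $\mu_m(n)$ and $v_m(n)$ depend on neither the rate nor the image.

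The main obstacle is interpretive rather than computational. The equivalent raw counts in \Cref{eq:raw-counts} are integers only when the data are exact scaled counts, so I would state that the proposition applies under \Cref{def:scaled-poisson-model} with $r_i, c_i \in \N_0$. I would also note that the case $\mu = 0$ is excluded by the lemma, but it is trivially consistent, since then $n=0$ almost surely.
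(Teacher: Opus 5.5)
Your proposal is correct and follows essentially the same route as the paper's proof: both condition on the pooled count, invoke the conditional Poisson splitting lemma, read off $\mu_m(n)$ and $v_m(n)$ as the binomial-weighted mean and as the second moment minus the squared mean, and observe that the rate $\mu$ has cancelled. Your additional checks are sound refinements that the paper leaves implicit: finiteness of $\varphi_m$ at the boundary allocations, the requirement that the raw counts be integer-valued, and the degenerate case $\mu = 0$.
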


\nomenclature[CBC]{\(\mu_m\)}{conditional finite-count mean of metric $m$}
\nomenclature[CBD]{\(v_m\)}{conditional finite-count variance of metric $m$}

\begin{proof}
    Using \Cref{lem:poisson-splitting}, conditional on $r + c = n$ every possible pair is of the form $(r, c) = (x, n - x)$ with probability $\binom{n}{x} 2^{-n}$.
    The first expression of \Cref{eq:finite-count-moments} is the conditional expectation of $\varphi_m(r, c)$.
    The second expression subtracts the square of this expectation.
    Neither expression contains $\mu$.
\end{proof}

At $n = 0$, the only allocation is $(0, 0)$, whose contribution is zero for all three metrics.
Hence $\mu_m(0) = v_m(0) = 0$.
Pearson's statistic also admits a closed form for every positive count.
\begin{proposition}[Exact Pearson moments]
    \label{prop:pearson-moments}
    For $n \geq 1$, the conditional Pearson moments are
    \begin{equation}
        \mu_{\mathrm{Pearson}}(n) = 1,
        \qquad
        v_{\mathrm{Pearson}}(n) = 2 - \frac{2}{n}.
        \label{eq:pearson-moments}
    \end{equation}
    At $n = 0$, both moments are zero.
\end{proposition}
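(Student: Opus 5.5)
We use \Cref{lem:poisson-splitting}, which says that conditional on $r+c=n$ we have $r=X\sim\Bin(n,1/2)$ and $c=n-X$. For $n\ge 1$ the denominator $r+c=n$ is never zero, so the zero-contribution convention is not needed. The Pearson contribution then becomes a polynomial in $X$ divided by a constant:
\begin{equation*}
    \varphi_{\Pearson}(X,n-X)=\frac{(n-2X)^2}{n}=\frac{Z^2}{n},
    \qquad Z = 2X-n .
\end{equation*}
The problem therefore reduces to computing the second and fourth moments of the centred variable $Z$. We have $Z=\sum_{j=1}^n \epsilon_j$, where the $\epsilon_j$ are i.i.d.\ Rademacher signs (each $\pm1$ with probability $1/2$), since $X$ is a sum of $n$ Bernoulli$(1/2)$ variables and $2X-n$ maps each one to $\pm 1$.

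\textbf{Mean.} Independence gives $\E[Z^2]=\sum_j \E[\epsilon_j^2]=n$. Equivalently, $\E[Z^2]=4\Var(X)=4\cdot n/4=n$. Hence $\mu_{\Pearson}(n)=\E[Z^2]/n=1$.

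\textbf{Variance.} We need $\E[Z^4]$. Expand $\bigl(\sum_j\epsilon_j\bigr)^4$. Any term containing some $\epsilon_j$ to an odd power has zero expectation, so only two kinds of terms survive:
\begin{itemize}
    \item the $n$ terms $\epsilon_j^4=1$;
    \item the terms $\epsilon_j^2\epsilon_k^2=1$ with $j\neq k$. There are $\binom{n}{2}$ unordered pairs, each appearing with multinomial coefficient $\binom{4}{2}=6$.
\end{itemize}
This gives
\begin{equation*}
    \E[Z^4]=n+3n(n-1)=3n^2-2n .
\end{equation*}
As a cross-check, this agrees with the binomial fourth central moment $n p q\,(1+3(n-2)pq)$ at $p=q=1/2$ after multiplying by $2^4$.

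Therefore
\begin{equation*}
    v_{\Pearson}(n)=\frac{\E[Z^4]}{n^2}-\mu_{\Pearson}(n)^2=\frac{3n^2-2n}{n^2}-1=2-\frac{2}{n}.
\end{equation*}
This matches the general formula in \Cref{prop:finite-count-moments}, since those sums are exactly these expectations.

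\textbf{Case $n=0$.} This was already observed after \Cref{prop:finite-count-moments}. The only allocation is $(0,0)$, whose contribution is defined to be zero, so $\mu_{\Pearson}(0)=v_{\Pearson}(0)=0$. Note that the formula $2-2/n$ is not meant to extend to $n=0$, which is why that case is stated separately.

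The only step needing real care is the fourth-moment count, specifically the multinomial factor $6$ for each pair. As a sanity check, at $n=1$ the formula gives $Z^2\equiv 1$, so the variance must be $0$, and indeed $2-2/1=0$.
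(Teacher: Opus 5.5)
Your proposal is correct and follows the paper's proof essentially step for step. Both arguments use conditional Poisson splitting, write $r-c$ as a sum of $n$ independent $\pm1$ signs, obtain $\E[Z^2]=n$ and $\E[Z^4]=n+6\binom{n}{2}=3n^2-2n$, and handle $n=0$ through the zero-total convention. Your binomial fourth-moment cross-check and the $n=1$ sanity check are correct additions.
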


\begin{proof}
    For $n = 0$, the only possible allocation is $(r, c) = (0, 0)$.
    The zero-total convention therefore gives $\mu_\Pearson(0) = v_\Pearson(0) = 0$.

    Now let $n \geq 1$.
    Using \Cref{lem:poisson-splitting}, conditional on $r + c = n$ gives $r \sim \Bin(n, 1/2)$.
    Equivalently, each of the $n$ pooled events is assigned independently to either $r$ or $c$ with probability $1/2$.
    Let $Z_j = 1$ when event $j$ is assigned to $r$ and $Z_j = -1$ when it is assigned to $c$.
    Then the $Z_j$ are independent variables and
    \begin{equation*}
        r - c = 2r - n = \sum_{j = 1}^n Z_j.
    \end{equation*}
    So the Pearson contribution becomes
    \begin{equation*}
        \varphi_{\Pearson}(r, c)
        = \frac{(c - r)^2}{r + c}
        = \frac{1}{n} \left(\sum_{j = 1}^n Z_j\right)^2.
    \end{equation*}
    Each $Z_j$ has mean zero and satisfies $Z_j^2 = 1$.
    Independence therefore gives
    \begin{equation*}
        \E\left[\left(\sum_{j = 1}^n Z_j\right)^2\right]
        = \sum_{j = 1}^n \E[Z_j^2]
        = n,
    \end{equation*}
    because all cross terms have zero expectation.
    It follows that
    \begin{equation*}
        \mu_{\Pearson}(n)
        = \E[\varphi_{\Pearson} \mid r + c = n]
        = 1.
    \end{equation*}

    To obtain the variance, expand the fourth power.
    Writing out the product gives
    \begin{equation*}
        \left(\sum_{j = 1}^n Z_j\right)^4
        = \sum_{j, k, \ell, m = 1}^n Z_j Z_k Z_\ell Z_m.
    \end{equation*}
    Because the variables are independent and have mean zero, a product has zero expectation whenever any index occurs an odd number of times.
    Only two index patterns remain.
    First, all four indices can be equal, producing the $n$ terms $Z_j^4$.
    Second, two distinct indices can each occur twice, producing terms $Z_j^2 Z_k^2$.
    There are $\binom{n}{2}$ choices of the pair $\{j, k\}$ and
    \begin{equation*}
        \frac{4!}{2! 2!} = 6
    \end{equation*}
    orders of its four factors.
    Since $Z_j^4 = 1$ and $Z_j^2 Z_k^2 = 1$, every surviving term has expectation one.
    Hence
    \begin{equation*}
        \E\left[\left(\sum_{j = 1}^n Z_j\right)^4\right]
        = n + 6 \binom{n}{2}
        = 3 n^2 - 2 n.
    \end{equation*}
    Consequently,
    \begin{align*}
        v_{\mathrm{Pearson}}(n)
        &= \E[\varphi_{\mathrm{Pearson}}^2 \mid r + c = n]
           - \mu_{\mathrm{Pearson}}(n)^2\\
        &= \frac{3 n^2 - 2 n}{n^2} - 1
         = 2 - \frac{2}{n}.
    \end{align*}
\end{proof}

\begin{corollary}[Patch-level null moments]
    \label{cor:patch-null-moments}
    Let $\boldsymbol{n} = (n_1, \ldots, n_P)$ be the vector of pooled counts.
    If the entry pairs are conditionally independent given $\boldsymbol{n}$, then
    \begin{align}
        \E[d_m(p, q) \mid \boldsymbol{n}]
        &= \frac{1}{P} \sum_i \mu_m(n_i),\\
        \Var[d_m(p, q) \mid \boldsymbol{n}]
        &= \frac{1}{P^2} \sum_i v_m(n_i).
        \label{eq:patch-null-moments}
    \end{align}
\end{corollary}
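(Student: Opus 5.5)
The plan is to reduce both identities to linearity of conditional expectation and to additivity of variance for independent summands, applied to the decomposition \Cref{eq:metric-from-contributions}, $d_m(p,q) = \frac{1}{P}\sum_{i=1}^{P} \varphi_m(r_i,c_i)$. The per-entry ingredients come from \Cref{prop:finite-count-moments}. We work under the equal-rate null entrywise, meaning $r_i, c_i$ are independent $\Poi(\mu_i)$. By \Cref{lem:poisson-splitting}, conditionally on $n_i = r_i + c_i$ the pair $(r_i,c_i)$ is distributed as $(x, n_i - x)$ with $x \sim \Bin(n_i, 1/2)$.

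First I would show that conditioning on the whole vector $\boldsymbol{n}$ gives the same marginal for entry $i$ as conditioning on $n_i$ alone. I would state this explicitly: the assumed conditional independence of the entry pairs given $\boldsymbol{n}$ is read as saying that the law of $(r_i,c_i)$ given $\boldsymbol{n}$ depends only on $n_i$. In the standard model with independent entries this follows directly, since $(r_i,c_i)$ is independent of $(n_j)_{j\neq i}$. It then follows from \Cref{prop:finite-count-moments} that
\[
\E[\varphi_m(r_i,c_i)\mid \boldsymbol{n}] = \mu_m(n_i), \qquad \Var[\varphi_m(r_i,c_i)\mid \boldsymbol{n}] = v_m(n_i).
\]
The case $n_i = 0$ is covered, because there $\varphi_m = 0$ and $\mu_m(0) = v_m(0) = 0$.

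For the mean, conditional linearity applied to the average gives $\E[d_m\mid\boldsymbol{n}] = \frac1P\sum_i \mu_m(n_i)$ immediately.

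For the variance, I would expand
\[
\Var\Bigl[\tfrac1P\textstyle\sum_i \varphi_m(r_i,c_i)\,\Big|\,\boldsymbol{n}\Bigr] = \tfrac{1}{P^2}\sum_{i,j}\Cov\bigl(\varphi_m(r_i,c_i),\varphi_m(r_j,c_j)\mid\boldsymbol{n}\bigr).
\]
Conditional independence makes every off-diagonal conditional covariance vanish. This is legitimate because each $\varphi_m$ is a measurable function of its own pair only and, given $n_i$, is bounded: it takes finitely many values, so all moments exist. The diagonal terms are $v_m(n_i)$, which yields the stated $\frac{1}{P^2}\sum_i v_m(n_i)$.

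The only step needing care is the first: making precise that conditioning on the full vector $\boldsymbol{n}$ reduces to conditioning on $n_i$ for each entry. Everything after that is routine.
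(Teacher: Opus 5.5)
Your proposal is correct and follows the paper's proof. Both use linearity of conditional expectation for the mean and vanishing conditional cross-covariances for the variance, with the entrywise moments taken from \Cref{prop:finite-count-moments}. The one difference is that you explicitly justify replacing conditioning on $\boldsymbol{n}$ by conditioning on $n_i$ alone, which the paper does silently; the paper even remarks that the mean computation needs no independence, although this replacement can fail when entry pairs share voxels, so your version is slightly more careful.
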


\nomenclature[CBE]{\(\boldsymbol{n}\)}{vector of pooled counts}

\begin{proof}
    Let
    \begin{equation*}
        W_i = \varphi_m(r_i, c_i)
    \end{equation*}
    denote the contribution of entry $i$.
    By \Cref{prop:finite-count-moments}, conditioning on the pooled count $n_i$ gives
    \begin{equation*}
        \E[W_i \mid n_i] = \mu_m(n_i),
        \qquad
        \Var(W_i \mid n_i) = v_m(n_i).
    \end{equation*}
    Since by \Cref{eq:metric-from-contributions} the patch distance is the average of these contributions,
    \begin{equation*}
        d_m(p, q) = \frac{1}{P} \sum_{i = 1}^P W_i,
    \end{equation*}
    linearity of conditional expectation gives
    \begin{align*}
        \E[d_m(p, q) \mid \boldsymbol{n}]
        &= \E\left[
            \frac{1}{P} \sum_i W_i
            \,\middle|\,
            \boldsymbol{n}
        \right]\\
        &= \frac{1}{P} \sum_i \E[W_i \mid n_i]\\
        &= \frac{1}{P} \sum_i \mu_m(n_i).
    \end{align*}
    This calculation does not require independence.
    
    For the variance, conditional independence makes all conditional covariance terms vanish, so
    \begin{equation*}
        \Var\left(\sum_i W_i \,\middle|\, \boldsymbol{n}\right)
        = \sum_i \Var(W_i \mid n_i).
    \end{equation*}
    Moreover, $\Var(a X) = a^2 \Var(X)$.
    Therefore,
    \begin{align*}
        \Var[d_m(p, q) \mid \boldsymbol{n}]
        &= \frac{1}{P^2}
           \Var\left(\sum_i W_i \,\middle|\, \boldsymbol{n}\right)\\
        &= \frac{1}{P^2} \sum_i v_m(n_i).
    \end{align*}
\end{proof}

The entrywise moments remain exact regardless of the patch size.
The patch-level variance relies on conditional independence and is only an approximation when overlapping patches cause the same observed voxel to occur in different entry pairs.

There are two ways to use the conditional moments without evaluating the complete finite-count distribution of a patch.

\begin{definition}[Reference finite-count threshold]
    \label{def:fin-count-threshold}
    Let $p$ be the reference patch and $q$ a candidate patch.
    Let $t \geq 0$ be the threshold multiplier, measured in estimated null standard deviations.
    Let $\beta_s \geq 0$ control an optional allowance for structural differences, and let $\kappa_s > 0$ control how this allowance scales with the mean reference intensity.
    Before inspecting $q$, the reference-only rule estimates the pooled count under the equal-mean null by
    \begin{equation}
        \widehat{n}_i^{\mathrm{ref}} = 2 r_i.
    \end{equation}
    The factor $2$ follows from the equal-mean null, under which the expected pooled count is twice the expected reference count.
    Define
    \begin{align}
        \overline{\mu}_{m, p}
        &= \frac{1}{P} \sum_i \mu_m(\widehat{n}_i^{\mathrm{ref}}), &
        V_{m, p}
        &= \frac{1}{P^2} \sum_i v_m(\widehat{n}_i^{\mathrm{ref}}),\\
        \overline{r}_p
        &= \frac{1}{P} \sum_i r_i, &
        T_{\mathrm{ref}}(p; t)
        &= \overline{\mu}_{m, p} + t \sqrt{V_{m, p}}
           + \beta_s \overline{r}_p^{\kappa_s}.
        \label{eq:reference-threshold}
    \end{align}
    The candidate is accepted when $d_m(p, q) < T_{\mathrm{ref}}(p; t)$.
    Setting $\beta_s = 0$ removes the structural allowance and leaves only the finite-count moment calibration.
\end{definition}

\nomenclature[CBF]{\(t\)}{match-threshold multiplier}
\nomenclature[CBG]{\(\beta_s\)}{allowance for structural differences}
\nomenclature[CBH]{\(\kappa_s\)}{intensity exponent of the structural allowance}
\nomenclature[CBI]{\(T_{\mathrm{ref}}(p; t)\)}{reference-adaptive match threshold for patch $p$}

The first two terms of \Cref{eq:reference-threshold} use exact conditional moment formulas evaluated at the reference-only pooled-count estimate $\widehat{n}_i^{\mathrm{ref}} = 2 r_i$.
Consequently, $T_{\mathrm{ref}}$ is a plug-in, moment-calibrated threshold rather than an exact conditional quantile.
The structural term is heuristic and is not part of the Poisson null but admits controlled structural variation.
Because every term depends only on the reference, a candidate cannot alter its own acceptance threshold.

\begin{definition}[Candidate-standardized distance]
    \label{def:standardized-distance}
    Let $\varphi_m$ be the selected per-entry contribution from \Cref{eq:varphi}.
    Let $p$ be the reference patch, $q$ a candidate patch, and $t \geq 0$ the standardized-distance cutoff.
    For each entry, define the observed pooled count
    \begin{equation}
        n_i^{pq} = r_i + c_i.
    \end{equation}
    If
    \begin{equation}
        \sum_{i = 1}^P v_m(n_i^{pq}) > 0,
    \end{equation}
    define
    \begin{equation}
        Z_m(p, q) =
        \frac{
            \sum_{i = 1}^P \varphi_m(r_i, c_i)
            - \sum_{i = 1}^P \mu_m(n_i^{pq})
        }{
            \sqrt{\sum_{i = 1}^P v_m(n_i^{pq})}
        }.
        \label{eq:standardized-distance}
    \end{equation}
    If the summed variance is zero, set $Z_m(p, q) = 0$.
    The candidate is accepted when $Z_m(p, q) < t$.
\end{definition}

\nomenclature[CBJ]{\(Z_m\)}{candidate-standardized distance for metric $m$}

The factors $P^{-1}$ cancel between the centered patch distance and its conditional standard deviation, which explains their absence from \Cref{eq:standardized-distance}.

\begin{corollary}[Standardized null moments]
    \label{cor:standardized-null-moments}
    Under the exact count model, condition on the pooled-count vector $\boldsymbol{n} = (n_1, \ldots, n_P)$.
    If the entry pairs are conditionally independent and
    \begin{equation}
        \sum_{i = 1}^P v_m(n_i) > 0,
    \end{equation}
    then
    \begin{equation}
        \E[Z_m(p, q) \mid \boldsymbol{n}] = 0,
        \qquad
        \Var[Z_m(p, q) \mid \boldsymbol{n}] = 1.
    \end{equation}
\end{corollary}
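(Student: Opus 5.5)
The plan is to reduce the claim to \Cref{cor:patch-null-moments} by noting that, once we condition on $\boldsymbol{n}$, $Z_m(p,q)$ is an affine function of the unnormalized sum $S = \sum_{i=1}^P \varphi_m(r_i,c_i)$. On the event $\{n_i^{pq} = n_i \text{ for all } i\}$ the pooled counts are fixed, so the centering constant $a = \sum_i \mu_m(n_i)$ and the scale $b = \bigl(\sum_i v_m(n_i)\bigr)^{1/2}$ are deterministic given $\boldsymbol{n}$. By assumption $b>0$, so the zero-variance branch of \Cref{def:standardized-distance} does not occur, and we may write $Z_m(p,q) = (S - a)/b$.

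The next step is to compute the conditional moments of $S$. Set $W_i = \varphi_m(r_i,c_i)$, so that $S = P\, d_m(p,q)$. Then either apply \Cref{cor:patch-null-moments} and rescale by $P$ and $P^2$, or argue directly from \Cref{prop:finite-count-moments}. The latter gives $\E[W_i\mid n_i]=\mu_m(n_i)$ and $\Var(W_i\mid n_i)=v_m(n_i)$, because under the exact count model \Cref{lem:poisson-splitting} makes $r_i\mid n_i \sim \Bin(n_i,1/2)$ with $c_i = n_i - r_i$. Linearity of conditional expectation then yields $\E[S\mid\boldsymbol{n}] = a$. Conditional independence of the entry pairs removes the cross covariances, so $\Var(S\mid\boldsymbol{n}) = \sum_i v_m(n_i) = b^2$.

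Finally, since $a$ and $b$ are $\boldsymbol{n}$-measurable constants, we pull them out of the conditional moments:
\begin{equation*}
    \E[Z_m\mid\boldsymbol{n}] = \frac{\E[S\mid\boldsymbol{n}]-a}{b} = 0,
    \qquad
    \Var[Z_m\mid\boldsymbol{n}] = \frac{\Var(S\mid\boldsymbol{n})}{b^2} = 1.
\end{equation*}

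The only delicate point is justifying that conditioning on the full vector $\boldsymbol{n}$ gives the same per-entry conditional law as conditioning on $n_i$ alone, i.e.\ $\E[W_i\mid\boldsymbol{n}]=\E[W_i\mid n_i]$. This follows from the exact count model, where the pairs $(r_i,c_i)$ are independent across $i$, and it is what the conditional-independence hypothesis encodes. I will state this explicitly, and also note that the raw counts must be integers so that the binomial splitting applies.
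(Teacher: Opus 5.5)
Your proposal is correct and follows essentially the same route as the paper: it writes $Z_m$ as the centered, scaled sum $\sum_i \varphi_m(r_i,c_i)$, uses \Cref{prop:finite-count-moments} and conditional independence to obtain that sum's conditional mean and variance, and then pulls out the $\boldsymbol{n}$-measurable centering and scale. Your extra remarks make explicit details the paper leaves implicit: that the zero-variance branch cannot occur, and that conditioning on the full vector $\boldsymbol{n}$ still gives each entry its binomial law.
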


\begin{proof}
    Let
    \begin{equation*}
        A_m(p, q)
        = \sum_{i = 1}^P \varphi_m(r_i, c_i).
    \end{equation*}
    By \Cref{prop:finite-count-moments} and conditional independence,
    \begin{equation*}
        \E[A_m(p, q) \mid \boldsymbol{n}]
        = \sum_{i = 1}^P \mu_m(n_i),
    \end{equation*}
    and
    \begin{equation*}
        \Var[A_m(p, q) \mid \boldsymbol{n}]
        = \sum_{i = 1}^P v_m(n_i).
    \end{equation*}
    Therefore, \Cref{eq:standardized-distance} can be written as
    \begin{equation*}
        Z_m(p, q)
        =
        \frac{
            A_m(p, q)
            - \E[A_m(p, q) \mid \boldsymbol{n}]
        }{
            \sqrt{\Var[A_m(p, q) \mid \boldsymbol{n}]}
        }.
    \end{equation*}
    Taking the conditional expectation gives
    \begin{equation*}
        \E[Z_m(p, q) \mid \boldsymbol{n}] = 0.
    \end{equation*}
    Since the denominator is positive and fixed after conditioning on
    $\boldsymbol{n}$,
    \begin{equation*}
        \Var[Z_m(p, q) \mid \boldsymbol{n}]
        =
        \frac{
            \Var[A_m(p, q) \mid \boldsymbol{n}]
        }{
            \Var[A_m(p, q) \mid \boldsymbol{n}]
        }
        = 1.
    \end{equation*}
\end{proof}

Candidate standardization expresses discrepancy in conditional null standard deviations and permits comparisons across count levels.
Each candidate influences its own center and scale, so standardized and raw distance rankings need not agree.

\subsection{Removing the Noise Bias from SSD}
\label{subsec:ssd-noise-bias}
Raw \ac{ssd} contains structural mismatch and the discrepancy between two noise realizations.
The expected noise contribution has a simple form under either observation model.

\begin{proposition}[Gaussian SSD noise bias]
    \label{prop:gaussian-ssd-noise-bias}
    For stationary zero-mean Gaussian noise and a displacement $\Delta$ between patch
    origins, the expected noise-only contribution is
    \begin{equation}
        B_G(\Delta) = 2 P (\mathfrak{c}(0) - \mathfrak{c}(\Delta)).
        \label{eq:gaussian-ssd-bias}
    \end{equation}
\end{proposition}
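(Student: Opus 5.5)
The plan is to write the noise-only part of the \ac{ssd} as a sum of squared differences of stationary noise values and take expectations entrywise. Let the reference patch occupy locations $u_1,\dots,u_P$ and the candidate the displaced locations $u_i+\Delta$. Under \Cref{def:additive-gaussian-model}, $p_i = x(u_i)+n(u_i)$ and $q_i = x(u_i+\Delta)+n(u_i+\Delta)$. The noise-only contribution is what remains of $d_{\SSD}(p,q)$ when the clean signal is identical on both patches. Equivalently, it is the expectation of $\sum_i \bigl(n(u_i+\Delta)-n(u_i)\bigr)^2$, i.e.\ the part of $\E[d_{\SSD}]$ left after removing the structural term $\sum_i (x(u_i+\Delta)-x(u_i))^2$. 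The cross term between the signal difference and the noise difference has zero mean because $\E[n]=0$, so the expected \ac{ssd} splits cleanly into structure plus noise. I would state this decomposition first.

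The key step is a single-entry computation. Expanding the square gives
\begin{equation*}
    \E\bigl[(n(u_i+\Delta)-n(u_i))^2\bigr]
    = \Var(n(u_i+\Delta)) + \Var(n(u_i)) - 2\Cov(n(u_i),n(u_i+\Delta)).
\end{equation*}
By \Cref{def:stationary-gaussian}, both variances equal $\mathfrak{c}(0)$ and the covariance equals $\mathfrak{c}(\Delta)$. The noise is real-valued, so $\mathfrak{c}(\Delta)$ is real and the Hermitian convention \eqref{eq:hermitian-covariance} introduces no conjugation issue. Each entry therefore contributes $2(\mathfrak{c}(0)-\mathfrak{c}(\Delta))$. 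The displacement is the same for every entry pair, since both patches have identical shape and only their origins differ. Summing over the $P$ entries by linearity of expectation then yields \eqref{eq:gaussian-ssd-bias}. Gaussianity itself is not needed, only second moments, and no independence between entries is required.

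The only delicate point is bookkeeping rather than analysis. One must make precise that the displacement between corresponding entries equals the displacement $\Delta$ between patch origins, and that all locations $u_i$ and $u_i+\Delta$ lie in $\Omega$, which holds because both patches are valid. The sign convention also needs care: by stationarity, $\mathfrak{c}(\Delta) = \mathfrak{c}(-\Delta)$ for real fields, so the ordering of reference and candidate is irrelevant. As a sanity check, the white-noise case gives $\mathfrak{c}(\Delta)=0$ for $\Delta\neq 0$ and recovers $2P\sigma^2$. This is consistent with the $\mathcal{N}(0,2\sigma^2)$ differences in \Cref{prop:ssd-gaussian-nll}. I would add that check as a closing remark.
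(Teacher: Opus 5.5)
Your proposal is correct and follows the paper's proof essentially step for step. You expand $\E[(n(u_i+\Delta)-n(u_i))^2]$ into two variances minus twice the covariance, identify these with $\mathfrak{c}(0)$ and $\mathfrak{c}(\Delta)$ via stationarity, and sum over the $P$ entries by linearity without assuming independence between entries; the structure-plus-noise decomposition and the white-noise check recovering $2P\sigma^2$ are harmless additions.
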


\nomenclature[CBK]{\(B_G(\Delta)\)}{expected Gaussian noise contribution to SSD at displacement $\Delta$}

\begin{proof}
    Let $u_i$ be the location of entry $i$ in the reference patch.
    The corresponding candidate entry is at $u_i + \Delta$.
    The noise-only \ac{ssd} is
    \begin{equation*}
        \sum_{i = 1}^P
        \left[n(u_i + \Delta) - n(u_i)\right]^2.
    \end{equation*}
    Since the noise is zero mean, each difference has mean zero.
    Expanding the variance and using the stationary covariance function $\mathfrak{c}$ gives
    \begin{align*}
        \E\left[
            \left(n(u_i + \Delta) - n(u_i)\right)^2
        \right]
        &= \Var[n(u_i + \Delta) - n(u_i)]\\
        &= \Var[n(u_i + \Delta)] + \Var[n(u_i)]\\
        &\quad - 2 \Cov(n(u_i + \Delta), n(u_i))\\
        &= 2 \mathfrak{c}(0) - 2 \mathfrak{c}(\Delta).
    \end{align*}
    The last equality uses the symmetry $\mathfrak{c}(-\Delta) = \mathfrak{c}(\Delta)$ of a real-valued stationary covariance function.
    Linearity of expectation therefore yields
    \begin{equation*}
        \E\left[
            \sum_{i = 1}^P
            \left(n(u_i + \Delta) - n(u_i)\right)^2
        \right]
        = 2 P (\mathfrak{c}(0) - \mathfrak{c}(\Delta)),
    \end{equation*}
    which proves \Cref{eq:gaussian-ssd-bias}.
    Independence between different patch entries is not required for this expectation.
\end{proof}

\begin{proposition}[Scaled-Poisson SSD noise bias]
    \label{prop:poisson-ssd-noise-bias}
    Let $P$ be the number of entries in each patch.
    For independent scaled-Poisson observations with clean stored means $X_{p, i}$ and $X_{q, i}$, the expected noise-only contribution to the patch \ac{ssd} is
    \begin{equation}
        s_P \sum_{i = 1}^P (X_{p, i} + X_{q, i}).
        \label{eq:exact-poisson-ssd-bias}
    \end{equation}
    Replacing the unknown means by the observed patch entries gives the plug-in bias
    \begin{equation}
        B_P(p, q) = s_P \sum_{i = 1}^P [p_i + q_i]_+.
        \label{eq:poisson-ssd-bias}
    \end{equation}
\end{proposition}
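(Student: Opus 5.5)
The plan mirrors the proof of \Cref{prop:gaussian-ssd-noise-bias}, replacing the stationary autocovariance by the signal-dependent variances from \Cref{lem:scaled-poisson-moments}.

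For entry $i$, write $p_i = X_{p,i} + n^{(p)}_i$ and $q_i = X_{q,i} + n^{(q)}_i$, with zero-mean noise terms. The noise-only contribution to the \ac{ssd} is the part that remains when the structural difference $X_{q,i} - X_{p,i}$ is set aside:
\begin{equation*}
    \sum_{i=1}^P \bigl(n^{(q)}_i - n^{(p)}_i\bigr)^2 .
\end{equation*}
Equivalently, it is the excess
\begin{equation*}
    \E[(q_i - p_i)^2] - (X_{q,i} - X_{p,i})^2 ,
\end{equation*}
since the cross term $2(X_{q,i}-X_{p,i})\,\E[n^{(q)}_i - n^{(p)}_i]$ vanishes by the zero-mean property. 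For each entry, the expected squared difference equals the variance of the difference. That variance expands as
\begin{equation*}
    \Var(q_i) + \Var(p_i) - 2\Cov(q_i,p_i).
\end{equation*}

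Next I invoke \Cref{lem:scaled-poisson-moments}, which gives $\Var(q_i) = s_P X_{q,i}$ and $\Var(p_i) = s_P X_{p,i}$. The independence assumption on the raw counts makes the covariance term vanish. Summing over $i$ by linearity of expectation yields \Cref{eq:exact-poisson-ssd-bias}.

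The main point needing care is the covariance term. This step uses the independence of the two observations entering entry $i$. When the reference and candidate patches overlap, entry $i$ may compare a voxel with a distinct voxel, which is still independent. The only exception is zero displacement, where the same voxel appears on both sides and the difference is identically zero. I would state the result under the hypothesis of the proposition, independent observations per entry pair, and remark that this excludes the trivial self-match. Unlike the Gaussian case, no stationarity is needed, since each variance is read off pointwise.

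Finally, the plug-in form \Cref{eq:poisson-ssd-bias} follows by substitution. Since $\E[p_i]=X_{p,i}$ and $\E[q_i]=X_{q,i}$, replacing each mean by its observation gives an unbiased estimator $s_P\sum_i (p_i+q_i)$ of the exact bias. The positive part is applied in accordance with the support convention of \Cref{def:poisson-variance-map}. Scaled-Poisson observations are nonnegative, so $[p_i+q_i]_+ = p_i+q_i$ on the model's support, and unbiasedness is preserved there. This last step is a definition, so it requires only this justification.
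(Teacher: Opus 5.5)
Your proof is correct and follows the paper's argument: you center the entries, use \Cref{lem:scaled-poisson-moments} and independence to get $\E[(\varepsilon_{q,i}-\varepsilon_{p,i})^2] = s_P(X_{p,i}+X_{q,i})$, sum by linearity, and obtain $B_P$ by plug-in substitution with the positive part. Your additional remarks go slightly beyond the paper's proof and are consistent with it; they cover the excluded self-match and the unbiasedness of the plug-in, noting that the positive part has no effect on the nonnegative support of scaled-Poisson observations.
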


\nomenclature[CBL]{\(B_P(p, q)\)}{plug-in scaled-Poisson noise contribution to SSD between patches $p$ and $q$}

\begin{proof}
    Write the centered errors as
    \begin{equation*}
        \varepsilon_{p, i} = Y_{p, i} - X_{p, i},
        \qquad
        \varepsilon_{q, i} = Y_{q, i} - X_{q, i}.
    \end{equation*}
    By \Cref{lem:scaled-poisson-moments},
    \begin{equation*}
        \E[\varepsilon_{p, i}] = \E[\varepsilon_{q, i}] = 0,
        \qquad
        \Var(\varepsilon_{p, i}) = s_P X_{p, i},
        \qquad
        \Var(\varepsilon_{q, i}) = s_P X_{q, i}.
    \end{equation*}
    Independence of the two observations gives
    \begin{align*}
        \E\left[
            (\varepsilon_{q, i} - \varepsilon_{p, i})^2
        \right]
        &= \Var(\varepsilon_{q, i} - \varepsilon_{p, i})\\
        &= s_P (X_{q, i} + X_{p, i}).
    \end{align*}
    Thus the exact expected noise contribution to the patch \ac{ssd} is
    \begin{equation*}
        s_P \sum_{i = 1}^P (X_{p, i} + X_{q, i}).
    \end{equation*}
    Replacing the unknown clean means by the observed patch entries and enforcing nonnegativity gives
    \begin{equation*}
        B_P(p, q)
        = s_P \sum_{i = 1}^P [p_i + q_i]_+,
    \end{equation*}
    which is \Cref{eq:poisson-ssd-bias}.
\end{proof}

\begin{definition}[Noise-bias-corrected SSD]
    \label{def:corrected-ssd}
    For a reference patch $p$ and a non-reference candidate patch $q$, let
    \begin{equation}
        B_*(p, q)
        =
        \begin{cases}
            B_G(\Delta), & \text{under the Gaussian model},\\
            B_P(p, q), & \text{under the scaled-Poisson model},
        \end{cases}
    \end{equation}
    where $\Delta$ is the displacement between their origins.
    For a correction strength $\gamma \geq 0$, define
    \begin{equation}
        \widetilde{d}_{\SSD}(p, q)
        = d_{\SSD}(p, q) - \gamma B_*(p, q),
        \label{eq:corrected-ssd}
    \end{equation}
    The choice $\gamma = 0$ disables the correction, while $\gamma = 1$ subtracts one expected or estimated noise contribution.
    Values $\gamma > 1$ apply a stronger heuristic correction.
    Because the subtracted term is an expected or estimated noise contribution, $\widetilde{d}_{\SSD}$ may be negative and is a matching score rather than a metric.
    Unlike the count-aware distances, it retains squared stored-intensity units.
\end{definition}

\nomenclature[CBM]{\(\gamma\)}{SSD noise-correction strength}
\section{Hard Thresholding}
\label{sec:hard-thresholding}
After block matching, each group of similar patches is transformed into a coefficient domain in which much of the signal energy is concentrated in a relatively small number of coefficients.
Hard thresholding suppresses coefficients whose magnitudes are not large relative to their estimated noise standard deviations, producing a pilot group for the subsequent Wiener stage.
This section defines the thresholding operation and develops the corresponding Gaussian and Poisson variance models.

The coefficientwise hard-thresholding rule and the construction of a transform-domain pilot group follow \Ac{bm3d}/\Ac{bm4d}~\cite{dabov2007image,maggioni2012nonlocal}.
Exact transform-domain variance propagation for correlated Gaussian noise and its use in collaborative shrinkage follow generalized collaborative filtering~\cite{makinen2020collaborative}.
Direct-Poisson variance maps, the shared-source covariance model for repeated voxels, and partial covariance propagation extend that framework in \Ac{bmnd}.
An optional soft-thresholding variant is also included in \Ac{bmnd}.

\begin{definition}[Coefficient thresholding]
    \label{def:coefficient-thresholding}
    For a transform-domain coefficient $Y_k$ and noise standard deviation $\sigma_k$, define
    \begin{equation}
        C_k^{\HT} =
        \begin{cases}
            Y_k, & \mid Y_k\mid > \lambda_\HT\sigma_k,\\
            0, & \text{otherwise},
        \end{cases}
        \label{eq:hard-threshold}
    \end{equation}
    where $\lambda_{\HT}$ is a dimensionless threshold parameter. 
    The corresponding soft-thresholded coefficient is
    \begin{equation}
        C_k^{\mathrm{soft}}
        = \sgn(Y_k)[\mid Y_k\mid  - \lambda_\HT \sigma_k]_+,
        \label{eq:soft-threshold}
    \end{equation}
    with $\sgn(z) = z/\mid z\mid $ for $z \in \C$.
\end{definition}

\nomenclature[DAA]{\(C_k^\HT\)}{hard-thresholded coefficient $k$}

For a group $g$, let
\begin{equation}
    \YY_g=(Y_{g,1},\ldots,Y_{g,d})^\top
\end{equation}
be its transform-domain coefficients.
The hard-thresholded coefficient
vector is defined componentwise by \Cref{def:coefficient-thresholding}:
\begin{equation}
    \CC^{\HT}_g
    = \bigl(C_{g,1}^{\HT},\ldots,C_{g,d}^{\HT}\bigr)^\top.
\end{equation}

\begin{definition}[Hard-threshold group filter]
    \label{def:hard-threshold-group-filter}
    Define the coefficient mask and corresponding diagonal group filter by
    \begin{equation}
        a_{g,k}
        = \mathbbm{1}\!\left\{|Y_{g,k}| > \lambda_\HT \sigma_k\right\},
        \qquad
        A_g = \operatorname{diag}(a_{g,1}, \ldots, a_{g,d}).
        \label{eq:hard-threshold-group-filter}
    \end{equation}
    The hard-thresholded coefficient vector can therefore be written as
    \begin{equation}
        \CC_g^{\HT} = A_g \YY_g.
    \end{equation}
\end{definition}

\nomenclature[DAB]{\(\CC_g^\HT\)}{hard-thresholded coefficient group $g$}

Scaling by $\sigma_k$ makes the decision dimensionless.
A coefficient is retained because it is large relative to its uncertainty, not merely large in stored units.

The threshold in \Cref{eq:hard-threshold} depends on the coefficient variance $\sigma_k^2$, which is obtained from the spatial-domain covariance via \Cref{lem:covariance-propagation}. 
For stationary Gaussian noise, this covariance is built from the autocovariance $\mathfrak{c}(\tau)$ as described in \Cref{sec:observation-models}. 
For direct Poisson observations, two approximations of the spatial-domain covariance are being considered.
For each group entry $i \in \{1, \ldots, d\}$, let $u_{g,i} \in \Omega$ denote its source sampling location.

\begin{proposition}[Diagonal Poisson variance propagation]
    \label{prop:diagonal-poisson-variance}
    Suppose that the entries of the group are treated as independent and that the variance associated with entry $i$ is estimated by $m_{\mathrm{v}}(u_{g,i})$.
    Then
    \begin{equation}
        \Sigma_g \approx M_{\mathrm{v},g}
        = \operatorname{diag}\bigl(
            m_{\mathrm{v}}(u_{g,1}), \ldots, m_{\mathrm{v}}(u_{g,d})
          \bigr),
    \end{equation}
    and 
    \begin{equation}
        \sigma_k^2
        \approx \sum_i \mid T_{ki}\mid ^2 m_{\mathrm{v}}(u_{g,i}).
        \label{eq:diagonal-poisson-variance}
    \end{equation}
\end{proposition}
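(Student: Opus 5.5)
The proof is a direct specialization of \Cref{lem:covariance-propagation} to a diagonal spatial covariance, so the plan has two short steps.

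\textbf{Step 1: the diagonal form of $\Sigma_g$.} Under the independence assumption, all off-diagonal entries $\Cov(n_{g,i},n_{g,j})$ with $i\neq j$ vanish. This is \Cref{eq:poisson-covariance} of \Cref{lem:scaled-poisson-moments} applied to the group entries with their source locations $u_{g,i}$. The diagonal entries are $\Var(n_{g,i})=s_P X(u_{g,i})$. Replacing the unknown clean mean by the surrogate of \Cref{def:poisson-variance-map} turns each of these into $m_{\mathrm v}(u_{g,i})$. Together these give $\Sigma_g\approx M_{\mathrm v,g}$. The only approximation is this plug-in substitution, and that is why the statement uses $\approx$. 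I would point out that $m_{\mathrm v}\ge 0$ because of the positive part, so $M_{\mathrm v,g}$ is a valid covariance matrix.

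\textbf{Step 2: propagating through $T$.} Insert $M_{\mathrm v,g}$ into \Cref{lem:covariance-propagation}, which gives $\Sigma_{N,g}\approx T M_{\mathrm v,g}T^{\H}$. Then take the $k$th diagonal entry using \Cref{eq:transformed-coefficient-variance}. Since $\Cov(n_{g,i},n_{g,j})$ is replaced by $\delta_{ij}\,m_{\mathrm v}(u_{g,i})$, the double sum collapses to $\sum_i T_{ki}\overline{T_{ki}}\,m_{\mathrm v}(u_{g,i})=\sum_i |T_{ki}|^2 m_{\mathrm v}(u_{g,i})$. This is \Cref{eq:diagonal-poisson-variance}, and it holds for complex $T$ because covariance is taken in the Hermitian sense of \Cref{eq:hermitian-covariance}.

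\textbf{Main caveat.} There is no real technical obstacle. The only subtlety is stating clearly what the independence hypothesis means. If overlapping patches repeat a source voxel, two group entries can share the same $u_{g,i}$, and those entries are then perfectly correlated rather than independent. The proposition therefore treats the entries as independent by assumption, in the spirit of \Cref{rem:data-dependent-selection}, and does not derive this from the model. The shared-source covariance model is what handles the repeated-voxel case instead.
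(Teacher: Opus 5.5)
Your proposal is correct and follows the paper's proof. Both replace $\Sigma_g$ by the diagonal plug-in matrix $M_{\mathrm{v},g}$, apply \Cref{lem:covariance-propagation}, and use the vanishing off-diagonal entries to collapse the double sum in \Cref{eq:transformed-coefficient-variance} to $\sum_i |T_{ki}|^2 m_{\mathrm{v}}(u_{g,i})$ (though your remark that the plug-in is the only approximation, and your appeal to \Cref{eq:poisson-covariance} for the zero off-diagonals, hold only when the group's source locations are distinct, since the paper attributes $\Sigma_g \approx M_{\mathrm{v},g}$ to the independence treatment itself, as your repeated-voxel caveat already recognizes).
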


\nomenclature[DAC]{\(M_{\mathrm{v},g}\)}{diagonal group variance-map matrix}
\nomenclature[DAD]{\(u_{g,i}\)}{source sampling location of group entry $i$}

\begin{proof}
    Treating the grouped entries as independent is equivalent to the covariance approximation
    \begin{equation*}
        \Sigma_g \approx M_{\mathrm{v},g}.
    \end{equation*}
    Since $T$ is deterministic and linear, covariance propagation (\Cref{lem:covariance-propagation}) gives
    \begin{equation*}
        \Sigma_{N,g}
        = T \Sigma_g T^\H
        \approx T M_{\mathrm{v},g} T^\H.
    \end{equation*}
    The variance of the $k$th transformed coefficient is the corresponding diagonal entry
    \begin{equation*}
        \begin{split}
            \sigma_k^2
            &\approx (T M_{\mathrm{v},g} T^\H)_{kk}\\
            &= \sum_{i,j}T_{ki}(M_{\mathrm{v},g})_{ij} \overline{T_{kj}}\\
            &= \sum_i \mid T_{ki}\mid ^2m_{\mathrm{v}}(u_{g,i}),
        \end{split}
    \end{equation*}
    because $(M_{\mathrm{v},g})_{ij}=0$ whenever $i\ne j$.
    This is \Cref{eq:diagonal-poisson-variance}.
\end{proof}

This diagonal model is inexpensive, but it forgets where each group entry originated.
That loss matters when patches overlap.
The same source voxel can appear at several positions in one stacked group, and those occurrences share exactly the same noise realization.

Let
\begin{equation}
    \mathcal{U}_g = \{u_{g,i} : i = 1, \ldots, d\}
\end{equation}
be the set of distinct source sampling locations represented in group $g$, and define
\begin{equation}
    \mathcal{I}_g(u) = \{i : u_{g,i} = u\}
\end{equation}
as the set of group entries originating from $u \in \mathcal{U}_g$.

Under the independent-count assumption stated in \Cref{lem:scaled-poisson-moments}, distinct source voxels have zero covariance. Consequently, the group covariance satisfies
\begin{equation}
    (\Sigma_g)_{ij} = 
    \begin{cases}
        m_{\mathrm{v}}(u_{g,i}),
        & u_{g,i} = u_{g,j},\\
        0, 
        & u_{g,i} \neq u_{g,j}.
    \end{cases}
    \label{eq:shared-voxel-group-covariance}
\end{equation}
In the direct-Poisson setting, the unknown variance $\Var(y(u))$ is replaced by the variance-map value $m_{\mathrm{v}}(u)$.

The transformed covariance and the coefficient variance are then obtained from \Cref{lem:covariance-propagation}.
This propagation combines all weights that multiply one underlying random variable before computing covariance, which preserves overlap-induced dependence.
A partial approximation retains this exact structure for only the first selected nonlocal transform planes and uses \Cref{eq:diagonal-poisson-variance} for the remainder

\nomenclature[DAE]{\(\mathcal{U}_g\)}{distinct source sampling locations in group $g$}
\nomenclature[DAF]{\(\mathcal{I}_g(u)\)}{group entries originating from sampling location $u$}

\begin{definition}[Effective transform weights]
    \label{def:effective-weights}
    For each coefficient index $k$ and source voxel $u \in \mathcal{U}_g$, define the effective weight
    \begin{equation}
        t_{ku} = \sum_{i \in \mathcal{I}_g(u)} T_{ki},
        \label{eq:effective-weights}
    \end{equation}
\end{definition}

\nomenclature[DAG]{\(t_{ku}\)}{effective transform weight for coefficient $k$ and source location $u$}

\begin{definition}[Partial shared-source Poisson variance model]
    \label{def:partial-shared-source-variance}
    Let $k_{\mathrm{ex}} \in \{0, \ldots, d\}$ denote the number of leading transform indices for which shared-source covariance propagation is evaluated explicitly, and define $\mathcal{K}_{\mathrm{ex}} = \{j \in \{1, \ldots, d\} : j \leq k_{\mathrm{ex}}\}$.
    The variance used in the threshold \Cref{eq:hard-threshold} is defined as
    \begin{equation}
        \sigma_k^2
        = \begin{cases}
            \displaystyle
            \sum_{u\in\mathcal{U}_g}
            \mid t_{ku}\mid^2\,m_{\mathrm{v}}(u),
            & k\in\mathcal{K}_{\mathrm{ex}},\\[2.5ex]
            \displaystyle
            \sum_{i=1}^{d}
            \mid T_{ki}\mid^2\,m_{\mathrm{v}}(u_{g,i}),
            & k\notin\mathcal{K}_{\mathrm{ex}}.            
        \end{cases}
        \label{eq:partial-poisson-variance}        
    \end{equation}
\end{definition}

\nomenclature[DAH]{\(\mathcal{K}_\mathrm{ex}\)}{set of transform indices for exact variance}

\begin{remark}[Limiting cases]
    If $\mathcal{K}_{\mathrm{ex}}=\emptyset$, \Cref{def:partial-shared-source-variance} reduces to the diagonal approximation \Cref{eq:diagonal-poisson-variance}.
    If $\mathcal{K}_{\mathrm{ex}}=\{1,\dots,d\}$, it coincides with the shared-source model \Cref{eq:shared-voxel-group-covariance} for all coefficients. 
\end{remark}

\begin{definition}[Pilot Group]
    \label{def:pilot-group}
    For a group $g$ with hard-thresholded coefficient vector
    $\CC^{\HT}_g$, the corresponding spatial-domain pilot group is
    \begin{equation}
        \widehat{\xx}^{\HT}_g
        = T^{-1}\CC^{\HT}_g.
        \label{eq:pilot-group}
    \end{equation}
\end{definition}

\nomenclature[DAI]{\(\widehat{\xx}^\HT_g\)}{pilot group}

\begin{definition}[Retained coefficient count]
\label{def:retained-count}
    The number of retained coefficients in a group $g$ after hard thresholding is
    \begin{equation}
        N_g^{\HT}
        = \sum_{k=1}^{d} \mathbbm{1}\!\left\{C_{g,k}^{\HT}\neq 0 \right\}
        = \sum_{k=1}^{d} a_{g,k}
        \label{eq:number-retained-coefficients}
    \end{equation}
\end{definition}

\nomenclature[DAJ]{\(N_g^\HT\)}{retained coefficient count}

This quantity is used in \Cref{sec:aggregation-weights} to define the group-dependent aggregation weight.
\section{Wiener Shrinkage}
\label{sec:wiener}
Hard thresholding makes a binary decision.
Wiener shrinkage instead uses a continuous gain, retaining most of a coefficient when estimated signal power dominates and attenuating it when noise dominates.

Pilot-guided grouping, coefficientwise Wiener gains, and gain-energy aggregation are inherited from \Ac{bm3d}/\Ac{bm4d}~\cite{dabov2007image,maggioni2012nonlocal}.
The scalar minimum-risk Wiener rule is a standard estimation result~\cite{wiener1949extrapolation}.
Coefficient-dependent gains, bias-corrected pilot powers, and covariance-aware risk extend that construction in \Ac{bmnd}.

\begin{definition}[Paired pilot and observation groups]
    \label{def:paired-wiener-groups}
    Let $\widehat{x}^{\HT}$ be the aggregated output of the first stage.
    For a reference origin, block matching on this pilot selects an ordered set of patch origins $\mathcal{I}_g$.
    At exactly those origins, define the paired pilot and observation groups by
    \begin{equation}
        \boldsymbol{\gamma}_g
        = \operatorname{stack}\{\mathfrak{P}_i \widehat{x}^{\HT}: i \in \mathcal{I}_g\},
        \qquad
        \yy_g
        = \operatorname{stack}\{\mathfrak{P}_i y: i \in \mathcal{I}_g\},
        \label{eq:paired-wiener-groups}
    \end{equation}
    where $\mathfrak{P}_i$ is the linear operator that extracts and vectorizes the patch at origin $i$.
    The same ordering and group transform $T_g$ are used for both stacks:
    \begin{equation}
        \boldsymbol{\Gamma}_g = T_g\boldsymbol{\gamma}_g,
        \qquad
        \YY_g = T_g\yy_g.
        \label{eq:paired-wiener-transforms}
    \end{equation}
\end{definition}

\nomenclature[EAA]{\(\widehat{x}^\HT\)}{aggregated pilot array}
\nomenclature[EAB]{\(\mathcal{I}_g\)}{ordered patch origins selected by pilot block matching}
\nomenclature[EAC]{\(\mathfrak{P}_i\)}{linear patch extraction and vectorization operator at origin $i$}
\nomenclature[EAD]{\(\boldsymbol{\gamma}_g\)}{paired pilot group}
\nomenclature[EAE]{\(\boldsymbol{\Gamma}_g\)}{transformed paired pilot group}

\subsection{Scalar Wiener Risk and Gain}
\begin{theorem}[Optimal scalar Wiener gain]
    \label{thm:wiener-gain}
    Let $Y_k = X_k + N_k$, where signal and noise are uncorrelated, with powers
    \begin{equation}
        S_k = \E[|X_k|^2],
        \qquad
        \E[|N_k|^2] = \sigma_k^2.
    \end{equation}
    Among real scalar linear estimates $\widehat{X}_k = G_k Y_k$, the mean-squared error is the risk
    \begin{equation}
        \mathfrak{R}_k(G_k)
        = (1 - G_k)^2 S_k + G_k^2 \sigma_k^2
        \label{eq:scalar-risk}
    \end{equation}
    and is minimized by
    \begin{equation}
        G_k = \frac{S_k}{S_k + \sigma_k^2}.
        \label{eq:scalar-wiener-gain}
    \end{equation}
\end{theorem}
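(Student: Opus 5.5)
The plan is to expand the mean-squared error directly and then minimize a convex quadratic in the real scalar $G_k$.

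First, for $\widehat{X}_k = G_k Y_k$, write the error as
\begin{equation*}
    \widehat{X}_k - X_k = G_k(X_k + N_k) - X_k = -(1-G_k)X_k + G_k N_k .
\end{equation*}
Taking the squared modulus and then the expectation gives
\begin{equation*}
    \E\bigl[|\widehat{X}_k - X_k|^2\bigr]
    = (1-G_k)^2\E[|X_k|^2] + G_k^2\E[|N_k|^2]
      - 2G_k(1-G_k)\,\operatorname{Re}\E\bigl[X_k\overline{N_k}\bigr].
\end{equation*}
Since $G_k$ is real, the factors $G_k$ and $1-G_k$ can be pulled out of the modulus and the conjugation without change.

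Second, the cross term must vanish. Here "uncorrelated" has to be read in the sense of raw second moments, $\E[X_k\overline{N_k}]=0$. This is the only point needing care: it holds automatically when the noise has zero mean, $\E[N_k]=0$, which is the setting of \Cref{def:patch-group-model} via $\NN_g = T\nn_g$, and when $X_k$ is either deterministic or uncorrelated with $N_k$ in the Hermitian sense of \Cref{eq:hermitian-covariance}. Indeed,
\begin{equation*}
    \E\bigl[X_k\overline{N_k}\bigr] = \Cov(X_k,N_k) + \E[X_k]\,\overline{\E[N_k]} = 0 .
\end{equation*}
Substituting $S_k$ and $\sigma_k^2$ then yields \Cref{eq:scalar-risk}.

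Third, minimize the risk. Viewed as a function of $G_k$, $\mathfrak{R}_k$ is a real quadratic with leading coefficient $S_k+\sigma_k^2$. If $S_k+\sigma_k^2>0$, the quadratic is strictly convex, so setting
\begin{equation*}
    \mathfrak{R}_k'(G_k) = -2(1-G_k)S_k + 2G_k\sigma_k^2 = 0
\end{equation*}
gives the unique minimizer \Cref{eq:scalar-wiener-gain}. Completing the square confirms both minimality and the minimal value:
\begin{equation*}
    \mathfrak{R}_k(G_k) = (S_k+\sigma_k^2)\Bigl(G_k - \frac{S_k}{S_k+\sigma_k^2}\Bigr)^2 + \frac{S_k\sigma_k^2}{S_k+\sigma_k^2}.
\end{equation*}
The minimal risk $S_k\sigma_k^2/(S_k+\sigma_k^2)$ is a useful by-product for the later risk-based aggregation weights.

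I do not expect a real obstacle. The one step needing an explicit remark is the vanishing of the cross term, which requires the zero-mean property of the noise. The other is the degenerate case $S_k+\sigma_k^2=0$: then the risk is identically zero, every $G_k$ is optimal, and the formula should be read with a convention such as $G_k=0$.
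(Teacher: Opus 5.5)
Your proof is correct and follows the paper's own argument: expand $|G_kY_k - X_k|^2$ using that $G_k$ is real, drop the cross terms because $\E[X_k\overline{N_k}]=0$, and minimize the resulting quadratic by setting its derivative to zero, with uniqueness failing only when $S_k=\sigma_k^2=0$. Your extra steps are refinements of that same route rather than a different approach: you derive $\E[X_k\overline{N_k}]=0$ from $\E[N_k]=0$ and zero covariance, whereas the paper simply takes ``uncorrelated'' to mean this, and you complete the square to obtain the minimal risk.
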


\nomenclature[EAF]{\(S_k\)}{signal power of coefficient $k$}
\nomenclature[EAG]{\(\widehat{X}_k\)}{linear estimate of clean coefficient $k$}
\nomenclature[EAH]{\(G_k\)}{scalar Wiener gain}
\nomenclature[EAI]{\(\mathfrak{R}_k(G_k)\)}{mean-squared error of coefficient $k$ at gain $G_k$}

\begin{proof}
    The estimation error is
    \begin{equation*}
        G_k Y_k - X_k = (G_k - 1) X_k + G_k N_k.
    \end{equation*}
    Using the mean-squared error criterion, the squared magnitude of this estimation error is taken.
    Because $G_k$ is real, expanding it gives
    \begin{align*}
        |G_k Y_k - X_k|^2
        &= \bigl((G_k - 1) X_k + G_k N_k\bigr)
           \bigl((G_k - 1) \overline{X_k} + G_k \overline{N_k}\bigr)\\
        &= (G_k - 1)^2 |X_k|^2 + G_k^2 |N_k|^2\\
        &\quad+ G_k (G_k - 1) X_k \overline{N_k}
           + G_k (G_k - 1) N_k \overline{X_k}.
    \end{align*}
    Taking expectations therefore yields
    \begin{align*}
        \E[|G_k Y_k - X_k|^2]
        &= (G_k - 1)^2 \E[|X_k|^2]
           +G_k^2 \E[|N_k|^2]\\
        &\quad+ G_k (G_k - 1) \E[X_k \overline{N_k}]\\
        &\quad+ G_k (G_k - 1) \E[N_k \overline{X_k}].
    \end{align*}
    The signal and noise being uncorrelated means $\E[X_k \overline{N_k}] = 0$.
    The other cross moment is its complex conjugate,
    \begin{equation*}
        \E[N_k \overline{X_k}]
        = \overline{\E[X_k \overline{N_k}]} = 0,
    \end{equation*}
    so both cross terms vanish.
    Substituting $\E[|X_k|^2] = S_k$ and $\E[|N_k|^2] = \sigma_k^2$ gives \Cref{eq:scalar-risk}.
    Differentiation then yields
    \begin{equation*}
        \mathfrak{R}_k'(G_k)
        = -2(1 - G_k) S_k + 2 G_k \sigma_k^2.
    \end{equation*}
    Setting this derivative to zero gives \Cref{eq:scalar-wiener-gain}.
    The second derivative is $2(S_k + \sigma_k^2) \geq 0$, with a unique minimizer unless both powers vanish.
\end{proof}

The two terms in \Cref{eq:scalar-risk} expose its bias--variance tradeoff.
Increasing $G_k$ reduces the signal attenuation $(1 - G_k)^2 S_k$ but transmits more noise through $G_k^2 \sigma_k^2$.
Consequently, the oracle gain tends to one as $S_k /\sigma_k^2 \to \infty$ and to zero as $S_k /\sigma_k^2 \to 0$.

\begin{definition}[Practical Wiener gain]
    \label{def:practical-wiener-gain}
    Given a pilot signal-power estimate $\widehat{S}_k$ and a variance scale $s_v \geq 0$ such that $\widehat{S}_k + s_v \sigma_k^2 > 0$, define the variance-scaled gain by
    \begin{equation}
        \widehat{G}_k
        = \frac{\widehat{S}_k}
               {\widehat{S}_k + s_v \sigma_k^2}.
        \label{eq:wiener-gain}
    \end{equation}
    The gain lies in $[0, 1]$.
    The unscaled oracle is recovered when $s_v = 1$ and $\widehat{S}_k = S_k$.
\end{definition}

\nomenclature[EAJ]{\(\widehat{S}_k\)}{pilot signal power estimate}
\nomenclature[EAK]{\(\widehat{G}_k\)}{variance-scaled Wiener gain}
\nomenclature[EAL]{\(s_v\)}{variance scale}

\begin{definition}[Diagonal Wiener group filter]
    \label{def:diagonal-wiener-group-filter}
    For one group, define the complete coefficientwise operation by
    \begin{equation}
        \CC_g^{\WIE}
        = D_g^{\WIE} \YY_g,
        \qquad
        D_g^{\WIE} = \operatorname{diag}(\widehat{G}_{g, 1}, \ldots, \widehat{G}_{g, d}).
        \label{eq:wiener-diagonal-filter}
    \end{equation}
    Notice that the pilot $\boldsymbol{\Gamma}_g$ determines $D_g^{\WIE}$, whereas $D_g^{\WIE}$ multiplies the observation $\YY_g$.
\end{definition}

\nomenclature[EAM]{\(D_g^{\WIE}\)}{diagonal Wiener group filter}

\subsection{Estimating Signal Power from the Pilot}
The pilot is not noise-free, so its raw squared magnitude includes a noise floor.

\begin{lemma}[Pilot-power correction]
    \label{lem:pilot-power-correction}
    If $\Gamma_k = X_k + E_k$, where $E_k$ is zero mean, uncorrelated with $X_k$, and has variance $\nu_k^2$, then
    \begin{equation}
        \E[|\Gamma_k|^2] = S_k + \nu_k^2.
    \end{equation}
\end{lemma}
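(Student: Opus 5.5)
The plan is to expand the squared magnitude of the pilot coefficient exactly as in the proof of \Cref{thm:wiener-gain}, with the estimation error replaced by $\Gamma_k = X_k + E_k$. Writing $|\Gamma_k|^2 = \Gamma_k\overline{\Gamma_k}$ in the Hermitian sense of \Cref{eq:hermitian-covariance}, we obtain
\begin{equation*}
    |\Gamma_k|^2
    = |X_k|^2 + |E_k|^2 + X_k\overline{E_k} + E_k\overline{X_k}.
\end{equation*}
Taking expectations term by term uses only linearity.

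The first term gives $\E[|X_k|^2] = S_k$ by the definition of signal power in \Cref{thm:wiener-gain}. For the second term, since $E_k$ has zero mean, $\E[|E_k|^2] = \E[|E_k - \E[E_k]|^2] = \Var(E_k) = \nu_k^2$.

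For the cross terms, the key point is how "uncorrelated" is read. As in the proof of \Cref{thm:wiener-gain}, it should mean $\E[X_k\overline{E_k}] = 0$. The companion term is then handled by $\E[E_k\overline{X_k}] = \overline{\E[X_k\overline{E_k}]} = 0$. If instead "uncorrelated" is read as vanishing covariance $\Cov(X_k,E_k) = \E[(X_k-\E X_k)\overline{E_k}]$, then the zero mean of $E_k$ gives $\E[X_k\overline{E_k}] = \Cov(X_k,E_k) + \E[X_k]\,\overline{\E[E_k]} = 0$. So the two readings agree. Summing the three contributions yields $\E[|\Gamma_k|^2] = S_k + \nu_k^2$.

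The only step needing care, and the closest thing to an obstacle, is this cross-term argument for complex coefficients (for example Fourier-type $T$, cf.\ \Cref{rem:complex-valued-transforms}). Both mixed moments must vanish, and the zero-mean assumption on $E_k$ is what allows a covariance-type hypothesis to kill the product moment even when $X_k$ has nonzero mean. I would also add a brief remark that the lemma motivates the bias-corrected estimator $\widehat S_k = [|\Gamma_k|^2 - \nu_k^2]_+$.
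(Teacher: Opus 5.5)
Your proposal is correct and follows the paper's proof essentially line by line: expand $|\Gamma_k|^2 = (X_k+E_k)(\overline{X_k}+\overline{E_k})$, identify $\E[|X_k|^2]=S_k$ and $\E[|E_k|^2]=\nu_k^2$ using the zero mean of $E_k$, and eliminate both cross terms via $\E[X_k\overline{E_k}]=0$ and complex conjugation. Your check that the covariance reading of ``uncorrelated'' gives the same conclusion, because $E_k$ has zero mean, spells out the justification that the paper condenses into a single line.
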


\nomenclature[EAN]{\(E_k\)}{residual pilot error}
\nomenclature[EAO]{\(\nu_k^2\)}{residual pilot-error variance of coefficient $k$}

\begin{proof}
    Since $\Gamma_k = X_k + E_k$, its squared magnitude is
    \begin{align*}
        |\Gamma_k|^2
        &= (X_k + E_k) (\overline{X_k} + \overline{E_k})\\
        &= |X_k|^2 + |E_k|^2
           + X_k \overline{E_k} + E_k \overline{X_k}.
    \end{align*}
    Taking expectations gives
    \begin{align*}
        \E[|\Gamma_k|^2]
        &= \E[|X_k|^2] + \E[|E_k|^2]\\
        &\quad + \E[X_k \overline{E_k}]
           + \E[E_k \overline{X_k}].
    \end{align*}
    Because $E_k$ is zero mean and uncorrelated with $X_k$,
    \begin{equation*}
        \E[X_k \overline{E_k}] = 0.
    \end{equation*}
    The other cross moment is its complex conjugate,
    \begin{equation*}
        \E[E_k \overline{X_k}]
        = \overline{\E[X_k \overline{E_k}]} = 0.
    \end{equation*}
    Finally, $\E[|X_k|^2] = S_k$ and, because $E_k$ is zero mean with variance
    $\nu_k^2$, $\E[|E_k|^2] = \nu_k^2$.
    Substitution yields $\E[|\Gamma_k|^2] = S_k + \nu_k^2$.
\end{proof}

\begin{proposition}[Bias before nonnegative truncation]
    \label{prop:pilot-power-bias}
    Under \Cref{lem:pilot-power-correction}, let $b_k \geq 0$ be deterministic and define the untruncated power estimate
    \begin{equation}
        \widetilde{S}_k(b_k) = |\Gamma_k|^2 - b_k.
    \end{equation}
    Its bias is
    \begin{equation}
        \E[\widetilde{S}_k(b_k)] - S_k = \nu_k^2 - b_k.
        \label{eq:untruncated-pilot-power-bias}
    \end{equation}
    Consequently, $|\Gamma_k|^2$ has upward bias $\nu_k^2$,
    $|\Gamma_k|^2 - \sigma_k^2$ is unbiased when $\nu_k^2 = \sigma_k^2$, and $|\Gamma_k|^2 - s_v \sigma_k^2$ is unbiased when $\nu_k^2 = s_v \sigma_k^2$.
\end{proposition}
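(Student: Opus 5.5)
The plan is to reduce the bias to a one-line consequence of \Cref{lem:pilot-power-correction} and linearity of expectation. Because $b_k$ is deterministic, $\E[\widetilde{S}_k(b_k)] = \E[|\Gamma_k|^2] - b_k$. Under the hypotheses of \Cref{lem:pilot-power-correction}, the pilot coefficient decomposes as $\Gamma_k = X_k + E_k$ with $E_k$ zero mean, uncorrelated with $X_k$, and of variance $\nu_k^2$. That lemma therefore gives $\E[|\Gamma_k|^2] = S_k + \nu_k^2$. Substituting and subtracting $S_k$ yields \Cref{eq:untruncated-pilot-power-bias}, namely a bias of $\nu_k^2 - b_k$.

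The three consequences will then be read off by specializing $b_k$. First, $b_k = 0$ gives the raw pilot power $|\Gamma_k|^2$ and a bias of $\nu_k^2 \geq 0$, an upward bias. Second, $b_k = \sigma_k^2$ gives a bias of $\nu_k^2 - \sigma_k^2$, which vanishes exactly when $\nu_k^2 = \sigma_k^2$. Third, $b_k = s_v\sigma_k^2$ gives a bias of $\nu_k^2 - s_v\sigma_k^2$, which vanishes exactly when $\nu_k^2 = s_v\sigma_k^2$. For the second and third choices to be admissible as deterministic offsets, I would note that $\sigma_k^2$ and $s_v$ are treated as fixed. This is consistent with \Cref{rem:data-dependent-selection}: the group is regarded as fixed, so the propagated variance from \Cref{lem:covariance-propagation}, or from \Cref{def:partial-shared-source-variance}, is a deterministic quantity for this computation. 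Since $s_v \geq 0$, the offset $b_k = s_v\sigma_k^2$ is also nonnegative, as the statement requires.

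No step is genuinely hard. The only point needing care is the justification for treating $b_k$ as deterministic when $\sigma_k^2$ is built from a plug-in variance map. I would handle it by stating explicitly that the bias is computed conditional on the group and on the variance model, and I would also remark that the result concerns the untruncated estimate only. Once $[\,\cdot\,]_+$ is applied, Jensen-type effects make the truncated estimator biased upward even when $b_k = \nu_k^2$, so the proposition deliberately stops before truncation.
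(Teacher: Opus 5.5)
Your proposal is correct and follows the paper's proof: linearity with deterministic $b_k$, then $\E[|\Gamma_k|^2] = S_k + \nu_k^2$ from \Cref{lem:pilot-power-correction}, then the three cases $b_k = 0$, $\sigma_k^2$, $s_v\sigma_k^2$. Your added remarks, that $\sigma_k^2$ is treated as fixed given the group and that truncation (since $[z]_+ \ge z$) can only bias the estimate upward, are sound and match the paper's surrounding remarks.
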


\nomenclature[EAP]{\(\widetilde{S}_k\)}{untruncated power estimate}
\nomenclature[EAQ]{\(b_k\)}{signal-power bias correction for coefficient $k$}

\begin{proof}
    \Cref{lem:pilot-power-correction} gives
    \begin{equation*}
        \E[\widetilde{S}_k(b_k)]
        = \E[|\Gamma_k|^2] - b_k
        = S_k + \nu_k^2 - b_k.
    \end{equation*}
    Subtracting $S_k$ proves \Cref{eq:untruncated-pilot-power-bias}.
    The three cases follow by setting $b_k = 0$, $b_k = \sigma_k^2$, and
    $b_k = s_v \sigma_k^2$, respectively.
\end{proof}

\begin{definition}[Pilot-power modes]
    \label{def:pilot-power-modes}
    The classic, noise-floor-corrected, and variance-scaled pilot-power estimates are
    \begin{align}
        \widehat{S}_k^{\mathrm{classic}}
            &= |\Gamma_k|^2
            && \text{(classic)},\label{eq:power-classic}\\
        \widehat{S}_k^{\mathrm{nf}}
            &= [|\Gamma_k|^2 - \sigma_k^2]_+
            && \text{(noise-floor correction)},\label{eq:power-noise-floor}\\
        \widehat{S}_k^{\mathrm{vs}}
            &= [|\Gamma_k|^2 - s_v \sigma_k^2]_+
            && \text{(variance-scaled correction)}.\label{eq:power-variance-scaled}
    \end{align}
    Writing $a = 0$, $1$, or $s_v$ for the classic, noise-floor-corrected, or variance-scaled mode, respectively, all three estimates can be summarized as
    \begin{equation}
        \widehat{S}_k = [|\Gamma_k|^2 - a \sigma_k^2]_+.
        \label{eq:pilot-power-family}
    \end{equation}
\end{definition}

\begin{remark}[Variance proxy and truncation]
The true pilot-error variance $\nu_k^2$ is generally unavailable, so \Cref{def:pilot-power-modes} uses the second-stage coefficient variance $\sigma_k^2$ as a proxy.
The identities $\nu_k^2 = \sigma_k^2$ and $\nu_k^2 = s_v \sigma_k^2$ are therefore models for residual pilot error, not consequences of the Wiener derivation.
The positive part enforces $\widehat{S}_k \geq 0$, but the resulting truncation generally changes the bias stated in \Cref{prop:pilot-power-bias}.
\end{remark}

\begin{proposition}[Corrected pilot-power Wiener gain]
    \label{prop:corrected-wiener-dead-zone}
    For a corrected pilot-power mode with $a > 0$ and $s_v \sigma_k^2 > 0$, substituting \Cref{eq:pilot-power-family} into the practical gain in \Cref{eq:wiener-gain} gives
    \begin{equation}
        \widehat{G}_k =
        \begin{cases}
            0,
                & |\Gamma_k|^2 \leq a \sigma_k^2,\\[3pt]
            \displaystyle
            \frac{|\Gamma_k|^2 - a \sigma_k^2}
                 {|\Gamma_k|^2 - a \sigma_k^2 + s_v \sigma_k^2},
                & |\Gamma_k|^2 > a \sigma_k^2.
        \end{cases}
        \label{eq:corrected-wiener-gain}
    \end{equation}
\end{proposition}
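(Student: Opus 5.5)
The plan is a direct substitution followed by a case split on the sign of $|\Gamma_k|^2 - a\sigma_k^2$, since the positive part in \Cref{eq:pilot-power-family} is the only nonlinearity. Throughout, I fix a coefficient index $k$ and write $\delta_k = |\Gamma_k|^2 - a\sigma_k^2$, so that $\widehat{S}_k = [\delta_k]_+$. Before substituting, I will check that the practical gain in \Cref{def:practical-wiener-gain} is well defined. Its standing condition is $\widehat{S}_k + s_v\sigma_k^2 > 0$. This holds automatically here because $\widehat{S}_k \geq 0$ and $s_v\sigma_k^2 > 0$ by hypothesis. So the denominator in \Cref{eq:wiener-gain} is strictly positive in both cases below.

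\emph{Case $|\Gamma_k|^2 \leq a\sigma_k^2$.} Here $\delta_k \leq 0$, so $\widehat{S}_k = 0$. The denominator of \Cref{eq:wiener-gain} equals $s_v\sigma_k^2 > 0$. Hence $\widehat{G}_k = 0/(s_v\sigma_k^2) = 0$, which is the first branch of \Cref{eq:corrected-wiener-gain}. This is the dead zone.

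\emph{Case $|\Gamma_k|^2 > a\sigma_k^2$.} Here $\delta_k > 0$, so $\widehat{S}_k = \delta_k$. Substituting into \Cref{eq:wiener-gain} gives $\widehat{G}_k = \delta_k/(\delta_k + s_v\sigma_k^2)$, which is the second branch. I may add one remark: in this case the gain lies in $(0,1)$, consistent with \Cref{def:practical-wiener-gain}.

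There is no genuine obstacle here. The only points needing care are the following. First, the hypothesis $s_v\sigma_k^2 > 0$, not merely $a > 0$, is what guarantees a nonzero denominator in the dead-zone case. Second, the boundary value $|\Gamma_k|^2 = a\sigma_k^2$ must be placed in the zero branch. This is consistent because $[0]_+ = 0$, and both formulas give $0$ there anyway. The assumption $a > 0$ is used only to make the dead zone nontrivial. I will note that for $a = 0$ the formula still holds but collapses to the classic gain.
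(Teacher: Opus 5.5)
Your proposal is correct and follows essentially the same route as the paper: a direct substitution of $\widehat{S}_k = [|\Gamma_k|^2 - a\sigma_k^2]_+$ into \Cref{eq:wiener-gain}, with a case split on the sign of $|\Gamma_k|^2 - a\sigma_k^2$, using $s_v\sigma_k^2 > 0$ to keep the denominator nonzero in the dead zone. Your extra remarks on the boundary case and on $a = 0$ are accurate but not needed.
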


\begin{proof}
    If $|\Gamma_k|^2 \leq a \sigma_k^2$, then \Cref{eq:pilot-power-family} gives $\widehat{S}_k = 0$.
    Since $s_v \sigma_k^2 > 0$, \Cref{eq:wiener-gain} then gives $\widehat{G}_k = 0$.
    If $|\Gamma_k|^2 > a \sigma_k^2$, then $\widehat{S}_k = |\Gamma_k|^2 - a \sigma_k^2$.
    Substitution into \Cref{eq:wiener-gain} gives the second branch of \Cref{eq:corrected-wiener-gain}.
\end{proof}

The corrected modes introduce a pilot-dependent dead zone followed by continuous shrinkage.
The classic mode has no positive noise-floor dead zone and can assign a nontrivial gain to residual pilot noise.
Conversely, an inaccurate pilot can place a weak true coefficient inside the corrected dead zone, after which the second stage cannot recover it from $Y_k$.
The power mode, therefore, controls a tradeoff between residual-noise transmission and pilot-induced attenuation.
\section{Aggregation Weights}
\label{sec:aggregation-weights}
Collaborative filtering can produce several estimates for the same voxel, since patches from neighboring reference groups often overlap.
During aggregation, these estimates should not all contribute equally.
Predictions from a less reliable group should generally have less influence than those from a more reliable one.

The reliability weight can be defined at either the group or patch level.
With group-level aggregation, every patch in a matched group receives the same weight.
With patch-level aggregation, individual patches can receive different weights based on their predicted marginal risk.

The same basic idea is used in both filtering stages, although the quantities used to estimate reliability differ.
For hard thresholding, reliability is derived from the hard mask or soft attenuation and, when the variance model is used, from the coefficient noise variances as well.
The Wiener stage uses its gains and may additionally use coefficient noise variances and pilot-based signal power estimates.

These weights should therefore be interpreted as local approximations to estimation precision.
They do not assume that errors from overlapping groups are statistically independent.

Group-level weighted overlap-add and the classic coefficient-domain retained-count and gain-energy weights are inherited from \Ac{bm3d}/\Ac{bm4d}~\cite{dabov2007image,maggioni2012nonlocal}.
Coefficient-domain patch weighting based on propagated variance follows generalized collaborative filtering~\cite{makinen2020collaborative}.
Bias-aware risk and overlap-aware covariance further extend these weighting constructions in \Ac{bmnd}.

\begin{definition}[Weighted overlap-add estimate]
    \label{def:weighted-overlap-add}
    Let $p_{g, j}(u)$ denote the value assigned to voxel $u$ by synthesized patch $j$ of group $g$, and let $\omega_{g, j}(u) \geq 0$ be its aggregation-window value.
    Let $w_{g, j} > 0$ be the reliability weight assigned to that patch, where each inner sum below includes only synthesized patches whose support contains $u$.
    Assume that every reconstructed voxel has positive total aggregation weight:
    \begin{equation}
        \sum_g \sum_j w_{g,j} \omega_{g,j}(u) > 0.
    \end{equation}
    The aggregated estimate is
    \begin{equation}
        \widehat{x}(u)
        = \frac{
            \displaystyle \sum_g \sum_j w_{g, j}
            \omega_{g, j}(u) p_{g, j}(u)
          }{
            \displaystyle \sum_g \sum_j w_{g, j} \omega_{g, j}(u)
          }.
        \label{eq:weighted-overlap-add}
    \end{equation}
\end{definition}

\nomenclature[FAA]{\(p_{g,j}(u)\)}{value assigned to $u$ by synthesized patch $j$ of group $g$}
\nomenclature[FAB]{\(\omega_{g,j}(u)\)}{aggregation-window value of patch $j$ of group $g$ at $u$}
\nomenclature[FAC]{\(w_{g,j}\)}{reliability weight of patch $j$ of group $g$}

\begin{definition}[Aggregation-weight scope]
    \label{def:aggregation-weight-scope}
    In group scope, one positive predicted risk $\widehat{R}_g$ is shared by every synthesized patch in group $g$, so
    \begin{equation}
        w_{g, j} = w_g
        = \frac{1}{\widehat{R}_g}.
        \label{eq:group-scope-weight}
    \end{equation}
    In patch scope, each synthesized patch has a positive marginal predicted risk $\widehat{R}_{g, j}$ and weight
    \begin{equation}
        w_{g, j}
        = \frac{1}{\widehat{R}_{g, j}}.
        \label{eq:patch-scope-weight}
    \end{equation}
\end{definition}

\nomenclature[FAD]{\(\widehat{R}_g\)}{predicted risk of group $g$}

For independent unbiased estimates of one scalar, the standard minimum-variance rule assigns weights proportional to inverse risk~\cite{graybill1959combining}.
Foundational \Ac{bm3d} and \Ac{bm4d} adopt this inverse-risk principle at group level, using the retained coefficient count in the hard-thresholding stage and Wiener gain energy in the second stage as coefficient-domain residual-noise surrogates in place of exact group risks~\cite{dabov2007image,maggioni2012nonlocal}.
Generalized collaborative filtering instead propagates coefficient variance through inverse nonlocal synthesis and assigns the resulting marginal risk to each synthesized patch~\cite{makinen2020collaborative}.
\Ac{bmnd} supports both scopes and extends their risk surrogates as described below.

\begin{remark}[Risk domain]
The predicted group risk $\widehat{R}_g$ or marginal patch risk $\widehat{R}_{g, j}$ may be evaluated either in the coefficient domain or in the windowed synthesized domain.
The corresponding risk constructions are introduced in the following subsections.
Within one aggregation mode, the same risk domain is used consistently for every group.
\end{remark}

This inverse-reliability rule is a local principle rather than an exact global optimality result.
Different groups cover different supports, overlapping groups share noisy voxels, and transform shrinkage generally introduces bias.
The fuller models below make $\widehat{R}_g$ or $\widehat{R}_{g, j}$ a more informative surrogate but do not remove these dependencies.

\subsection{Hard-Thresholding Weighting Models}
The first filtering stage uses either a hard-thresholding mask or, optionally, soft-thresholding attenuation.
These coefficient weights define the classic filter-energy surrogate and, together with the coefficient noise variances, the variance surrogate.

\begin{definition}[Coefficient-domain hard-thresholding weighting models]
    \label{def:hard-thresholding-weighting-models}
    Suppressing the group index $g$, let $a_k$ denote the coefficient mask from \Cref{def:hard-threshold-group-filter} for hard thresholding or the selected attenuation for optional soft thresholding.
    For group scope, the coefficient-domain models are
    \begin{align}
        \widehat{R}_g^{\mathrm{classic}}
        &= \sum_k |a_k|^2,\\
        \widehat{R}_g^{\mathrm{variance}}
        &= \sum_k |a_k|^2 \sigma_k^2.
    \end{align}
    For a hard mask, \Cref{eq:number-retained-coefficients} gives
    \begin{equation}
        \widehat{R}_g^{\mathrm{classic}}
        = \sum_k a_k
        = N_g^{\HT}.
    \end{equation}
    The variance model reduces to the retained variance $\sum_k a_k \sigma_k^2$.
    Under white noise, the latter is $\sigma^2 \sum_k a_k$, and the common factor has no effect after overlap-add normalization.\\
    For patch scope, write the coefficient index as $k = (\ell, q)$, where $\ell$ indexes the nonlocal transform plane and $q$ indexes the spatial-transform coefficient.
    Let $V_g$ denote the inverse nonlocal transform for group $g$, and define
    \begin{align}
        r_{g, \ell}^{\HT, \mathrm{classic}}
        &= \sum_q |a_{\ell, q}|^2,\\
        r_{g, \ell}^{\HT, \mathrm{variance}}
        &= \sum_q |a_{\ell, q}|^2 \sigma_{\ell, q}^2.
    \end{align}
    The marginal surrogate and weight of synthesized patch $j$ are
    \begin{align}
        \widehat{R}_{g, j}^{\HT, \mathrm{model}}
        &= \sum_{\ell} |(V_g)_{j, \ell}|^2
           r_{g, \ell}^{\HT, \mathrm{model}},\\
        w_{g, j}^{\HT, \mathrm{model}}
        &= \frac{1}{\widehat{R}_{g, j}^{\HT, \mathrm{model}}},
        \qquad
        \mathrm{model} \in \{\mathrm{classic}, \mathrm{variance}\}.
    \end{align}
    The weight is defined whenever the selected marginal surrogate is strictly positive.
\end{definition}

\nomenclature[FAE]{\(a_k\)}{hard-thresholding mask or soft-thresholding attenuation of coefficient $k$}
\nomenclature[FAF]{\(V_g\)}{inverse nonlocal transform}

\begin{remark}[No hard-thresholding risk mode]
\Ac{bmnd} does not use a hard-thresholding risk model because no sufficiently reliable independent pilot is available before the first stage to estimate rejected signal power.
\end{remark}

\subsection{Wiener Weighting Models}
\label{subsec:wiener-weighting-models}
The Wiener gains already describe how strongly each noisy coefficient contributes to the filtered group.
Together with the propagated noise variances and selected signal-power estimates, these gains determine the predicted error of the filtered group.

\begin{proposition}[Coefficient-domain Wiener risk]
    \label{prop:diagonal-wiener-risk}
    Let $G_k \in \mathbb{R}$ denote the fixed diagonal Wiener gain for coefficient $k$, and assume that signal and noise are uncorrelated.
    Then the total coefficient-domain mean-squared error is
    \begin{equation}
        R_g^\WIE
        = \sum_k \mathfrak{R}_k(G_k)
        = \sum_k \left(G_k^2 \sigma_k^2 + (1 - G_k)^2 S_k\right).
        \label{eq:diagonal-wiener-risk}
    \end{equation}
    Mutual correlation among the signal coefficients or among the noise coefficients does not change this coefficient-domain diagonal-filter risk.
\end{proposition}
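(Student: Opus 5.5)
The proof reduces to \Cref{thm:wiener-gain}, applied coefficient by coefficient and then summed. Write the filtered group as $\CC_g^{\WIE} = D_g^{\WIE}\YY_g$ with $D_g^{\WIE}=\operatorname{diag}(G_1,\ldots,G_d)$, and the clean coefficients as $\XX_g = T\xx_g$. The total coefficient-domain mean-squared error is $\E\bigl[\|D_g^{\WIE}\YY_g - \XX_g\|^2\bigr]$. Since the squared Euclidean norm of a vector is the sum of the squared moduli of its entries, this equals $\sum_k \E\bigl[|G_k Y_k - X_k|^2\bigr]$ by linearity of expectation. The diagonal structure matters here: the $k$th entry of the error vector involves only $Y_k$ and $X_k$.

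Each summand is then handled exactly as in the proof of \Cref{thm:wiener-gain}. With $G_k$ real and fixed, expand $|(G_k-1)X_k + G_k N_k|^2$. The cross terms are $G_k(G_k-1)\E[X_k\overline{N_k}]$ and its conjugate, and they vanish by the uncorrelatedness assumption. Using $\E[|X_k|^2]=S_k$ and $\E[|N_k|^2]=\sigma_k^2$, the summand is $\mathfrak{R}_k(G_k) = (1-G_k)^2S_k + G_k^2\sigma_k^2$. Summing over $k$ gives \Cref{eq:diagonal-wiener-risk}.

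For the final sentence, the argument is that only same-index second moments $\E[|X_k|^2]$, $\E[|N_k|^2]$ and $\E[X_k\overline{N_k}]$ ever appear. Off-diagonal quantities such as $\E[X_k\overline{X_l}]$ or $(\Sigma_{N,g})_{kl}$ for $k\neq l$ would only enter through cross terms between different entries. Such terms do not arise, because the squared norm decomposes entrywise before any expectation is taken. Hence correlations among signal coefficients, or among noise coefficients (for example off-diagonal entries of $T\Sigma_gT^\H$ from \Cref{lem:covariance-propagation}), leave the risk unchanged.

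The main point of care is a consistent interpretation of the assumptions. The gains must be treated as deterministic, i.e.\ fixed independently of $\YY_g$; otherwise the expectation does not factor as above. I will state this explicitly, in line with \Cref{rem:data-dependent-selection}, by conditioning on the pilot. Also, ``uncorrelated'' needs to be read only as $\E[X_k\overline{N_k}]=0$ for each $k$, which is all the proof requires. Finally, the coefficient-domain risk should not be confused with the spatial-domain risk, which would coincide with it only for unitary $T$ and is not claimed here.
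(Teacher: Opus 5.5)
Your argument is correct and is essentially the paper's proof: both apply the scalar risk identity \Cref{eq:scalar-risk} from \Cref{thm:wiener-gain} to each coefficient and sum, using the fact that the squared Euclidean norm splits into coefficientwise squared magnitudes, which is exactly why correlations between distinct coefficients cannot enter. One minor caution: do not justify fixed gains by conditioning on the pilot, because the pilot is computed from the same noisy data and conditioning on it generally changes $\E[|N_k|^2]$ and $\E[X_k\overline{N_k}]$; instead, take the $G_k$ as fixed by hypothesis, as the statement does.
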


\nomenclature[FAG]{\(R_g^\WIE\)}{coefficient-domain Wiener risk of group $g$}

\begin{proof}
    For coefficient $k$, \Cref{eq:scalar-risk} gives
    \begin{equation*}
        \E[|G_k Y_k - X_k|^2]
        = G_k^2 \sigma_k^2 + (1 - G_k)^2 S_k.
    \end{equation*}
    The squared Euclidean norm is the sum of the coefficientwise squared magnitudes.
    Taking expectations and summing over $k$ proves \Cref{eq:diagonal-wiener-risk}.
    Thus correlations between distinct coefficients do not affect the risk, which depends only on the marginal signal and noise powers.
\end{proof}

Taking $\widehat{R}_g$ in \Cref{eq:group-scope-weight} to be the plug-in coefficient-domain risk from \Cref{prop:diagonal-wiener-risk} yields the full Wiener risk weight below.
Starting from the classic gain-energy weight, the variance form incorporates coefficient-dependent noise variances, and the full risk form additionally accounts for rejected signal energy.

\begin{definition}[Coefficient-domain Wiener weighting models]
    \label{def:wiener-weighting-models}
    For group scope, the coefficient-domain Wiener models are
    \begin{align}
        w_g^{\WIE, \mathrm{classic}}
        &= \frac{1}{\sum_k \widehat{G}_k^2},
        \label{eq:weight-classic}\\
        w_g^{\WIE, \mathrm{variance}}
        &= \frac{1}{\sum_k \widehat{G}_k^2 \sigma_k^2},
        \label{eq:weight-variance}\\
        w_g^{\WIE, \mathrm{risk}}
        &= \frac{1}{
            \sum_k \left(\widehat{G}_k^2 \sigma_k^2 + (1 - \widehat{G}_k)^2 \widehat{S}_k\right)
          }.
        \label{eq:weight-risk}
    \end{align}
    For patch scope, write the coefficient index as $k = (\ell, q)$, where $\ell$ indexes the nonlocal transform plane and $q$ indexes the spatial-transform coefficient.
    Let $V_g$ denote the inverse nonlocal transform for group $g$, and define
    \begin{align}
        r_{g, \ell}^{\WIE, \mathrm{classic}}
        &= \sum_q \widehat{G}_{\ell, q}^2,\\
        r_{g, \ell}^{\WIE, \mathrm{variance}}
        &= \sum_q \widehat{G}_{\ell, q}^2 \sigma_{\ell, q}^2,\\
        r_{g, \ell}^{\WIE, \mathrm{risk}}
        &= \sum_q \left(
            \widehat{G}_{\ell, q}^2 \sigma_{\ell, q}^2
            + (1 - \widehat{G}_{\ell, q})^2 \widehat{S}_{\ell, q}
          \right).
    \end{align}
    The marginal surrogate and weight of synthesized patch $j$ are
    \begin{align}
        \widehat{R}_{g, j}^{\WIE, \mathrm{model}}
        &= \sum_{\ell} |(V_g)_{j, \ell}|^2
           r_{g, \ell}^{\WIE, \mathrm{model}},\\
        w_{g, j}^{\WIE, \mathrm{model}}
        &= \frac{1}{\widehat{R}_{g, j}^{\WIE, \mathrm{model}}},
        \qquad
        \mathrm{model} \in \{\mathrm{classic}, \mathrm{variance}, \mathrm{risk}\}.
    \end{align}
    Each weight is defined whenever its selected marginal surrogate is strictly positive.
\end{definition}

\subsection{Windowed Synthesized-Domain Weighting Models}
\label{subsec:windowed-weighting-models}
The coefficient-domain modes measure error before inverse spatial synthesis and aggregation windowing.
When complete inverse synthesis is unitary and the aggregation window is constant, this metric agrees with synthesized-domain error up to a common factor.
Otherwise, inverse synthesis or a nonconstant window changes the metric in which the reconstructed error is measured.

\begin{definition}[Windowed synthesis metrics]
    \label{def:windowed-synthesis-metrics}
    Let $U_g$ be the complete inverse transform of group $g$.
    Let $W_p$ be the diagonal operator whose entries are the spatial aggregation-window values of one synthesized patch.
    The corresponding group-window operator is
    \begin{equation}
        W_g = \operatorname{blockdiag}(W_p, \ldots, W_p),
    \end{equation}
    with one block for each patch in group $g$.
    Thus, $W_g$ applies the spatial window independently to every synthesized patch.
    Define the linear operator $L_g$ that maps group-transform coefficients to the windowed synthesized group as
    \begin{equation}
        L_g = W_g U_g,
        \qquad
        H_g = L_g^{\H} L_g
            = U_g^{\H} W_g^{\H} W_g U_g.
        \label{eq:group-synthesis-metric}
    \end{equation}
    Let $Q_{g,j}$ extract synthesized patch $j$ from the stacked group.
    The corresponding patch operators are
    \begin{equation}
        L_{g, j} = W_p Q_{g, j} U_g,
        \qquad
        H_{g, j}
        = L_{g, j}^{\H} L_{g, j}
        = U_g^{\H} Q_{g, j}^{\H} W_p^{\H} W_p Q_{g, j} U_g.
        \label{eq:patch-synthesis-metric}
    \end{equation}
    Because the patch extractors select the disjoint blocks of the stacked group, $H_g = \sum_j H_{g, j}$.
    Write $\mathcal{L} = L_g$ and $\mathcal{H} = H_g$ in group scope, or $\mathcal{L} = L_{g, j}$ and $\mathcal{H} = H_{g, j}$ in patch scope.
\end{definition}

\nomenclature[FAH]{\(U_g\)}{complete inverse transform of group $g$}
\nomenclature[FAI]{\(W_g\)}{aggregation-window operator for group $g$}
\nomenclature[FAJ]{\(L_g\)}{windowed synthesis operator for group $g$}
\nomenclature[FAK]{\(H_g\)}{windowed synthesis metric of group $g$}
\nomenclature[FAL]{\(\mathcal{L}\)}{windowed synthesis operator for the selected group or patch scope}
\nomenclature[FAM]{\(\mathcal{H}\)}{windowed synthesis metric for the selected group or patch scope}

\begin{proposition}[Windowed transmitted-noise energy]
    \label{prop:windowed-transmitted-noise}
    Let $F_g$ be a diagonal coefficient filter, and suppose that $\E[\NN_g \NN_g^{\H}] = \Sigma_{N, g}$.
    The transmitted-noise energy in the selected synthesized-domain metric is
    \begin{equation}
        \E\!\left[\|\mathcal{L} F_g \NN_g\|_2^2\right]
        = \operatorname{tr}\!\left(
            \mathcal{H} F_g \Sigma_{N, g} F_g^{\H}
          \right).
        \label{eq:windowed-transmitted-noise}
    \end{equation}
\end{proposition}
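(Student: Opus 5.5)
The plan is a direct trace computation, modeled on the proof of \Cref{lem:covariance-propagation}. Since $F_g$ is deterministic (or, following \Cref{rem:data-dependent-selection}, treated as fixed), the squared Euclidean norm of the complex vector $\mathbf{z} = \mathcal{L} F_g \NN_g$ can be written as
\begin{equation*}
    \|\mathbf{z}\|_2^2 = \mathbf{z}^{\H}\mathbf{z} = \operatorname{tr}\!\left(\mathbf{z}\mathbf{z}^{\H}\right).
\end{equation*}
The second form is a scalar equal to the trace of a rank-one matrix. Substituting $\mathbf{z}$ gives
\begin{equation*}
    \|\mathcal{L} F_g \NN_g\|_2^2 = \operatorname{tr}\!\left(\mathcal{L} F_g \NN_g \NN_g^{\H} F_g^{\H} \mathcal{L}^{\H}\right).
\end{equation*}

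Next I take expectations. Trace and expectation are both linear, and $\mathcal{L}$ and $F_g$ are deterministic, so the expectation passes inside to act only on $\NN_g \NN_g^{\H}$. By hypothesis this gives $\Sigma_{N,g}$, and the expected energy becomes $\operatorname{tr}(\mathcal{L} F_g \Sigma_{N,g} F_g^{\H} \mathcal{L}^{\H})$. I would note that only the second-moment matrix is used. Zero mean is not required, although under \Cref{def:patch-group-model} it coincides with the covariance.

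Finally, cyclic invariance of the trace moves $\mathcal{L}^{\H}$ to the front, and the product $\mathcal{L}^{\H}\mathcal{L}$ is $\mathcal{H}$ by \Cref{eq:group-synthesis-metric} in group scope or \Cref{eq:patch-synthesis-metric} in patch scope. This yields \Cref{eq:windowed-transmitted-noise}. The argument does not use diagonality of $F_g$, so I would remark that it holds for any deterministic coefficient filter. As a consistency check, in group scope summing the patch-scope identities over $j$ recovers the group-scope one, because $H_g = \sum_j H_{g,j}$.

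There is no real obstacle here. The only points needing care are the order of the conjugate transposes in the complex case and justifying the interchange of expectation with trace. The latter holds because the entries of $\NN_g$ have finite second moments, so every entry of $\NN_g\NN_g^{\H}$ is integrable.
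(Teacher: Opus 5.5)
Your proposal is correct and follows essentially the same route as the paper. You write the squared norm as the trace of the outer product, move the expectation inside to obtain $\Sigma_{N,g}$, and apply cyclic invariance of the trace to form $\mathcal{H} = \mathcal{L}^{\mathrm{H}}\mathcal{L}$; your side remarks are also accurate, namely that diagonality of $F_g$ and zero mean are not needed and that the patch-scope identities sum to the group-scope one.
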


\nomenclature[FAN]{\(F_g\)}{diagonal coefficient filter}

\begin{proof}
    The synthesized noise contribution is $\mathcal{L} F_g \NN_g$.
    Writing its squared norm as a trace gives
    \begin{align*}
        \|\mathcal{L} F_g \NN_g\|_2^2
        &= (\mathcal{L} F_g \NN_g)^{\H}
           (\mathcal{L} F_g \NN_g)\\
        &= \operatorname{tr}\!\left(
            \mathcal{L} F_g \NN_g \NN_g^{\H}
            F_g^{\H} \mathcal{L}^{\H}
          \right).
    \end{align*}
    Taking expectations and using $\E[\NN_g \NN_g^{\H}] = \Sigma_{N, g}$ yields
    \begin{equation*}
        \E\!\left[\|\mathcal{L} F_g \NN_g\|_2^2\right]
        = \operatorname{tr}\!\left(
            \mathcal{L} F_g \Sigma_{N, g}
            F_g^{\H} \mathcal{L}^{\H}
          \right).
    \end{equation*}
    By cyclic invariance of the trace,
    \begin{align*}
        \operatorname{tr}\!\left(
            \mathcal{L} F_g \Sigma_{N, g}
            F_g^{\H} \mathcal{L}^{\H}
          \right)
        &= \operatorname{tr}\!\left(
            \mathcal{L}^{\H} \mathcal{L}
            F_g \Sigma_{N, g} F_g^{\H}
          \right)\\
        &= \operatorname{tr}\!\left(
            \mathcal{H} F_g \Sigma_{N, g} F_g^{\H}
          \right),
    \end{align*}
    where $\mathcal{H} = \mathcal{L}^{\H} \mathcal{L}$ by \Cref{def:windowed-synthesis-metrics}.
    This is \Cref{eq:windowed-transmitted-noise}.
\end{proof}

\begin{proposition}[Windowed synthesized-domain Wiener risk]
    \label{prop:windowed-wiener-risk}
    Let $D_g = \operatorname{diag}(G_1, \ldots, G_d)$ be the Wiener filtering operator.
    If
    \begin{equation}
        \E[\XX_g \XX_g^{\H}] = \Sigma_{X, g},
        \qquad
        \E[\NN_g \NN_g^{\H}] = \Sigma_{N, g},
        \qquad
        \E[\XX_g \NN_g^{\H}] = 0,
    \end{equation}
    then the Wiener risk in either selected synthesized-domain metric is
    \begin{equation}
        R^{\mathrm{win}}(\mathcal{H})
        = \operatorname{tr}(\mathcal{H} D_g \Sigma_{N, g} D_g^{\H})
          + \operatorname{tr}\!\left(
            \mathcal{H} (D_g - I) \Sigma_{X, g} (D_g - I)^{\H}
          \right).
        \label{eq:windowed-wiener-risk}
    \end{equation}
\end{proposition}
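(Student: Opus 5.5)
The plan is to decompose the synthesized-domain estimation error into a noise part and a signal-attenuation part, and then apply the trace identity from \Cref{prop:windowed-transmitted-noise} to each.

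\textbf{Error decomposition.} The filtered coefficients are $D_g\YY_g = D_g\XX_g + D_g\NN_g$. The windowed synthesized estimate is therefore $\mathcal{L}D_g\YY_g$, while the windowed clean target is $\mathcal{L}\XX_g$. The synthesized error is
\begin{equation*}
    \mathcal{L}(D_g\YY_g - \XX_g) = \mathcal{L}D_g\NN_g + \mathcal{L}(D_g - I)\XX_g .
\end{equation*}
Expanding the squared norm gives two quadratic terms and two cross terms, which are complex conjugates of each other:
\begin{equation*}
    \|\mathcal{L}D_g\NN_g\|_2^2 + \|\mathcal{L}(D_g-I)\XX_g\|_2^2 + 2\,\mathrm{Re}\bigl[(\mathcal{L}(D_g-I)\XX_g)^{\H}\mathcal{L}D_g\NN_g\bigr].
\end{equation*}

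\textbf{Quadratic terms.} Taking expectations, the noise term equals $\operatorname{tr}(\mathcal{H}D_g\Sigma_{N,g}D_g^{\H})$ directly by \Cref{prop:windowed-transmitted-noise} with $F_g = D_g$. The signal term is handled by the same trace argument with $F_g = D_g - I$ and second-moment matrix $\Sigma_{X,g}$. That argument only used $\E[\NN_g\NN_g^{\H}]$ and linearity, not the zero mean or the diagonality of $F_g$ in any essential way, so it applies verbatim.

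\textbf{Cross term.} I write the cross term as a trace, $\operatorname{tr}\bigl(\mathcal{L}D_g\NN_g\XX_g^{\H}(D_g-I)^{\H}\mathcal{L}^{\H}\bigr)$. Because $D_g$ and $\mathcal{L}$ are treated as deterministic, I pull the expectation inside to obtain $\E[\NN_g\XX_g^{\H}] = (\E[\XX_g\NN_g^{\H}])^{\H} = 0$. The cross term therefore vanishes, and summing the remaining two traces yields \Cref{eq:windowed-wiener-risk}.

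\textbf{Main obstacle.} The only real subtlety is that in the algorithm the gains depend on the pilot and are therefore random. I will state explicitly, as in \Cref{rem:data-dependent-selection} and \Cref{prop:diagonal-wiener-risk}, that $D_g$ is treated as fixed (conditionally on the pilot), so that it can be moved outside the expectations. The argument holds equally in group scope and in patch scope, since only $\mathcal{H} = \mathcal{L}^{\H}\mathcal{L}$ enters the final expression.
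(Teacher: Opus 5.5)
Your proposal is correct and is essentially the paper's argument: both split the error as $(D_g-I)\XX_g + D_g\NN_g$, discard the cross terms using $\E[\NN_g\XX_g^{\H}] = (\E[\XX_g\NN_g^{\H}])^{\H} = 0$, and pass to $\mathcal{H} = \mathcal{L}^{\H}\mathcal{L}$ by cyclic invariance of the trace. The only difference is bookkeeping: the paper first forms the coefficient-domain error second moment $\E[\mathbf{e}_g\mathbf{e}_g^{\H}]$ and applies $\operatorname{tr}(\mathcal{H}\,\cdot\,)$ once, whereas you expand the scalar squared norm and reuse \Cref{prop:windowed-transmitted-noise} for each quadratic term.
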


\nomenclature[FAO]{\(R^{\mathrm{win}}(\mathcal{H})\)}{windowed synthesized-domain Wiener risk in metric $\mathcal{H}$}

\begin{proof}
    The coefficient error is
    \begin{equation*}
        \mathbf{e}_g
        = D_g \YY_g - \XX_g
        = (D_g - I) \XX_g + D_g \NN_g.
    \end{equation*}
    Its outer product expands as
    \begin{align*}
        \mathbf{e}_g \mathbf{e}_g^{\H}
        &= (D_g - I) \XX_g \XX_g^{\H} (D_g - I)^{\H}
           + D_g \NN_g \NN_g^{\H} D_g^{\H}\\
        &\quad
           + (D_g - I) \XX_g \NN_g^{\H} D_g^{\H}
           + D_g \NN_g \XX_g^{\H} (D_g - I)^{\H}.
    \end{align*}
    The assumption $\E[\XX_g \NN_g^{\H}] = 0$ also implies $\E[\NN_g \XX_g^{\H}] = 0$ by conjugate transposition.
    Consequently, both cross terms vanish after taking expectations, and
    \begin{equation*}
        \E[\mathbf{e}_g \mathbf{e}_g^{\H}]
        = (D_g - I) \Sigma_{X, g} (D_g - I)^{\H}
          + D_g \Sigma_{N, g} D_g^{\H}.
    \end{equation*}
    The synthesized-domain risk is
    \begin{align*}
        \E[\|\mathcal{L} \mathbf{e}_g\|_2^2]
        &= \operatorname{tr}\!\left(
            \mathcal{L}
            \E[\mathbf{e}_g \mathbf{e}_g^{\H}]
            \mathcal{L}^{\H}
          \right)\\
        &= \operatorname{tr}\!\left(
            \mathcal{H}
            \E[\mathbf{e}_g \mathbf{e}_g^{\H}]
          \right),
    \end{align*}
    where the second equality uses cyclic invariance of the trace and $\mathcal{H} = \mathcal{L}^{\H} \mathcal{L}$.
    Substituting $\E[\mathbf{e}_g \mathbf{e}_g^{\H}]$ and using linearity of the trace gives
    \begin{align*}
        \E[\|\mathcal{L} \mathbf{e}_g\|_2^2]
        &= \operatorname{tr}\!\left(
            \mathcal{H} (D_g - I)
            \Sigma_{X, g} (D_g - I)^{\H}
          \right)\\
        &\quad
           + \operatorname{tr}\!\left(
            \mathcal{H} D_g \Sigma_{N, g} D_g^{\H}
          \right),
    \end{align*}
    which is \Cref{eq:windowed-wiener-risk}.
\end{proof}

\begin{definition}[Windowed synthesized-domain weighting models]
    \label{def:windowed-weighting-models}
    Let $F_g = A_g$ for hard thresholding and $F_g = D_g^{\WIE}$ for Wiener filtering.
    The classic and variance models in either stage are
    \begin{align}
        \widehat{R}^{\mathrm{classic}}(\mathcal{H})
        &= \operatorname{tr}\!\left(
            \mathcal{H} F_g F_g^{\H}
          \right),\\
        \widehat{R}^{\mathrm{variance}}(\mathcal{H})
        &= \operatorname{tr}\!\left(
            \mathcal{H} F_g
            \widehat{\Sigma}_{N, g} F_g^{\H}
          \right).
        \label{eq:windowed-shared-weight-models}
    \end{align}
    The Wiener stage additionally admits the risk model
    \begin{align}
        \widehat{R}^{\mathrm{risk}}(\mathcal{H})
        &= \operatorname{tr}\!\left(
            \mathcal{H} D_g^{\WIE} \widehat{\Sigma}_{N, g} (D_g^{\WIE})^{\H}
          \right)
          + \operatorname{tr}\!\left(
            \mathcal{H} (D_g^{\WIE} - I)
            \operatorname{diag}(\widehat{S}_k) (D_g^{\WIE} - I)^{\H}
          \right).
        \label{eq:windowed-weight-models}
    \end{align}
    Here $\widehat{\Sigma}_{N, g}$ denotes the selected diagonal or covariance-aware noise model, and $\widehat{S}_k$ denotes the selected Wiener signal-power estimate.
    Each selected surrogate is converted to a weight by \Cref{eq:group-scope-weight,eq:patch-scope-weight}.
\end{definition}

\begin{remark}[Diagonal signal model]
\label{rem:diagonal-signal-model}
The practical Wiener risk approximates the transformed clean-group second moment by $\operatorname{diag}(\widehat{S}_k)$.
\Ac{dct} and Haar transforms often concentrate signal energy and can reduce cross-coefficient dependence over an ensemble of clean groups, which motivates this approximation.
Transform orthogonality alone does not imply vanishing cross-moments, however, so the approximation is generally not exact.
Unlike the coefficient-domain risk in \Cref{prop:diagonal-wiener-risk}, off-diagonal signal moments can affect the risk when the selected metric or filtering operator is non-diagonal.
\end{remark}

When $U_g$ is unitary and $W_g = I$, then $H_g = I$, so the first two models reduce to their coefficient-domain forms.
Under the same conditions, \Cref{eq:windowed-wiener-risk} reduces to \Cref{eq:diagonal-wiener-risk} when the diagonals of $\Sigma_{X,g}$ and $\Sigma_{N,g}$ are $S_k$ and $\sigma_k^2$.

\section{Aggregation-Aware Mass Conservation}
\label{sec:mass-conservation}
The aggregation weights in \Cref{sec:aggregation-weights} control how overlapping estimates are combined, but they do not ensure that the reconstructed array preserves the observed total intensity.
Under the scaled-Poisson model, this total is proportional to the number of detected events and can be altered by shrinkage.
The contribution of an individual group to the reconstructed total depends on its reliability weights, aggregation window, and pointwise overlap denominator.
\Ac{bmnd} introduces an optional aggregation-aware constraint under which every filtered group reproduces the normalized overlap-add contribution of its noisy source group.
The constraint is enforced by a transform-domain correction that adds the same offset to every entry of a synthesized group.

The stage index is suppressed below, with $\CC_g$ denoting either $\CC_g^{\HT}$ or $\CC_g^{\WIE}$.
\begin{definition}[Normalized aggregation functional]
    \label{def:normalized-aggregation-functional}
    Following \Cref{def:weighted-overlap-add}, denote the overlap-add denominator by
    \begin{equation}
        \eta(u)
        = \sum_g \sum_j w_{g,j} \omega_{g,j}(u),
        \label{eq:dc-aggregation-denominator}
    \end{equation}
    and define the normalized patch influences at locations with $\eta(u) > 0$ by
    \begin{equation}
        \zeta_{g,j}(u)
        = \frac{w_{g,j} \omega_{g,j}(u)}{\eta(u)}.
        \label{eq:dc-normalized-influence}
    \end{equation}
    Let $\boldsymbol{\zeta}_g$ contain these influences in the stacking order of the synthesized group.
    Define the aggregation-aware coefficient-space mass functional by
    \begin{equation}
        \hh_g
        = U_g^\top \boldsymbol{\zeta}_g.
        \label{eq:dc-aggregation-functional}
    \end{equation}
\end{definition}

The contribution of group $g$ to the reconstructed total is therefore
\begin{equation}
    \boldsymbol{\zeta}_g^\top U_g \CC_g
    = \hh_g^\top \CC_g.
    \label{eq:dc-group-contribution}
\end{equation}

\nomenclature[GAA]{\(\eta(u)\)}{total aggregation weight at sampling location $u$}
\nomenclature[GAB]{\(\zeta_{g,j}(u)\)}{normalized influence of patch $j$ of group $g$ at $u$}
\nomenclature[GAC]{\(\boldsymbol{\zeta}_g\)}{stacked normalized aggregation influences for group $g$}
\nomenclature[GAD]{\(\hh_g\)}{aggregation-aware coefficient-space mass functional}

\begin{definition}[Aggregation-aware constant-synthesis correction]
    \label{def:aggregation-aware-dc-correction}
    Using the observation group $\yy_g$, define the residual between its normalized contribution and that of the filtered group by
    \begin{equation}
        d_g
        = \boldsymbol{\zeta}_g^\top \yy_g
          - \hh_g^\top \CC_g.
        \label{eq:dc-mass-residual}
    \end{equation}
    Define the coefficient direction that synthesizes an all-ones group by
    \begin{equation}
        \mathbf{m}_g
        = U_g^{-1} \mathbf{1},
        \qquad
        U_g \mathbf{m}_g = \mathbf{1}.
        \label{eq:dc-constant-synthesis-direction}
    \end{equation}
    For a group with nonzero normalized influence, define its corrected coefficients by
    \begin{equation}
        \CC_g^{\mathrm{mc}}
        = \CC_g
          + \frac{d_g}{
              \hh_g^\top \mathbf{m}_g
            }
            \mathbf{m}_g
        = \CC_g
          + \frac{d_g}{
              \boldsymbol{\zeta}_g^\top \mathbf{1}
            }
            \mathbf{m}_g.
        \label{eq:aggregation-aware-dc-correction}
    \end{equation}
\end{definition}

\begin{remark}[Relation to DC coefficients]
    The correction is constant along both the patch axes and the nonlocal group axis.
    For a separable orthonormal \ac{dct} applied along all these axes, it modifies only the all-axis DC coefficient.
    For other invertible transforms, including biorthogonal transforms, a constant group may require several coefficients.
    The direction $\mathbf{m}_g = U_g^{-1} \mathbf{1}$ specifies the same synthesized correction independently of this coefficient representation.
\end{remark}

\nomenclature[GAE]{\(d_g\)}{missing normalized mass contribution of group $g$}
\nomenclature[GAF]{\(\mathbf{m}_g\)}{coefficient direction that synthesizes a constant group}

\begin{proposition}[Exactness of the constant-synthesis correction]
    \label{prop:aggregation-aware-dc-correction}
    Suppose group $g$ has nonzero normalized influence.
    Then the denominator in \Cref{eq:aggregation-aware-dc-correction} is strictly positive, and $\CC_g^{\mathrm{mc}}$ is the unique coefficient vector of the form $\CC_g + \alpha \mathbf{m}_g$, with scalar $\alpha$, satisfying
    \begin{equation}
        \boldsymbol{\zeta}_g^\top
        U_g \CC_g^{\mathrm{mc}}
        = \boldsymbol{\zeta}_g^\top \yy_g.
        \label{eq:dc-group-constraint}
    \end{equation}
\end{proposition}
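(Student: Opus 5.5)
The argument reduces \Cref{eq:dc-group-constraint} to a scalar linear equation in $\alpha$ along the line $\CC_g + \alpha\mathbf{m}_g$, using only the definitions of $\hh_g$ and $\mathbf{m}_g$ from \Cref{def:normalized-aggregation-functional,def:aggregation-aware-dc-correction}.

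First, I would show the denominator is positive and that the two expressions in \Cref{eq:aggregation-aware-dc-correction} agree. From $\hh_g = U_g^\top\boldsymbol{\zeta}_g$ and $U_g\mathbf{m}_g = \mathbf{1}$,
\begin{equation*}
    \hh_g^\top\mathbf{m}_g
    = \boldsymbol{\zeta}_g^\top U_g\mathbf{m}_g
    = \boldsymbol{\zeta}_g^\top\mathbf{1}
    = \sum_j\sum_u \zeta_{g,j}(u).
\end{equation*}
Each $\zeta_{g,j}(u) = w_{g,j}\omega_{g,j}(u)/\eta(u)$ is nonnegative, because $w_{g,j}>0$, $\omega_{g,j}(u)\ge 0$ and $\eta(u)>0$ on the support. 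The hypothesis of nonzero normalized influence means at least one entry is positive, so the sum is strictly positive. The transpose $U_g^\top$ is not conjugated here; this is consistent because $\boldsymbol{\zeta}_g$ is real and the pairing is bilinear. I would remark that for complex $U_g$ the identity still holds algebraically, since $\mathbf{m}_g$ is defined through $U_g^{-1}$.

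Second, for an arbitrary scalar $\alpha$ I would compute, by linearity,
\begin{equation*}
    \boldsymbol{\zeta}_g^\top U_g(\CC_g+\alpha\mathbf{m}_g)
    = \hh_g^\top\CC_g + \alpha\,\boldsymbol{\zeta}_g^\top\mathbf{1}.
\end{equation*}
The constraint \Cref{eq:dc-group-constraint} therefore becomes $\alpha\,\boldsymbol{\zeta}_g^\top\mathbf{1} = \boldsymbol{\zeta}_g^\top\yy_g - \hh_g^\top\CC_g = d_g$. Since the coefficient $\boldsymbol{\zeta}_g^\top\mathbf{1}$ is nonzero, this affine equation in $\alpha$ has exactly one solution, $\alpha = d_g/\boldsymbol{\zeta}_g^\top\mathbf{1}$. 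This value is precisely the coefficient appearing in \Cref{eq:aggregation-aware-dc-correction}, which gives both existence and uniqueness.

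The only delicate step is the positivity argument. It requires fixing what "nonzero normalized influence" means, namely that some stacked entry of $\boldsymbol{\zeta}_g$ is nonzero, and noting that all entries share a sign. Everything else is direct substitution.
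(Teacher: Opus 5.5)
Your proposal is correct and follows essentially the same route as the paper's proof. Both arguments identify $\hh_g^\top\mathbf{m}_g = \boldsymbol{\zeta}_g^\top U_g\mathbf{m}_g = \boldsymbol{\zeta}_g^\top\mathbf{1}$, obtain its strict positivity from the nonnegative normalized influences with at least one positive entry, and reduce \Cref{eq:dc-group-constraint} along $\CC_g + \alpha\mathbf{m}_g$ to the scalar equation $\alpha\,\hh_g^\top\mathbf{m}_g = d_g$, whose unique solution is the coefficient in \Cref{eq:aggregation-aware-dc-correction}.
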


\begin{proof}
    A coefficient correction of the form $\alpha \mathbf{m}_g$ adds a constant offset to the synthesized group because
    \begin{equation*}
        U_g (\CC_g + \alpha \mathbf{m}_g)
        = U_g \CC_g + \alpha \mathbf{1}.
    \end{equation*}
    Its effect on the normalized mass contribution is determined by
    \begin{equation*}
        \hh_g^\top \mathbf{m}_g
        = \boldsymbol{\zeta}_g^\top U_g \mathbf{m}_g
        = \boldsymbol{\zeta}_g^\top \mathbf{1},
    \end{equation*}
    using \Cref{eq:dc-aggregation-functional,eq:dc-constant-synthesis-direction}.
    The normalized influences are nonnegative, and at least one is positive by assumption.
    Hence $\hh_g^\top \mathbf{m}_g > 0$, so the denominator in \Cref{eq:aggregation-aware-dc-correction} is nonzero.

    Substituting $\CC_g + \alpha \mathbf{m}_g$ into the group constraint gives
    \begin{align*}
        \boldsymbol{\zeta}_g^\top U_g
        (\CC_g + \alpha \mathbf{m}_g)
        &= \boldsymbol{\zeta}_g^\top \yy_g, \\
        \hh_g^\top \CC_g
        + \alpha \hh_g^\top \mathbf{m}_g
        &= \boldsymbol{\zeta}_g^\top \yy_g.
    \end{align*}
    With the residual $d_g$ from \Cref{eq:dc-mass-residual}, this is equivalent to
    \begin{equation*}
        \alpha \hh_g^\top \mathbf{m}_g = d_g.
    \end{equation*}

    Substituting $\CC_g + \alpha \mathbf{m}_g$ into \Cref{eq:dc-group-constraint} gives
    \begin{equation*}
        \boldsymbol{\zeta}_g^\top U_g
        (\CC_g + \alpha \mathbf{m}_g)
        = \boldsymbol{\zeta}_g^\top \yy_g.
    \end{equation*}
    By linearity, the left-hand side separates into the contribution of the filtered group and that of the correction:
    \begin{equation*}
        \boldsymbol{\zeta}_g^\top U_g \CC_g
        + \alpha \boldsymbol{\zeta}_g^\top U_g \mathbf{m}_g
        = \boldsymbol{\zeta}_g^\top \yy_g.
    \end{equation*}
    Using $\hh_g^\top = \boldsymbol{\zeta}_g^\top U_g$ from \Cref{eq:dc-aggregation-functional}, this becomes
    \begin{equation*}
        \hh_g^\top \CC_g
        + \alpha \hh_g^\top \mathbf{m}_g
        = \boldsymbol{\zeta}_g^\top \yy_g.
    \end{equation*}
    Subtracting the current contribution $\hh_g^\top \CC_g$ from both sides yields
    \begin{equation*}
        \alpha \hh_g^\top \mathbf{m}_g
        = \boldsymbol{\zeta}_g^\top \yy_g - \hh_g^\top \CC_g
        = d_g,
    \end{equation*}
    where the last equality follows from \Cref{eq:dc-mass-residual}.
    Thus, the correction must supply exactly the difference between the observed and filtered group contributions.

    Since $\hh_g^\top \mathbf{m}_g > 0$, this equation has the unique solution
    \begin{equation*}
        \alpha = \frac{d_g}{\hh_g^\top \mathbf{m}_g}.
    \end{equation*}
    The resulting coefficients are exactly those in \Cref{eq:aggregation-aware-dc-correction}.
\end{proof}

Although the correction is constant within each group, it can vary spatially after aggregation because the normalized weights differ between patch occurrences.
For complex-valued transforms, let $\mathcal{S}_g$ be the coefficient subspace corresponding to real-valued groups, as defined in \Cref{rem:complex-valued-transforms}.
If $\CC_g \in \mathcal{S}_g$, the correction amplitude is real and $\mathbf{m}_g = U_g^{-1} \mathbf{1} \in \mathcal{S}_g$, so the corrected coefficients also synthesize a real-valued group in exact arithmetic.

\begin{theorem}[Mass conservation after normalized overlap-add]
    \label{thm:aggregation-aware-dc-global-conservation}
    Suppose every sampling location is covered, the reliability weights and denominator in \Cref{eq:dc-aggregation-denominator} are fixed, and every contributing group satisfies \Cref{eq:dc-group-constraint}.
    Then the weighted overlap-add estimate in \Cref{def:weighted-overlap-add}, formed from the corrected groups, satisfies
    \begin{equation}
        \sum_{u \in \Omega} \widehat{x}(u)
        = \sum_{u \in \Omega} y(u).
        \label{eq:aggregation-aware-dc-global-conservation}
    \end{equation}
\end{theorem}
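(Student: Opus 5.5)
The plan is to sum the weighted overlap-add formula of \Cref{def:weighted-overlap-add} over $\Omega$, exchange the order of summation so that the total splits into a sum of group contributions, and then apply the per-group constraint \Cref{eq:dc-group-constraint} to each group. Since every sampling location is covered, $\eta(u)>0$ for all $u\in\Omega$. The normalized influences $\zeta_{g,j}(u)$ from \Cref{eq:dc-normalized-influence} are therefore defined everywhere, and we can rewrite
\begin{equation*}
    \widehat{x}(u) = \sum_g\sum_j \zeta_{g,j}(u)\, p^{\mathrm{mc}}_{g,j}(u),
\end{equation*}
where $p^{\mathrm{mc}}_{g,j}$ denotes patch $j$ of the corrected synthesized group $U_g\CC_g^{\mathrm{mc}}$. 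The assumption that the weights and denominator are fixed is essential here: $\eta$ and $\zeta_{g,j}$ are the same quantities used to define $\boldsymbol{\zeta}_g$, so the correction does not change them.

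\textbf{Step 1 (regrouping the total).} Sum over $u\in\Omega$ and interchange the finite sums. Each term $\zeta_{g,j}(u)p^{\mathrm{mc}}_{g,j}(u)$ is indexed by a pair (patch occurrence, voxel in its support). Under the stacking convention, these pairs correspond one-to-one with the entries of the stacked group vector. Hence
\begin{equation*}
    \sum_u \widehat{x}(u) = \sum_g \boldsymbol{\zeta}_g^\top U_g\CC_g^{\mathrm{mc}}.
\end{equation*}

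\textbf{Step 2 (applying the constraint).} By \Cref{prop:aggregation-aware-dc-correction}, each contributing group satisfies $\boldsymbol{\zeta}_g^\top U_g\CC_g^{\mathrm{mc}} = \boldsymbol{\zeta}_g^\top\yy_g$. Substituting gives
\begin{equation*}
    \sum_u \widehat{x}(u) = \sum_g \boldsymbol{\zeta}_g^\top \yy_g.
\end{equation*}
Groups with zero normalized influence contribute zero to both sides, so they can be dropped.

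\textbf{Step 3 (reverse regrouping).} Since $\yy_g$ stacks the extracted noisy patches, its entry at the position of voxel $u$ in patch $j$ is $y(u)$. Undoing the regrouping of Step 1 gives
\begin{equation*}
    \sum_g \boldsymbol{\zeta}_g^\top\yy_g = \sum_u y(u)\sum_g\sum_j \zeta_{g,j}(u).
\end{equation*}
For every covered $u$, the inner sum equals $\eta(u)/\eta(u)=1$ by \Cref{eq:dc-aggregation-denominator}. This yields \Cref{eq:aggregation-aware-dc-global-conservation}.

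The main obstacle is the bookkeeping in Steps 1 and 3. One must verify that the entries of the stacked group vector correspond exactly to (patch, voxel) pairs with the patch indexing consistent, and that the sums over $j$ range only over patches whose support contains $u$; outside the support, $\omega_{g,j}(u)=0$ and so $\zeta_{g,j}(u)=0$. In particular, the same voxel may appear in several patches of one group, and each appearance must be counted separately in both $\eta$ and $\boldsymbol{\zeta}_g$. If the window $\omega_{g,j}$ is carried in $\boldsymbol{\zeta}_g$ rather than in $U_g$, no double windowing occurs; I would check this convention against \Cref{def:normalized-aggregation-functional}.
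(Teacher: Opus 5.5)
Your proof is correct and follows the same route as the paper: you rewrite the total as $\sum_g \boldsymbol{\zeta}_g^\top U_g \CC_g^{\mathrm{mc}}$, use the group constraint to replace it by $\sum_g \boldsymbol{\zeta}_g^\top \yy_g$, and regroup by voxel, where the normalized influences sum to $\eta(u)/\eta(u)=1$ at every covered location. Your bookkeeping remarks (repeated voxels within a group counted separately, and the window carried in $\boldsymbol{\zeta}_g$ rather than in $U_g$) agree with the paper's conventions and make explicit what its one-line regrouping steps leave implicit.
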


\begin{proof}
    Summing the corrected group contributions and using that every entry of $\yy_g$ is the observation at the corresponding sampling location gives
    \begin{align*}
        \sum_{u \in \Omega} \widehat{x}(u)
        &= \sum_g
           \boldsymbol{\zeta}_g^\top
           U_g \CC_g^{\mathrm{mc}} \\
        &= \sum_g \boldsymbol{\zeta}_g^\top \yy_g \\
        &= \sum_{u \in \Omega} y(u)
           \sum_g \sum_j
           \frac{w_{g,j} \omega_{g,j}(u)}{\eta(u)} \\
        &= \sum_{u \in \Omega} y(u)
           \frac{\eta(u)}{\eta(u)} \\
        &= \sum_{u \in \Omega} y(u).
    \end{align*}
\end{proof}

The theorem preserves the sum of the observed realization, not the unknown clean mass, and it does not impose separate constraints on arbitrary subregions.

\begin{remark}[Fixed aggregation weights]
\label{rmk:fixed-aggregation-weights}
The normalized aggregation functional depends on the denominator $\eta(u)$ and therefore on the weights of all contributing groups.
These weights are computed from the uncorrected shrinkage estimates and held fixed during mass correction.
Recomputing them after correction would change the aggregation functional and could invalidate the imposed constraints.

Each constrained filtering stage is therefore evaluated in two sweeps.
The first computes the filtered coefficients, their reliability weights, and the denominator $\eta(u)$.
The second applies \Cref{eq:aggregation-aware-dc-correction} and aggregates the corrected patches using the same weights and denominator.
The risk models in \Cref{sec:aggregation-weights} thus describe the uncorrected shrinkage estimates, not the corrected estimator.
The normalized patch weights sum to one at each covered sampling location, ensuring that each observation contributes exactly once to the total in \Cref{thm:aggregation-aware-dc-global-conservation}.
\end{remark}

\section{Experiments}
\label{sec:experiments}
We evaluate whether \ac{bmnd} combines effective noise reduction with intensity and structural preservation across data dimensions.
We begin with controlled experiments on the reference-patch shift schedule and finite-count structural allowance to examine the balance between processing workload and denoising quality, as well as the sensitivity of noise-aware matching.
We then investigate how mass conservation affects global intensity, regional bias, and reconstruction quality, first in direct image denoising and then in reconstruction from denoised sinograms.
A factorial ablation study extends this analysis across datasets and noise levels to assess the contributions of individual algorithmic choices.
Finally, we evaluate performance on measured fluorescence microscopy noise and demonstrate the applicability of the $n$-dimensional formulation beyond images and volumes on one-dimensional \ac{ecg} signals with added Gaussian noise.
These applications test the method beyond controlled image experiments, with particular attention to biological image structures and physiological waveform morphology.

The source code for our experiments can be found at \url{https://zivgitlab.uni-muenster.de/ag-pria/poisson-denoising}.

\subsection{Datasets}
We evaluate \ac{bmnd} on one-dimensional physiological signals, two-dimensional images, and three-dimensional volumes.
The datasets combine standard benchmarks and application data with generated diagnostic examples that isolate intensity levels and spatial structures.
Dataset-specific evaluation protocols are described in the corresponding experimental subsections.

The generated diagnostic datasets provide controlled two- and three-dimensional test cases.
The two-dimensional dataset contains three constant images with intensities $0.05$, $0.25$, and $0.80$, a smooth diagonal gradient, sharp intensity transitions with a circular region, spatially varying sinusoidal texture, as well as sparse Gaussian spots and ridge structures adapted from \authorcite{makitalo2010optimal}.
These examples allow us to examine intensity dependence and the preservation of distinct spatial structures under controlled conditions.
The three-dimensional dataset contains a constant volume, spheres of different sizes and intensities, crossing or oblique tubes, and sinusoidal texture within a smooth spatial envelope.
We refer to these datasets as \emph{generated 2D} and \emph{generated 3D}, respectively.

The natural-image benchmarks comprise \ac{bsd68}~\cite{martin2001database}, \ac{set12} as distributed with FFDNet~\cite{zhang2018ffdnet}, \ac{kodak24}~\cite{kodak1999lossless}, and selected reference images from scikit-image~\cite{vanderwalt2014scikit}.
We refer to these four sources collectively as \emph{natural} in the following experiments.
Together, they provide images containing smooth regions, edges, repeated structures, and textures.

For fluorescence microscopy, we use \ac{fmd}~\cite{zhang2018poisson} to extend the evaluation to biological image structures (examples in \Cref{fig:fmd:fov19}).
We additionally use three-dimensional anatomical brain phantom data generated with \ac{xcat}~\cite{segars2010xcat}.
For this, we randomly sample activity distributions and affine transformations.
The phantom then gets forward projected into sinogram space.
Examples from the \emph{generated 2D}, \emph{generated 3D}, \emph{natural}, \ac{fmd}, and \ac{xcat} datasets are shown in \Cref{fig:ablation:examples}.

For the mass-conservation experiments, we use the two-dimensional Shepp--Logan phantom and its forward-projected sinogram (see \Cref{fig:mc:shepp-logan}).
Its piecewise-constant regions provide controlled reference intensities for assessing global and regional intensity bias under Poisson noise, both in direct image denoising and in reconstruction from denoised sinograms.

For the one-dimensional evaluation, we use electrocardiogram recordings from the MIT--BIH Arrhythmia Database~\cite{moody2001impact} (example waveforms in \Cref{fig:ecg:qrs-morphology}).
We use channel zero from all records and assess reconstruction quality and preservation of beat morphology under added Gaussian noise.
The subject-level development and confirmation split, segment selection, and evaluation protocol are described in \Cref{sec:experiments:ecg}.

\subsection{Noise Models}
For Gaussian observations, let $\sigma_{255}$ denote the noise standard deviation in $[0, 255]$ units.
When the image is stored in $[0, 1]$ units, the corresponding standard deviation is
\begin{equation}
    \sigma = \frac{\sigma_{255}}{255}.
\end{equation}

For Poisson observations, let $I_i$ denote the reference intensity in stored units and let $d > 0$ be the normalization divisor.
The corresponding normalized intensity is
\begin{equation}
    X_i = \frac{I_i}{d}.
\end{equation}
Let $\lambda_{\mathrm{pk}} > 0$ denote the peak expected raw count, namely the Poisson rate at unit normalized intensity.
We define
\begin{equation}
    s_P = \frac{1}{\lambda_{\mathrm{pk}}},
    \qquad
    K_i \sim \Poi\left(\frac{X_i}{s_P}\right)
    = \Poi\left(\lambda_{\mathrm{pk}} X_i\right)
\end{equation}
and represent the observation in normalized processing units as
\begin{equation}
    Y_i = s_P K_i.
\end{equation}
The corresponding observation in the original stored units is
\begin{equation}
    \widetilde{Y}_i = d Y_i = \frac{d}{\lambda_{\mathrm{pk}}} K_i.
\end{equation}
Thus, $d = 1$ for images in $[0, 1]$ and $d = 255$ for images in $[0, 255]$.
The implementation uses $s_P = 1 / \lambda_{\mathrm{pk}}$ as the processing-unit Poisson count scale.

\subsection{Shift Schedule Analysis}
First, we measure the reference-patch count and coverage uniformity of each schedule.
Second, we measure the resulting denoising quality under additive Gaussian noise at several noise levels.
We use reference-patch count as a proxy for computational cost, since each reference patch initiates block matching and collaborative filtering.
This measures the scheduled processing workload rather than execution time.
All other algorithmic parameters are fixed to the standard configuration in \Cref{app:standard-configuration}.

The sparse schedule is omitted from the two-dimensional quality
comparison.
It differs from the generated schedule only by removing singleton axes from the axis hierarchy.
Since the evaluated two-dimensional images contain no singleton axes, the two schedules generate identical reference origins in this setting.

For each image and schedule, define a coverage map $C(x)$ by counting how many scheduled $8 \times 8$ reference patches contain pixel $x$:
\begin{equation}
    C(x) = \sum_{r \in \mathcal{R}} \mathbbm{1}\!\left[x \in r + \mathcal{B}\right],
\end{equation}
where $\mathcal{R}$ is the set of reference-patch origins and $\mathcal{B}$ is the patch support.
The spatial mean and standard deviation are then
\begin{equation}
    \mu_C = \frac{1}{|\Omega|} \sum_{x \in \Omega} C(x),
    \qquad
    \sigma_C = \sqrt{\frac{1}{|\Omega|} \sum_{x \in \Omega} \left(C(x) - \mu_C\right)^2},
\end{equation}
and the \ac{cv} is
\begin{equation}
    \mathrm{CV}(C) = \frac{\sigma_C}{\mu_C}.
\end{equation}
We calculate the \ac{cv} separately for each image and schedule and report its median across images in \Cref{fig:shift-schedule:coverage}.
HT and Wiener values are identical here because both stages use the same patch size and step.
$\mathrm{CV} = 0$ means every pixel is covered by exactly the same number of reference patches.
A higher \ac{cv} means reference-patch processing is concentrated more heavily in some spatial regions than others.
This measures only reference-schedule geometry, not matched-patch coverage or final aggregation weights.

\begin{figure}[htbp]
    \centering
    \includegraphics[width=0.8\linewidth]{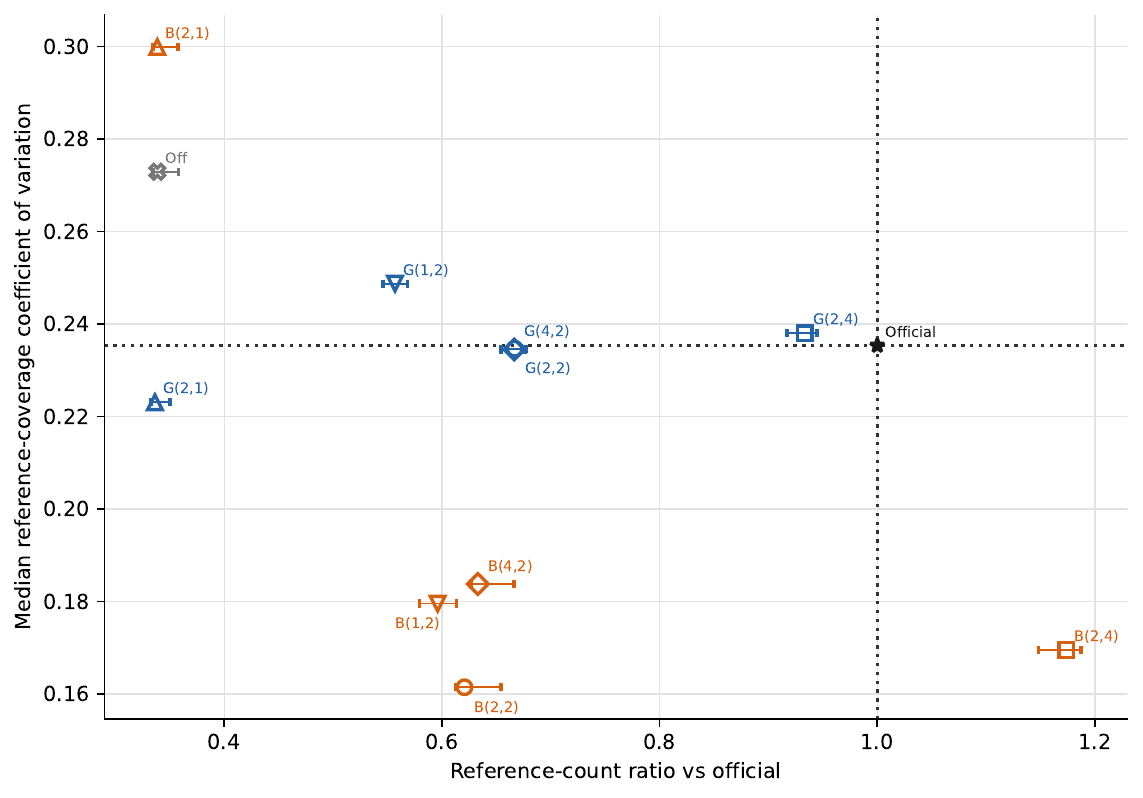}
    \caption{Reference-patch count and coverage uniformity.
    Counts are normalized by those of the official schedule and averaged across images.
    Horizontal error bars show their range across image shapes.
    Uniformity is measured by the median \ac{cv} of reference-patch coverage across images where lower values indicate more uniform coverage.
    Dotted lines mark the official schedule.
    G and B denote generated and balanced schedules, with tuples specifying (shift density, schedule density).}
    \label{fig:shift-schedule:coverage}
\end{figure}

\begin{figure}[htbp]
    \centering
    \includegraphics[width=\linewidth]{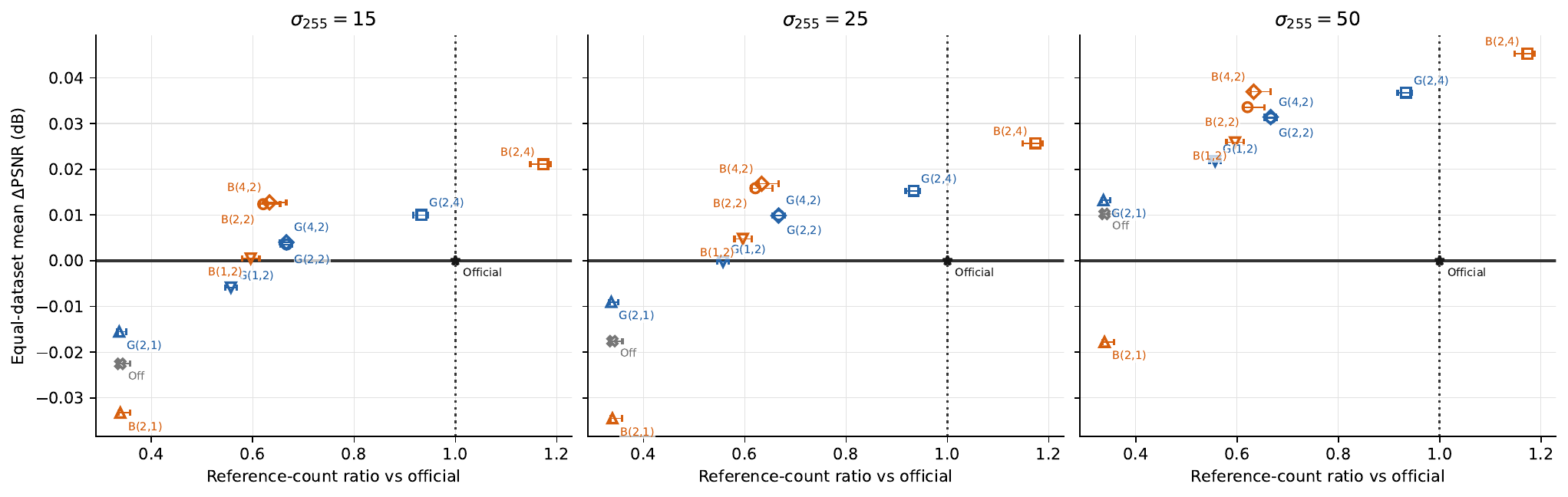}
    \caption{Denoising quality versus reference-patch count under Gaussian noise.
    \ac{psnr} changes are relative to the official schedule and averaged equally across BSD68, Kodak24, Set12, and scikit-image.
    Counts are normalized by those of the official schedule and averaged across images.
    Horizontal error bars show their range across image shapes.
    Panels correspond to $\sigma_{255} \in \{ 15, 25, 50 \}$.
    G and B denote generated and balanced schedules, with tuples specifying (shift density, schedule density).}
    \label{fig:shift-schedule:quality-cost}
\end{figure}

\Cref{fig:shift-schedule:coverage} compares the number and spatial distribution of the reference patches.
The balanced schedules generally give more uniform coverage than the official schedule, with balanced $(2, 2)$ giving the lowest \ac{cv} among configurations using fewer reference patches than the official schedule.
The generated $(2, 2)$ schedule has nearly the same coverage variation as the official schedule while using approximately $67\%$ of its reference patches.
Removing shifts or using only one schedule pass reduces the reference count further, but also makes the coverage less uniform.

The corresponding denoising results are shown in \Cref{fig:shift-schedule:quality-cost}.
All differences from the official schedule are small and remain below $0.05\,\mathrm{dB}$.
Schedules using approximately $60$-$70\%$ of the official reference count preserve or slightly improve the mean \ac{psnr}.
The balanced $(4, 2)$ schedule gives the highest \ac{psnr} in this reference-count range, although its advantage over balanced $(2, 2)$ is at most $0.003\,\mathrm{dB}$.
Reducing the reference count to approximately one third results in small losses at $\sigma_{255} = 15$ and $25$, whereas most schedules improve on the official schedule at $\sigma_{255} = 50$.

For the following experiments, we use the generated $(2, 2)$ schedule.
Among the dimension-independent schedules considered here, it most closely reproduces the official schedule while reducing the reference count by approximately one third.
Its mean \ac{psnr} differs from the official schedule by $+0.004$, $+0.010$, and $+0.031\,\mathrm{dB}$ for $\sigma_{255} = 15$, $25$, and $50$, respectively.

\subsection{Finite-count Structural Allowance Analysis}
We examine the sensitivity of denoising quality to the structure factor $\beta_s$ and intensity exponent $\kappa_s$.

We vary
\begin{equation}
    \beta_s \in \{ 0, 0.02, 0.05, 0.1, 0.2, 0.4, 0.8, 1.6 \},
    \qquad
    \kappa_s \in \{ 0.25, 0.5, 1, 2, 3, 4 \}.
\end{equation}
Finite-count matching is used only in the hard-thresholding stage, while Wiener matching remains fixed and uses the pilot estimate.
The remaining parameters are unchanged.
For simulated observations, we use peak expected counts between $0.5$ and $50$ and two noise realizations.

\begin{figure}[htbp]
    \centering
    \includegraphics[width=0.8\linewidth]{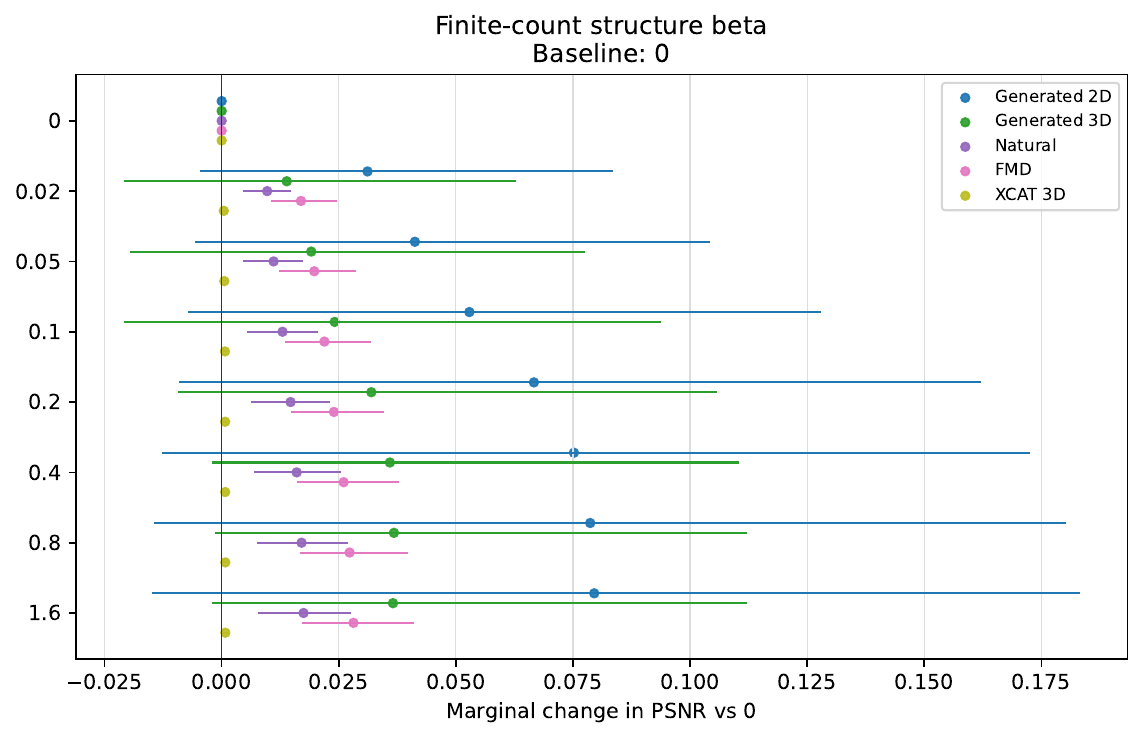}
    \medskip
    \includegraphics[width=0.8\linewidth]{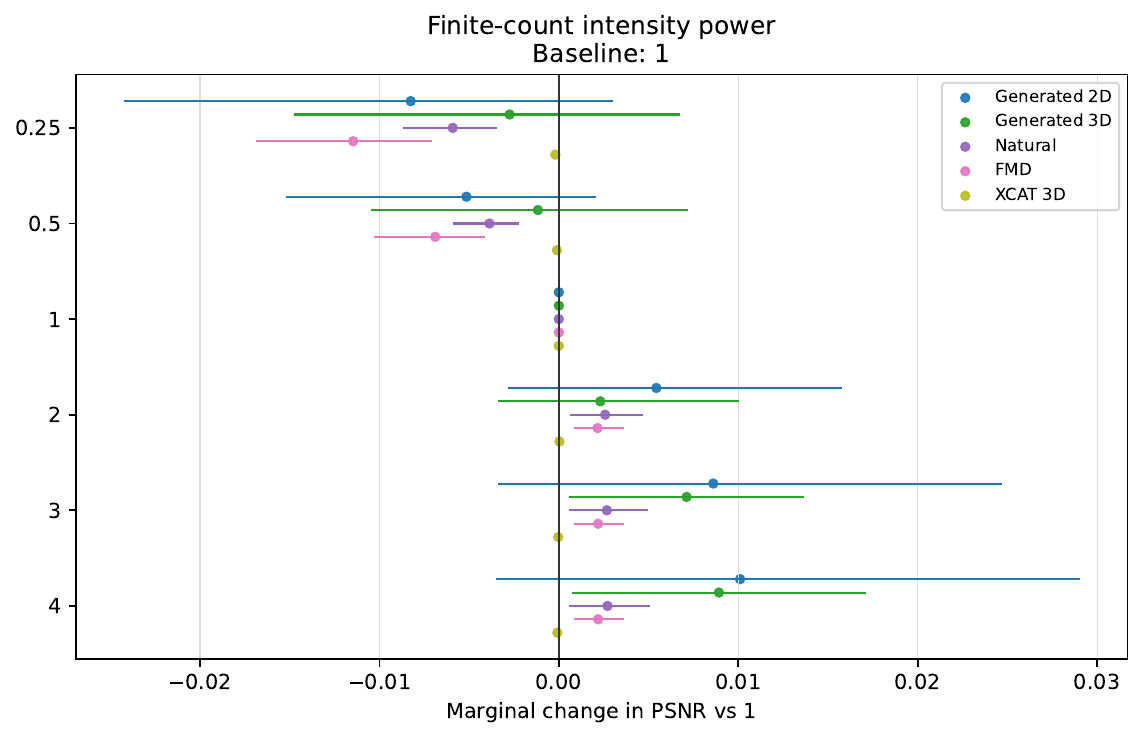}
    \caption{Source-balanced marginal effects of the finite-count structural allowance on \ac{psnr}.
    Effects are measured relative to $\beta_s = 0$ for the structure factor (top) and $\kappa_s = 1$ for the intensity exponent (bottom), averaging uniformly over the levels of the other factor.
    Positive values indicate improved reconstruction quality.
    Error bars denote 95\% nonparametric source-unit bootstrap intervals.}
    \label{fig:finite-count:marginal-effects}
\end{figure}

\Cref{fig:finite-count:marginal-effects} shows that a positive $\beta_s$ improves the mean \ac{psnr} on the generated, natural-image, and \ac{fmd} datasets.
The improvement is strongest on the generated two-dimensional data, where it reaches approximately $0.08\,\mathrm{dB}$.
Most of the other gains remain below $0.04\,\mathrm{dB}$.
The effect on the \ac{xcat} dataset is close to zero.

The choice of $\kappa_s$ has a smaller effect.
Values below $1$ tend to reduce the mean \ac{psnr}, while values above $1$ give small improvements on several datasets.
These differences are generally below $0.01\,\mathrm{dB}$.

For the following experiments, we use $\beta_s = 0.1$ and $\kappa_s = 1$.
At $\beta_s = 0.1$, the structural allowance gives a positive mean change in \ac{psnr}.
The influence of $\kappa_s$ is small, but $\kappa_s = 1$ generally performs at least as well as the lower exponents and gives the allowance a simple linear dependence on the mean reference count.

\subsection{Mass Conservation}
We evaluate the effect of mass conservation on intensity bias and reconstruction quality using Poisson-corrupted sinograms of the Shepp--Logan phantom.
We compare conservation in the hard-thresholding stage, the Wiener stage, and both stages with denoising without mass conservation.
We measure total sinogram intensity, regional reconstruction bias, and \ac{psnr} in sinogram and reconstructed-image space.

We evaluate the effect of mass conservation on intensity bias and reconstruction quality using the Shepp--Logan phantom.
We first denoise Poisson-corrupted phantom images directly, then denoise Poisson-corrupted sinograms and evaluate the resulting reconstructions.
This separates the effects of denoising from those introduced by the subsequent reconstruction step.
In both experiments, we compare conservation in the hard-thresholding stage, the Wiener stage, and both stages with denoising without mass conservation.

\begin{figure}[htbp]
    \centering
    \includegraphics[width=\linewidth]{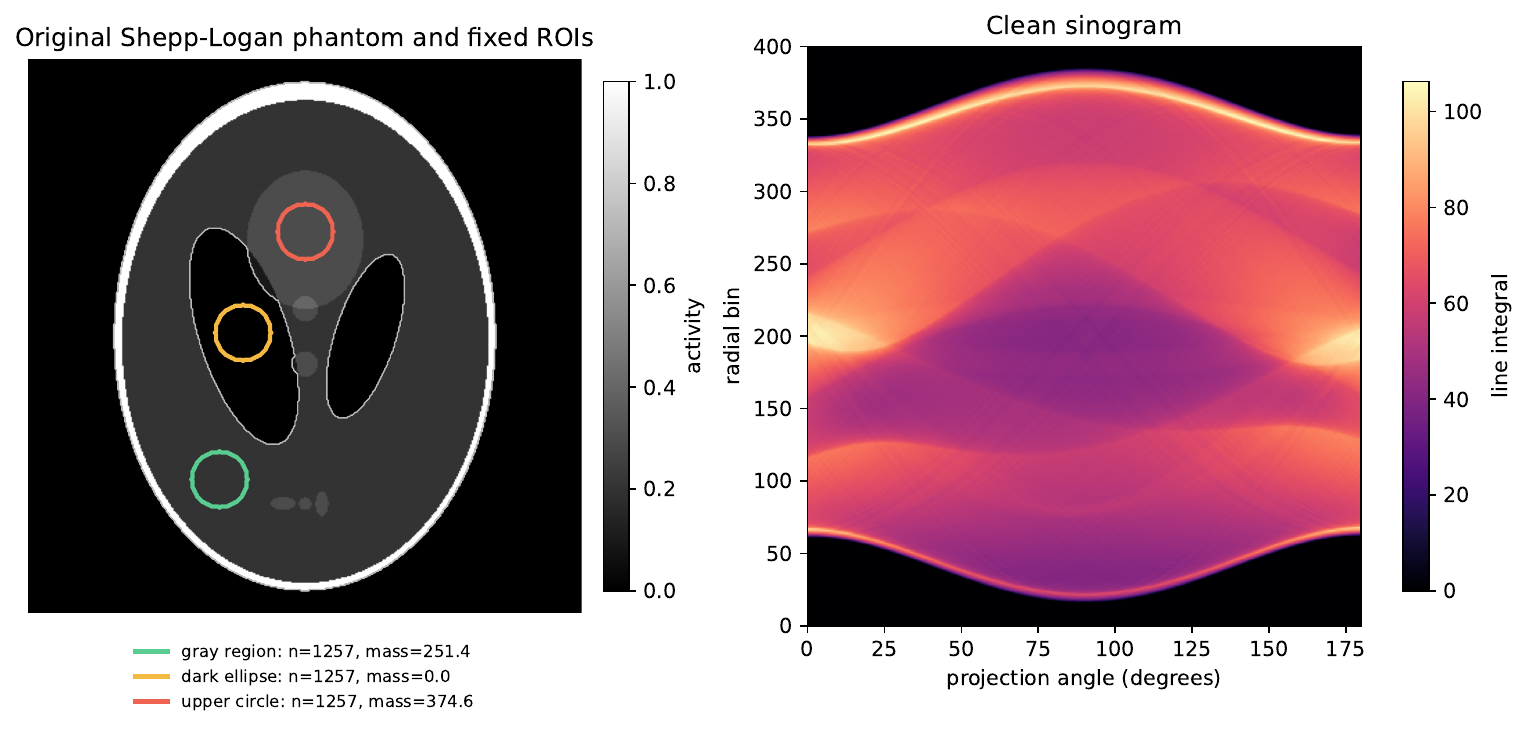}
    \caption{Shepp--Logan phantom with three fixed regions of interest (left) and its clean sinogram (right). The regions sample the gray background within the object, a dark ellipse with zero phantom activity, and the upper circular structure.}
    \label{fig:mc:shepp-logan}
\end{figure}

The phantom, evaluation regions, and clean sinogram are shown in \Cref{fig:mc:shepp-logan}.
We simulate Poisson observations at peak expected bin counts $\lambda_{\mathrm{pk}} \in \{ 0.1, 0.5, 1, 2, 10, 50 \}$, using the same 100 noise realizations for all configurations at each count level.

For image-space evaluation, we use the reconstruction of the clean sinogram as the reference.
This separates deviations caused by noise and denoising from the baseline reconstruction error relative to the original phantom.
We measure signed mass differences within the three regions shown in \Cref{fig:mc:shepp-logan} and over the full object support, expressing them as percentages of the corresponding reference mass.
For the dark ellipse, whose activity is zero in the original phantom, we instead report the signed mean-intensity difference to avoid normalization by a near-zero reference mass.
The whole-sinogram measurement uses the clean sinogram directly as its reference.
Bias curves report means over the 100 realizations, with $95\%$ bootstrap confidence intervals.
We additionally report paired \ac{psnr} differences relative to denoising without mass conservation in both sinogram and reconstructed-image space.

\begin{figure}[htbp!]
    \centering
    \includegraphics[width=\linewidth]{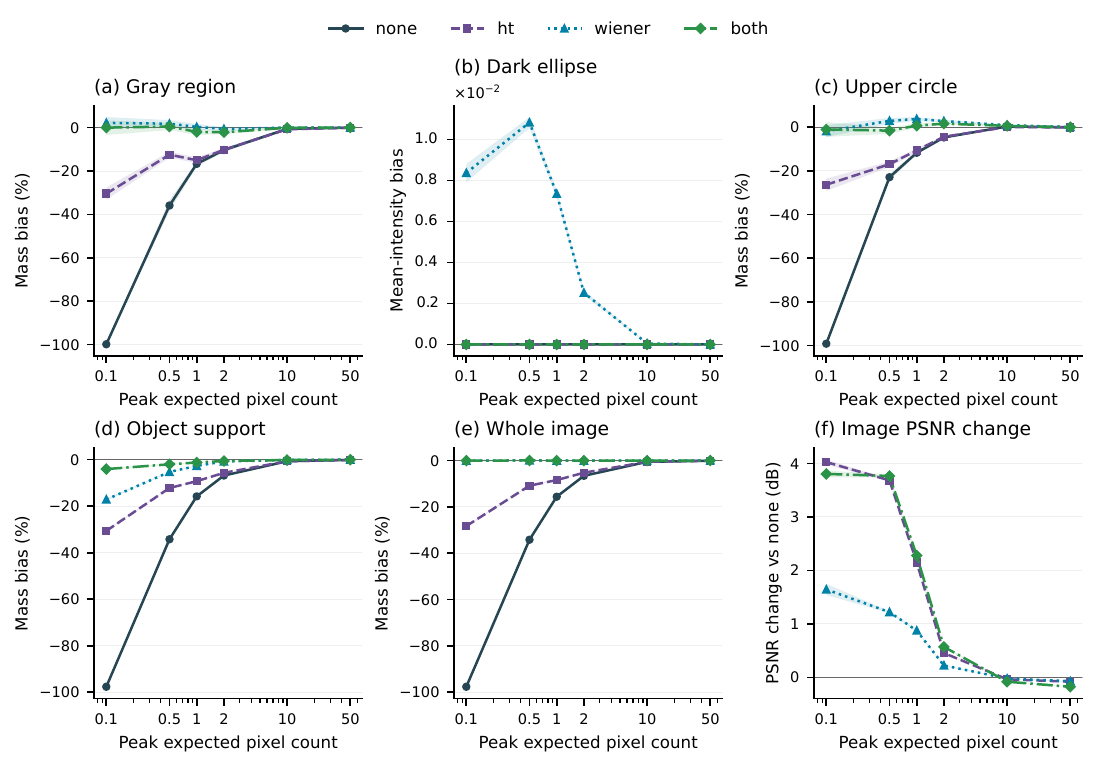}
    \caption{Mass-conservation effects when denoising Poisson-corrupted Shepp--Logan images directly.
    Panels show bias in the gray region (a), dark ellipse (b), upper circle (c), object support (d), and whole image (e), measured against the clean phantom.
    Bias is expressed as a percentage of reference mass, except in (b), which shows the signed mean-intensity difference.
    Panel (f) shows the \ac{psnr} change relative to denoising without mass conservation.
    Curves show means over 100 Poisson realizations, with shaded 95\% bootstrap confidence intervals.}
    \label{fig:mc:image-bias}
\end{figure}

For direct image denoising, we use the clean phantom as the reference for both intensity bias and \ac{psnr}.
\Cref{fig:mc:image-bias} shows substantial intensity loss without mass conservation at the lowest count levels.
At $\lambda_{\mathrm{pk}} = 0.1$, the mean bias is approximately $-97\%$ over the whole image and approaches $-100\%$ in the gray region and upper circle.
Applying conservation only in the hard-thresholding stage reduces the whole-image bias to approximately $-28\%$.
When conservation is applied in the Wiener stage, either alone or together with hard thresholding, the mean whole-image bias remains close to zero across the evaluated count levels.

Preserving the whole-image total does not ensure unbiased regional intensities.
At $\lambda_{\mathrm{pk}} = 0.1$, Wiener-only conservation retains an object-support bias of approximately $-17\%$, compared with approximately $-5\%$ when conservation is applied in both stages.
Applying conservation in Wiener or both stages keeps the dark-ellipse bias close to zero.

All three constrained configurations improve image \ac{psnr} at the lowest count levels.
At $\lambda_{\mathrm{pk}} = 0.1$, hard-thresholding-only and two-stage conservation give gains of approximately $4.0$ and $3.8\,\mathrm{dB}$, respectively, compared with $1.6\,\mathrm{dB}$ for Wiener-only conservation.
The gains diminish as the expected counts increase, with small losses at the highest evaluated count levels.

We next examine whether these benefits persist when denoising is performed in sinogram space before reconstruction.
For whole-sinogram bias and sinogram \ac{psnr}, the reference is the clean sinogram.
For regional bias and reconstructed-image \ac{psnr}, we instead use the reconstruction of the clean sinogram.
This separates deviations caused by noise and denoising from the baseline reconstruction error relative to the original phantom.

\begin{figure}[htbp!]
    \centering
    \includegraphics[width=\linewidth]{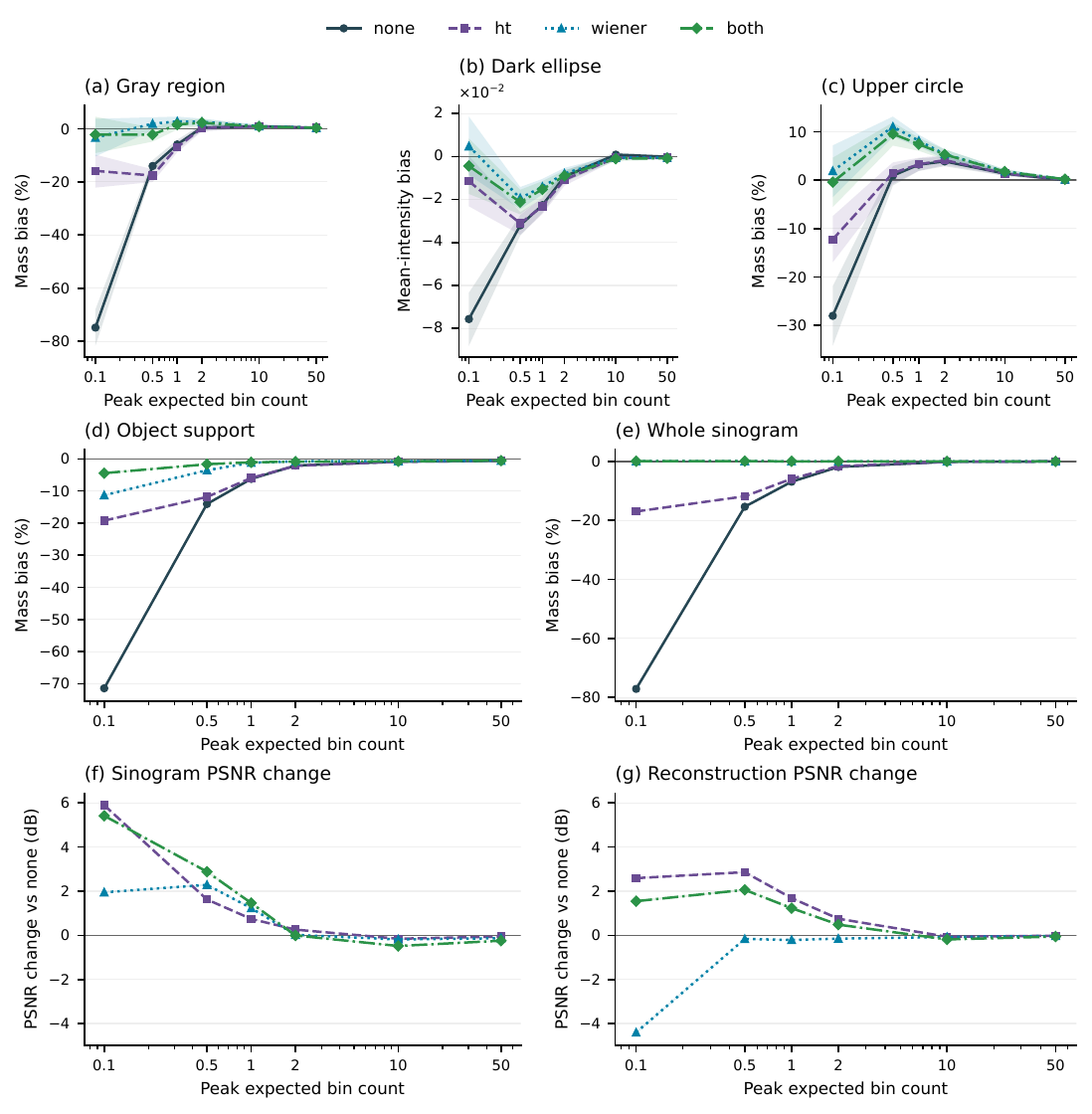}
    \caption{Mass-conservation effects when denoising Poisson-corrupted Shepp--Logan sinograms.
    Panels (a)--(d) show bias in the reconstructed gray region, dark ellipse, upper circle, and object support, measured against the reconstruction of the clean sinogram.
    Panel (e) shows whole-sinogram bias relative to the clean sinogram.
    Bias is expressed as a percentage of reference mass, except in (b), which shows the signed mean-intensity difference.
    Panels (f) and (g) show \ac{psnr} changes relative to denoising without mass conservation in sinogram and reconstructed-image space, respectively.
    Curves show means over 100 Poisson realizations, with shaded 95\% bootstrap confidence intervals.}
    \label{fig:mc:sinogram-bias}
\end{figure}

\Cref{fig:mc:sinogram-bias} shows pronounced intensity loss without mass conservation at the lowest count level.
At $\lambda_{\mathrm{pk}} = 0.1$, the mean bias is approximately $-77\%$ over the whole sinogram and $-71\%$ over the reconstructed object support.
Applying conservation only in the hard-thresholding stage reduces these losses, but does not eliminate the final sinogram bias.
When conservation is applied in the Wiener stage, either alone or together with hard thresholding, the mean whole-sinogram bias remains close to zero across the evaluated count levels.

The regional results show that preserving the sinogram total does not ensure unbiased reconstructed intensities.
Wiener-stage conservation substantially reduces the strong underestimation in the gray region at the lowest count level.
Applying conservation in both stages also reduces the object-support bias to approximately $-5\%$ at this level.
However, regional errors remain, and their sign depends on the region and count level.
In the upper-circle ROI, Wiener-stage conservation produces a positive bias of approximately $10\%$ at $\lambda_{\mathrm{pk}} = 0.5$.
The dark ellipse also retains a negative mean-intensity bias at several count levels.

At the lowest count levels, all three constrained configurations improve sinogram \ac{psnr}, but their effects on reconstructed-image \ac{psnr} differ (\Cref{fig:mc:sinogram-bias}, panels (f) and (g)).
HT-only conservation gives the largest reconstruction gains, approximately $2.6$ and $2.9\,\mathrm{dB}$ at $\lambda_{\mathrm{pk}} = 0.1$ and $0.5$, respectively.
Applying conservation in both stages also improves reconstruction \ac{psnr} at these levels, by approximately $1.5$ and $2.1\,\mathrm{dB}$.
In contrast, Wiener-only conservation reduces reconstruction \ac{psnr} by approximately $4.4\,\mathrm{dB}$ at $\lambda_{\mathrm{pk}} = 0.1$, despite improving sinogram \ac{psnr}.
At higher count levels, the benefits diminish, and conservation can cause small \ac{psnr} losses.

Across both experiments, Wiener-only and two-stage conservation retain the same observed sinogram mass, yet constraining the pilot as well gives substantially better reconstruction \ac{psnr} at the lowest count levels.
In this experiment, HT-only conservation provides the largest low-count reconstruction-PSNR gains, whereas two-stage conservation combines preservation of the observed total with improved reconstruction quality.

\subsection{Ablation Study}
We evaluate how the individual algorithmic choices affect reconstruction quality under Poisson noise and on fluorescence microscopy acquisitions.
The analysis examines the effects of noise-aware matching, Wiener gains, and aggregation weights, together with the influence of thresholding and covariance modeling.
We report marginal changes in \ac{psnr} relative to the reference level of each factor, averaging over the other evaluated settings.

We evaluate generated 2D and 3D data, natural images, three-dimensional \ac{xcat} brain sinograms, and \ac{fmd} acquisitions.
The simulated-noise experiments comprise eight generated images, four generated volumes, 39 natural images, and 12 \ac{xcat} sinograms, each evaluated at peak expected counts $\lambda_{\mathrm{pk}} \in \{ 1, 5, 20, 100 \}$.
For \ac{fmd}, we use 36 image-type and field-of-view combinations at averaging levels $1$, $4$, and $16$.

We jointly vary 12 factors in a complete factorial design comprising $276{,}480$ configurations.
The factors cover first-stage matching, covariance modeling, thresholding, Wiener gains, aggregation weights in both stages, and group mass conservation.
The complete factor levels and fixed settings are listed in \Cref{tab:ablation-profile-space}.

We summarize the results separately for each dataset group using source-balanced marginal effects.
A source unit is an individual image or volume for the generated and natural-image data, an individual sinogram for \ac{xcat}, and an image-type and field-of-view combination for \ac{fmd}.
Within each source unit, we first average \ac{psnr} over the evaluated noise conditions and then uniformly over all combinations of the other factors.
For each factor level, we subtract the corresponding mean at its reference level and average these differences equally across source units.
Positive values therefore indicate improved reconstruction quality relative to the reference level of that factor, rather than relative to a single common baseline configuration.
We obtain $95\%$ confidence intervals from $1000$ bootstrap resamples of the source units.

\begin{figure}[htbp]
    \centering
    \includegraphics[width=0.8\linewidth]{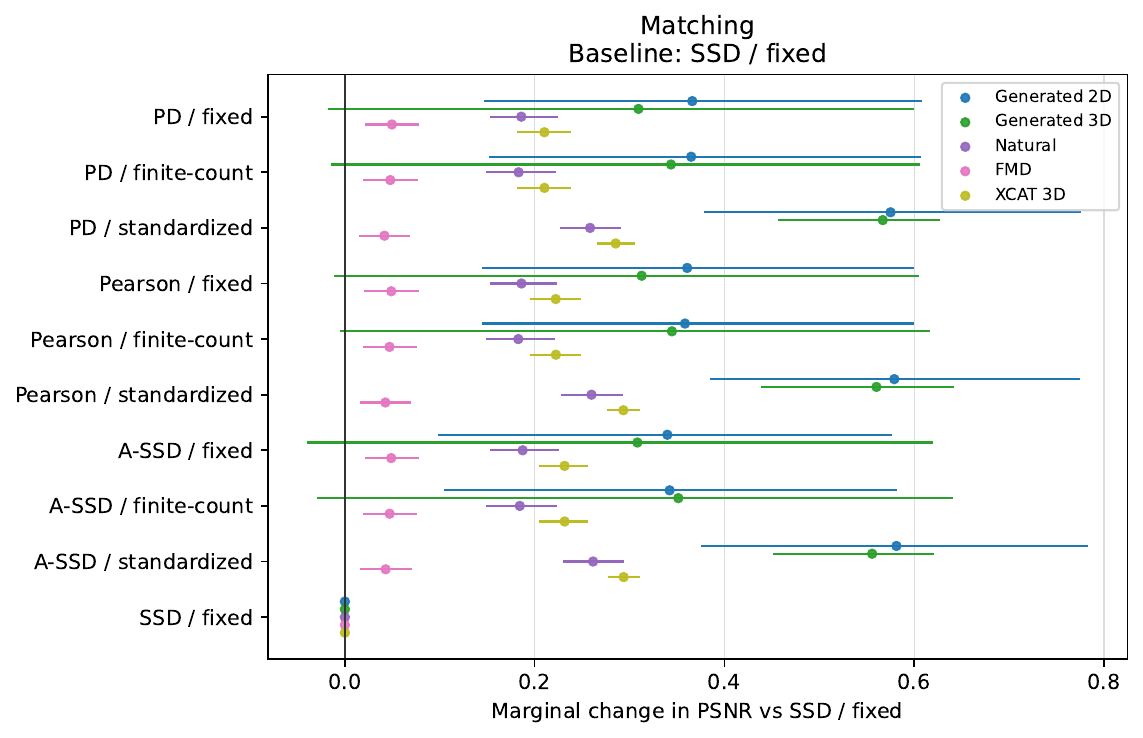}
    \caption{Marginal effects of the matching strategy on \ac{psnr} relative to SSD with fixed calibration.
    PD denotes Poisson deviance; A-SSD denotes Anscombe-transformed SSD.
    Finite-count denotes reference-based finite-count calibration; standardized denotes candidate standardization.
    Error bars denote 95\% source-unit bootstrap intervals.}
    \label{fig:ablation:matching}
\end{figure}

\Cref{fig:ablation:matching} shows positive mean changes in \ac{psnr} for all evaluated noise-aware matching strategies relative to SSD with fixed calibration.
Poisson deviance, Pearson, and Anscombe-transformed SSD produce similar marginal effects when used with the same calibration.
Candidate standardization gives the largest mean improvements on the generated, natural-image, and \ac{xcat} datasets, whereas its effect on \ac{fmd} is similar to that of the other calibration strategies.
Fixed and reference-based finite-count calibration give nearly identical marginal results.

\begin{figure}[htbp!]
    \centering
    \includegraphics[width=0.8\linewidth]{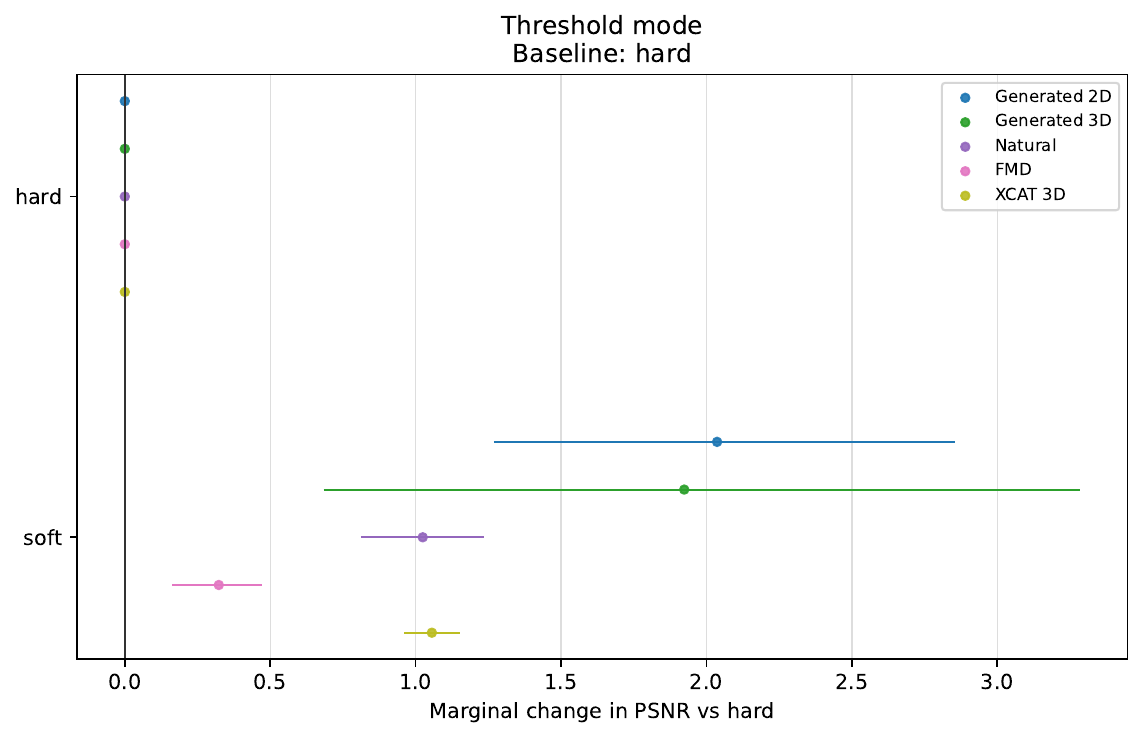}
    \medskip
    \includegraphics[width=0.8\linewidth]{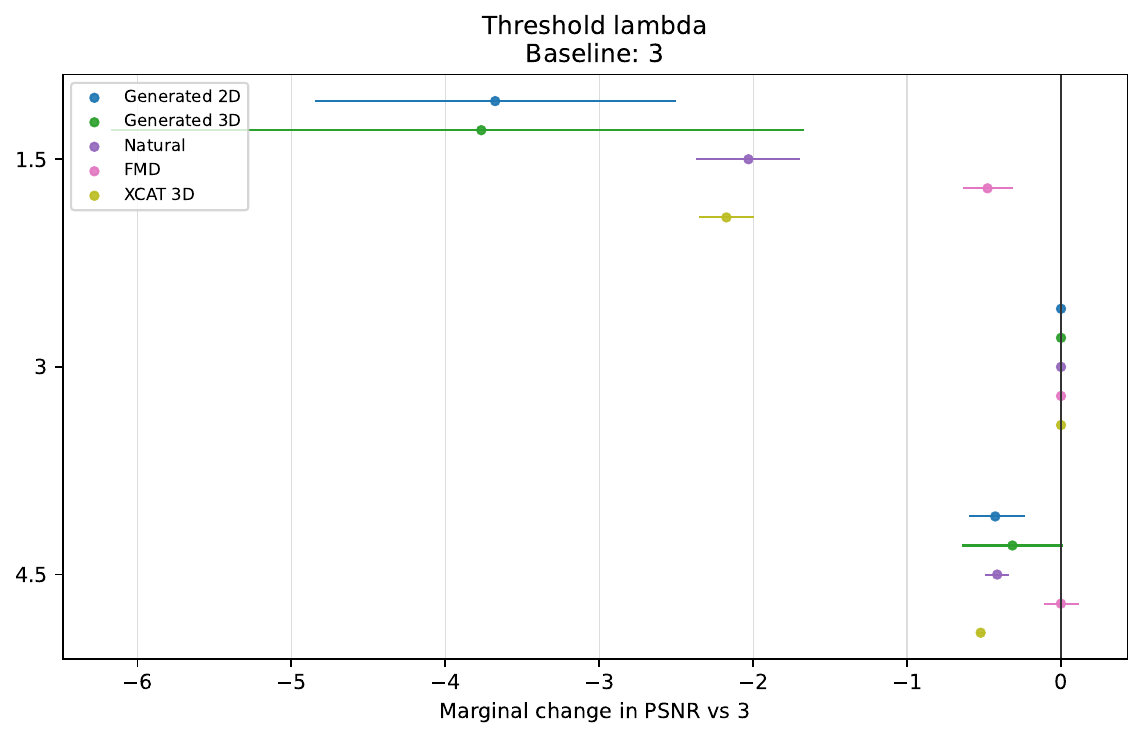}
    \caption{Marginal \ac{psnr} effects of the thresholding rule relative to hard thresholding (top) and the threshold multiplier relative to $\lambda_\mathrm{HT} = 3$ (bottom).
    Error bars denote 95\% source-unit bootstrap intervals.
    Note the different horizontal scales.}
    \label{fig:ablation:thresholding}
\end{figure}

The first filtering stage is sensitive to both the thresholding rule and the threshold multiplier.
\Cref{fig:ablation:thresholding} shows that soft thresholding gives higher marginal mean \ac{psnr} than hard thresholding across all evaluated dataset groups.
The improvement is largest on the generated data and smallest on \ac{fmd}.
Reducing the threshold multiplier from $\lambda_\mathrm{HT} = 3$ to $\lambda_\mathrm{HT} = 1.5$ substantially decreases the marginal mean \ac{psnr} in every group.
Increasing it to $\lambda_\mathrm{HT} = 4.5$ produces smaller losses on the generated, natural-image, and \ac{xcat} datasets, and little change on \ac{fmd}.

\begin{figure}[htbp!]
    \centering
    \includegraphics[width=0.8\linewidth]{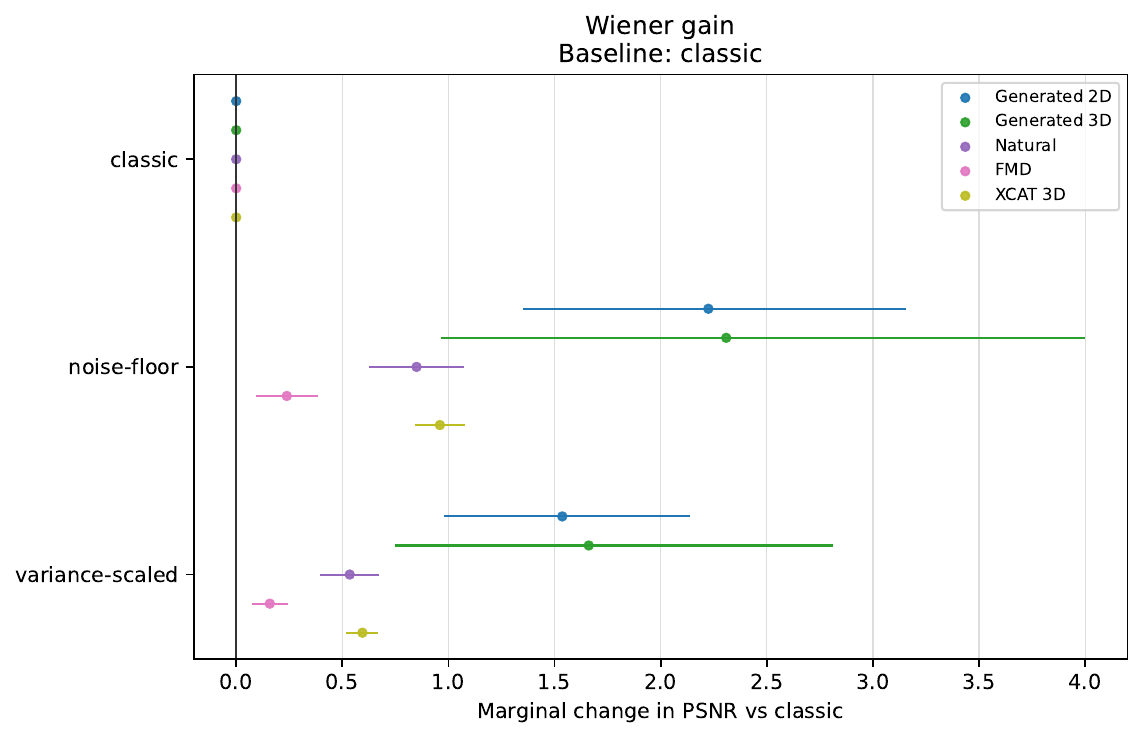}
    \caption{Marginal effects of the Wiener gain formulation on \ac{psnr} relative to classic gains.
    Error bars denote 95\% source-unit bootstrap intervals.}
    \label{fig:ablation:wiener-gain}
\end{figure}

For the second filtering stage, \Cref{fig:ablation:wiener-gain} shows positive marginal \ac{psnr} changes for both the noise-floor and variance-scaled formulations relative to classic gains across all evaluated dataset groups.
The noise-floor formulation gives the larger mean improvement in every group.
The gains are strongest on the generated data, but remain positive on natural images, \ac{xcat}, and \ac{fmd}.

\begin{figure}[htbp!]
    \centering
    \includegraphics[width=0.8\linewidth]{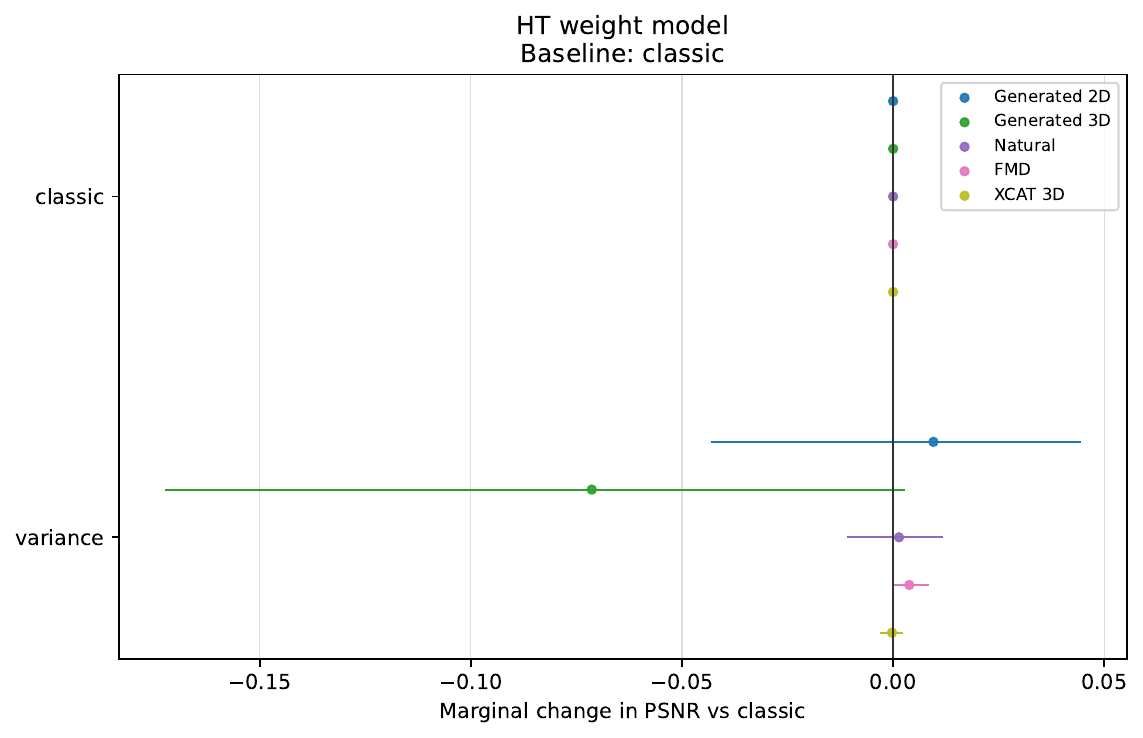}
    \medskip
    \includegraphics[width=0.8\linewidth]{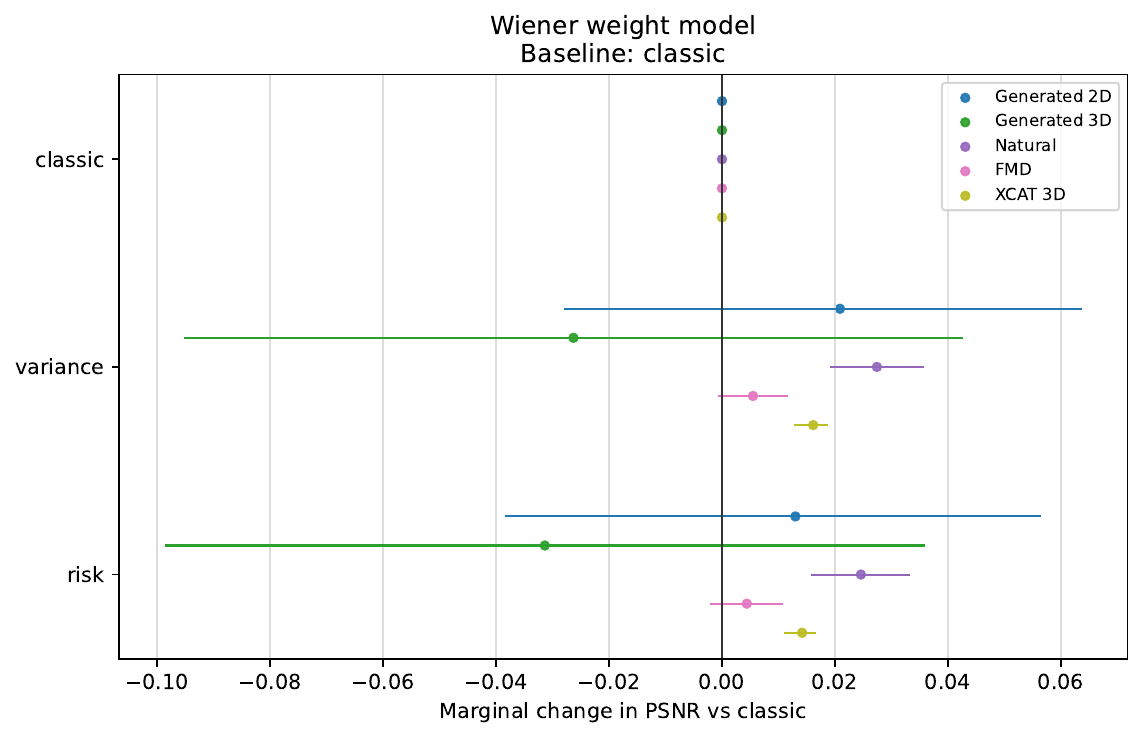}
    \caption{Marginal \ac{psnr} effects of aggregation weights relative to classic weighting in the first filtering stage (top) and Wiener stage (bottom).
    Error bars denote 95\% source-unit bootstrap intervals.
    Note the different horizontal scales.}
    \label{fig:ablation:weight-models}
\end{figure}

\Cref{fig:ablation:weight-models} compares variance-aware and bias-aware aggregation weights with classic weighting.
In the first filtering stage, variance weighting has little effect on natural images and \ac{xcat}, while giving a small positive mean change on \ac{fmd}.
The benefit is more pronounced in the Wiener stage, where both variance and risk weighting improve the marginal mean \ac{psnr} on natural images and \ac{xcat}.
The corresponding mean improvements on \ac{fmd} are smaller.
For the generated 2D and 3D datasets, the effects remain inconclusive in both stages.
The mean changes are positive in two dimensions and negative in three dimensions, but all corresponding confidence intervals include zero.
The bias-aware risk model gives similar marginal results to variance weighting, without a clear additional benefit.

The remaining factor effects are reported in \Cref{app:ablation}.
Using exact covariance planes improves the marginal mean \ac{psnr} on \ac{fmd}, but decreases it on natural images and \ac{xcat}, while the generated-data effects remain uncertain (\Cref{fig:ablation:covariance-planes}).
Changing the aggregation-weight domain has only small effects in both stages (\Cref{fig:ablation:ht-weight-domain,fig:ablation:wiener-weight-domain}).
Patch-level weighting gives more consistently positive mean changes in the first stage than in the Wiener stage, where the direction depends on the dataset (\Cref{fig:ablation:ht-weight-scope,fig:ablation:wiener-weight-scope}).
Group mass conservation improves the marginal mean \ac{psnr} on natural images and \ac{xcat}, while its effects on generated data depend on the stage of application and remain close to zero on \ac{fmd} (\Cref{fig:ablation:group-mass}).

\begin{figure}[htbp!]
    \centering
    \includegraphics[width=\linewidth]{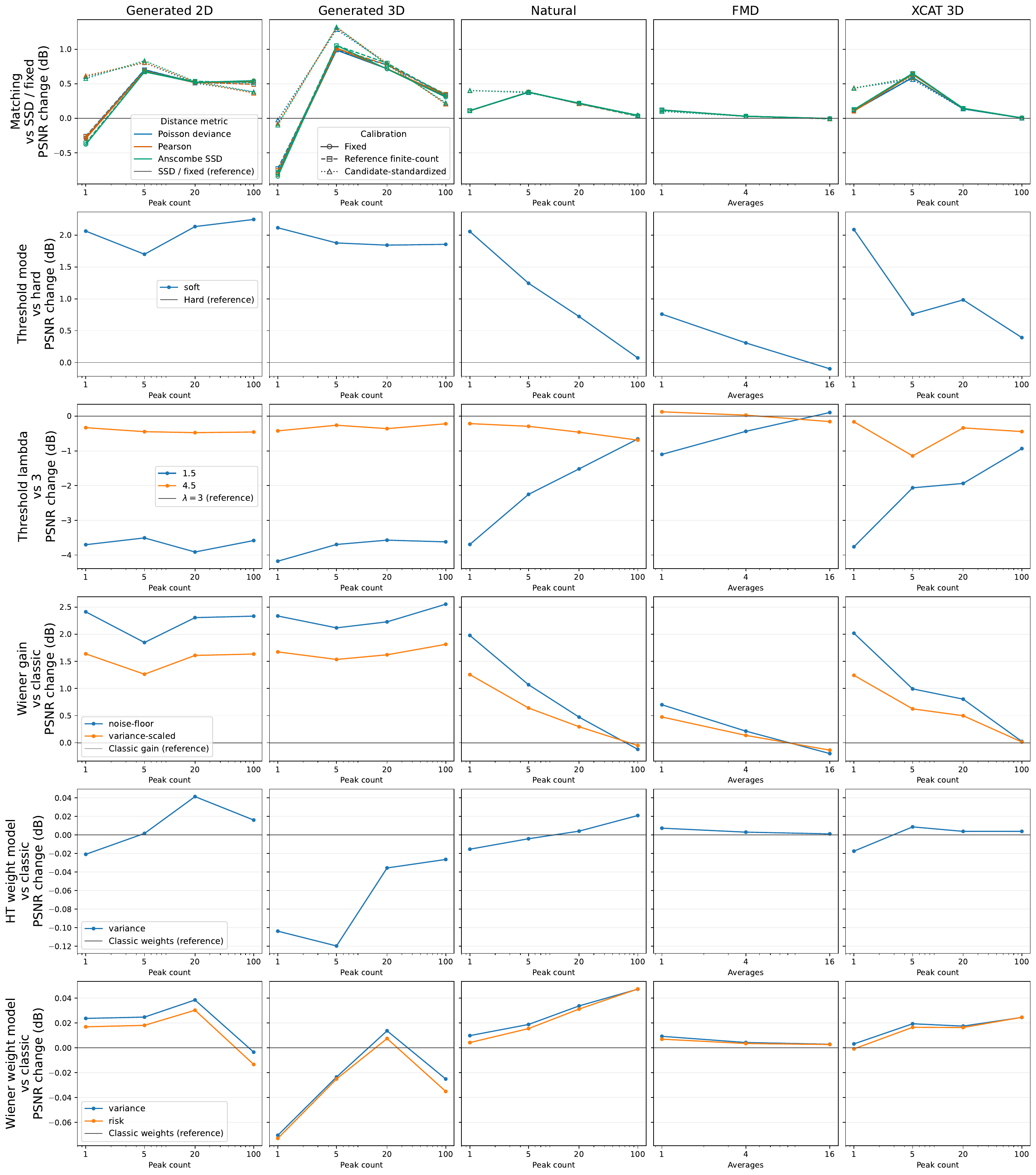}
    \caption{Source-balanced marginal \ac{psnr} effects by noise condition, with dataset groups in columns and algorithmic factors in rows.
    Effects are relative to the indicated reference settings, averaging over the other factors.
    Logarithmic horizontal axes show peak expected counts or acquisition averages for \ac{fmd}.
    In the matching row, colors identify distances, and line styles and markers identify calibrations.
    Vertical scales are shared within rows.}
    \label{fig:ablation:noise-trends}
\end{figure}

\Cref{fig:ablation:noise-trends} shows that the benefits of the proposed Wiener gain formulations become more pronounced as noise increases.
On natural images, \ac{xcat}, and \ac{fmd}, their largest mean improvements occur at the lowest peak count or with single acquisitions.
Soft thresholding likewise gives its largest improvements on these datasets under the strongest evaluated noise.
The generated datasets differ, retaining substantial benefits from both modifications throughout the evaluated count range.

The choice of matching calibration has a larger effect on mean \ac{psnr} at low counts.
At $\lambda_{\mathrm{pk}} = 1$, candidate standardization improves upon fixed and reference-based finite-count calibration on the generated, natural-image, and \ac{xcat} datasets.
The calibration strategies give more similar results at higher counts.
The aggregation-weight models have smaller effects with a different dependence on count level.
In particular, the benefits of Wiener variance and risk weighting on natural images increase rather than decrease with count level.

\begin{figure}[htbp]
    \centering
    \includegraphics[width=0.95\linewidth]{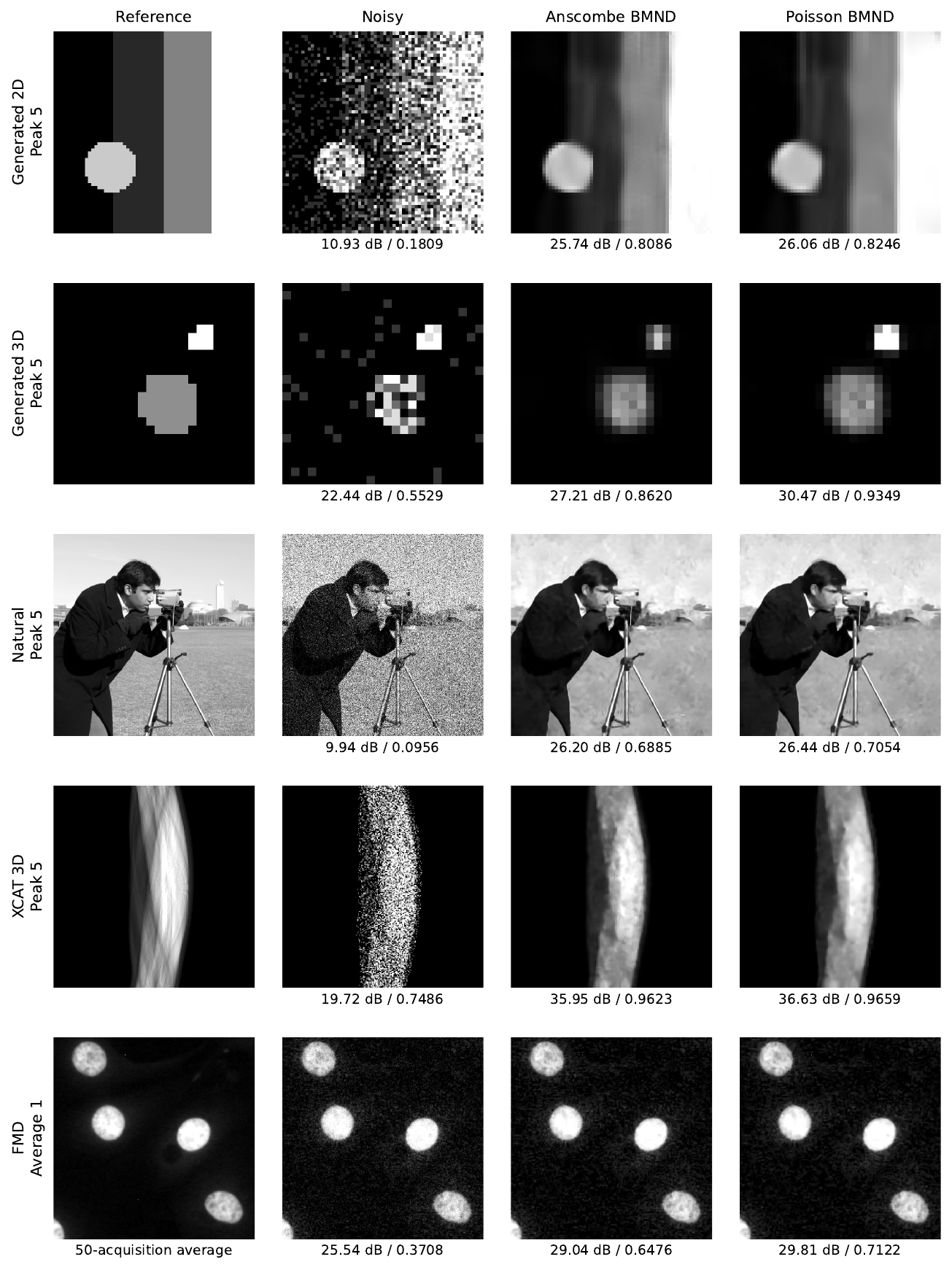}
    \caption{Denoising examples from the five ablation dataset groups.
    Columns show the reference, noisy observation, Anscombe \ac{bmnd}, and direct Poisson \ac{bmnd} with the globally top-ranked ablation configuration.
    Simulated observations use $\lambda_{\mathrm{pk}} = 5$; the \ac{fmd} example uses a single acquisition with a 50-acquisition average as reference.
    Labels report \ac{psnr}/\ac{ssim}.}
    \label{fig:ablation:examples}
\end{figure}

To complement the marginal-effect analysis, \Cref{fig:ablation:examples} compares example reconstructions obtained with Anscombe \ac{bmnd} and direct Poisson \ac{bmnd} across all five dataset groups.
For direct Poisson denoising, we use the single configuration ranked first globally across all datasets in the ablation study.
The examples are randomly selected.
Direct Poisson denoising gives higher \ac{psnr} and \ac{ssim} in each displayed example.

Overall, noise-aware matching and the modified Wiener gains have positive marginal mean effects on \ac{psnr} across the evaluated dataset groups.
The results by noise level show larger mean improvements of the modified Wiener gains under strong noise and highlight the importance of matching calibration at low counts.
Noise-aware Wiener aggregation yields smaller improvements, most clearly on natural images and anatomical phantom data.

\subsection{FMD Denoising}
We evaluate \ac{bmnd} on fluorescence microscopy images from \ac{fmd}, covering confocal, two-photon, and widefield microscopy.
Unlike the simulated-noise experiments, this evaluation uses noisy acquisitions, with noise levels controlled by averaging $1$, $2$, $4$, $8$, or $16$ raw images.
At each averaging level, we evaluate 48 images of size $512 \times 512$ from the held-out FOV 19, comprising 20 confocal, 16 two-photon, and 12 widefield images.
We calculate \ac{psnr} and \ac{ssim} against the dataset-provided high-SNR references and report arithmetic means of the per-image metrics.

We compare direct Poisson \ac{bmnd} with an Anscombe-transform approach using Gaussian \ac{bmnd}.
Following \authorcite{zhang2018poisson}, both approaches use the same calibrated affine mean-variance model,
\begin{equation}
    \operatorname{Var}(Y_i) = a \mathbb{E}[Y_i] + b,
\end{equation}
fitted separately for each image type and averaging level from repeated acquisitions in normalized intensity units.
Calibration uses five folds defined over fields of view, with the fold containing the evaluated field excluded from its calibration fit.
For direct Poisson denoising, we shift the observation by $b / a$, use the Poisson count scale $s_P = a$, and subtract the shift after denoising.
This matches the affine variance model but does not imply that the shifted observations follow an exact Poisson distribution.
The Anscombe baseline uses the same shift and count scale, applies Gaussian \ac{bmnd} with unit noise standard deviation in the transformed domain, and then applies the inverse transform.

For direct Poisson denoising, we select either one profile across the \ac{fmd} tuning data or a separate profile for each microscopy modality.
Selection maximizes mean \ac{psnr}, first averaging within each image-type and field-of-view combination and then equally across these source units.
The tuning experiment uses central $64 \times 64$ regions from six image types and six fields of view at averaging levels $1$, $4$, and $16$.
Test FOV 19 is excluded from profile selection.
The selected profiles remain fixed across all five test averaging levels.

\begingroup
    \setlength{\tabcolsep}{2pt}
    \setlength{\LTcapwidth}{\linewidth}
    \renewcommand{\arraystretch}{1.04}
    \begin{longtable}{@{}>{\raggedright\arraybackslash}p{0.30\linewidth}lrrrrr@{}}
    \caption{FMD denoising performance by averaging level.
    Published results are from \authorcite{zhang2018poisson}.
    For the full test set, bold marks the best non-deep-learning score and underlining the best overall.
    Within each modality subset, bold marks the best score.}
    \label{tab:fmd-denoising} \\
        \toprule
        \rowcolor{white}
        Method & Metric & $1$ & $2$ & $4$ & $8$ & $16$ \\
        \midrule
        \endfirsthead
        \toprule
        \rowcolor{white}
        Method & Metric & $1$ & $2$ & $4$ & $8$ & $16$ \\
        \midrule
        \endhead
        \midrule
        \multicolumn{7}{r}{Continued on next page} \\
        \endfoot
        \bottomrule
        \endlastfoot
        \rowcolor{white}
        \multicolumn{7}{@{}l}{\textit{Reported by \authorcite{zhang2018poisson}}} \\*
        \rowcolor{white}
        Raw & PSNR (dB) & $27.22$ & $30.08$ & $32.86$ & $36.03$ & $39.70$ \\*
        \rowcolor{white}
         & SSIM & $0.5442$ & $0.6800$ & $0.7981$ & $0.8892$ & $0.9487$ \\
        \rowcolor{black!6}
        VST + BM3D & PSNR (dB) & $32.71$ & $34.09$ & $36.05$ & $38.01$ & $40.61$ \\*
        \rowcolor{black!6}
         & SSIM & $0.7922$ & $0.8430$ & $0.8970$ & $0.9336$ & $0.9598$ \\
        \rowcolor{white}
        PURE-LET & PSNR (dB) & $31.95$ & $33.49$ & $35.29$ & $37.25$ & $39.59$ \\*
        \rowcolor{white}
         & SSIM & $0.7664$ & $0.8270$ & $0.8814$ & $0.9212$ & $0.9450$ \\
        \rowcolor{black!6}
        DnCNN & PSNR (dB) & $34.88$ & $36.02$ & $37.57$ & $39.28$ & $\underline{41.57}$ \\*
        \rowcolor{black!6}
         & SSIM & $0.9063$ & $\underline{0.9257}$ & $0.9460$ & $0.9588$ & $0.9721$ \\
        \rowcolor{white}
        Noise2Noise & PSNR (dB) & $\underline{35.40}$ & $\underline{36.40}$ & $\underline{37.59}$ & $\underline{39.43}$ & $41.45$ \\*
        \rowcolor{white}
         & SSIM & $\underline{0.9187}$ & $0.9230$ & $\underline{0.9481}$ & $\underline{0.9601}$ & $\underline{0.9724}$ \\
        \rowcolor{white}
        \multicolumn{7}{@{}l}{\textit{Our evaluation}} \\*
        \rowcolor{black!6}
        Anscombe BMND & PSNR (dB) & $33.59$ & $34.83$ & $36.64$ & $38.48$ & $41.11$ \\*
        \rowcolor{black!6}
         & SSIM & $0.8453$ & $0.8853$ & $0.9237$ & $0.9459$ & $0.9661$ \\
        \rowcolor{white}
        BMND (global) & PSNR (dB) & $33.83$ & $34.96$ & $36.64$ & $38.29$ & $40.69$ \\*
        \rowcolor{white}
         & SSIM & $0.8687$ & $0.8986$ & $0.9285$ & $0.9445$ & $0.9627$ \\
        \rowcolor{black!6}
        BMND (by modality) & PSNR (dB) & $\mathbf{34.59}$ & $\mathbf{35.62}$ & $\mathbf{37.19}$ & $\mathbf{38.86}$ & $\mathbf{41.24}$ \\*
        \rowcolor{black!6}
         & SSIM & $\mathbf{0.9027}$ & $\mathbf{0.9212}$ & $\mathbf{0.9418}$ & $\mathbf{0.9531}$ & $\mathbf{0.9671}$ \\
        \hline\hline
        \rowcolor{white}
        \multicolumn{7}{@{}l}{\textit{Confocal subset}} \\*
        \rowcolor{white}
        Noisy & PSNR (dB) & $29.13$ & $32.11$ & $34.86$ & $37.97$ & $41.08$ \\*
        \rowcolor{white}
         & SSIM & $0.7100$ & $0.8131$ & $0.8883$ & $0.9404$ & $0.9721$ \\
        \rowcolor{black!6}
        Anscombe BMND & PSNR (dB) & $35.88$ & $37.47$ & $38.88$ & $\mathbf{40.63}$ & $\mathbf{42.34}$ \\*
        \rowcolor{black!6}
         & SSIM & $0.9353$ & $0.9523$ & $0.9639$ & $0.9740$ & $\mathbf{0.9814}$ \\
        \rowcolor{white}
        BMND (rank 1) & PSNR (dB) & $\mathbf{35.95}$ & $\mathbf{37.55}$ & $\mathbf{38.90}$ & $\mathbf{40.63}$ & $42.32$ \\*
        \rowcolor{white}
         & SSIM & $\mathbf{0.9390}$ & $\mathbf{0.9552}$ & $\mathbf{0.9650}$ & $\mathbf{0.9743}$ & $0.9813$ \\
        \hline
        \rowcolor{white}
        \multicolumn{7}{@{}l}{\textit{Two-photon subset}} \\*
        \rowcolor{white}
        Noisy & PSNR (dB) & $26.32$ & $28.95$ & $31.58$ & $34.53$ & $38.70$ \\*
        \rowcolor{white}
         & SSIM & $0.4781$ & $0.6303$ & $0.7665$ & $0.8727$ & $0.9417$ \\
        \rowcolor{black!6}
        Anscombe BMND & PSNR (dB) & $\mathbf{34.01}$ & $33.83$ & $35.55$ & $36.56$ & $40.00$ \\*
        \rowcolor{black!6}
         & SSIM & $0.8953$ & $0.8996$ & $0.9308$ & $0.9388$ & $0.9604$ \\
        \rowcolor{white}
        BMND (rank 1) & PSNR (dB) & $34.00$ & $\mathbf{33.92}$ & $\mathbf{35.67}$ & $\mathbf{36.70}$ & $\mathbf{40.18}$ \\*
        \rowcolor{white}
         & SSIM & $\mathbf{0.8999}$ & $\mathbf{0.9061}$ & $\mathbf{0.9349}$ & $\mathbf{0.9420}$ & $\mathbf{0.9631}$ \\
        \hline
        \rowcolor{white}
        \multicolumn{7}{@{}l}{\textit{Widefield subset}} \\*
        \rowcolor{white}
        Noisy & PSNR (dB) & $25.23$ & $28.19$ & $31.23$ & $34.79$ & $38.74$ \\*
        \rowcolor{white}
         & SSIM & $0.3674$ & $0.5322$ & $0.6957$ & $0.8293$ & $0.9209$ \\
        \rowcolor{black!6}
        Anscombe BMND & PSNR (dB) & $29.20$ & $31.76$ & $34.33$ & $37.47$ & $40.54$ \\*
        \rowcolor{black!6}
         & SSIM & $0.6285$ & $0.7544$ & $0.8471$ & $0.9086$ & $0.9483$ \\
        \rowcolor{white}
        BMND (rank 1) & PSNR (dB) & $\mathbf{33.11}$ & $\mathbf{34.65}$ & $\mathbf{36.34}$ & $\mathbf{38.78}$ & $\mathbf{40.84}$ \\*
        \rowcolor{white}
         & SSIM & $\mathbf{0.8458}$ & $\mathbf{0.8849}$ & $\mathbf{0.9122}$ & $\mathbf{0.9325}$ & $\mathbf{0.9489}$ \\
    \end{longtable}
\endgroup

\Cref{tab:fmd-denoising} reports our results alongside the benchmark values published by \authorcite{zhang2018poisson}.
Modality-specific profiles improve both \ac{psnr} and \ac{ssim} over the global profile at every averaging level, with \ac{psnr} gains between $0.55$ and $0.76\,\mathrm{dB}$.
They also outperform Anscombe \ac{bmnd} at every level, although the \ac{psnr} advantage decreases from $1.00\,\mathrm{dB}$ for single acquisitions to $0.13\,\mathrm{dB}$ when averaging 16 images.

The modality-specific comparisons show that the advantage over Anscombe \ac{bmnd} is concentrated in widefield microscopy.
For single acquisitions, direct Poisson \ac{bmnd} improves \ac{psnr} over the noisy input by $6.82$, $7.68$, and $7.88\,\mathrm{dB}$ for confocal, two-photon, and widefield images, respectively.
The corresponding gains with Anscombe \ac{bmnd} are $6.75$, $7.69$, and $3.97\,\mathrm{dB}$.
Thus, the two approaches give similar improvements for confocal and two-photon images, whereas the Anscombe approach is substantially less effective on the widefield subset.

For widefield images, the advantage of direct Poisson \ac{bmnd} over Anscombe \ac{bmnd} decreases from $3.91\,\mathrm{dB}$ for single acquisitions to $0.30\,\mathrm{dB}$ when averaging 16 images.
The corresponding \ac{ssim} advantage decreases from $0.2173$ to $0.0006$.
This trend is consistent with limitations of the Gaussian approximation after variance stabilization in low-count regions.

\Cref{fig:fmd:fov19} shows single-acquisition denoising examples from held-out FOV 19 for all three microscopy modalities.
The enlarged regions allow comparison of residual noise and preservation of biological structures between Anscombe \ac{bmnd} and direct Poisson \ac{bmnd}.

The modality-specific full-set results exceed the published \ac{vst} + \ac{bm3d} and PURE-LET scores in both metrics at every averaging level, but remain below the best published learned-method results.
These scores are taken from the literature.
We did not reevaluate the corresponding methods.

\begin{figure}[htbp]
    \centering
    \includegraphics[width=\linewidth]{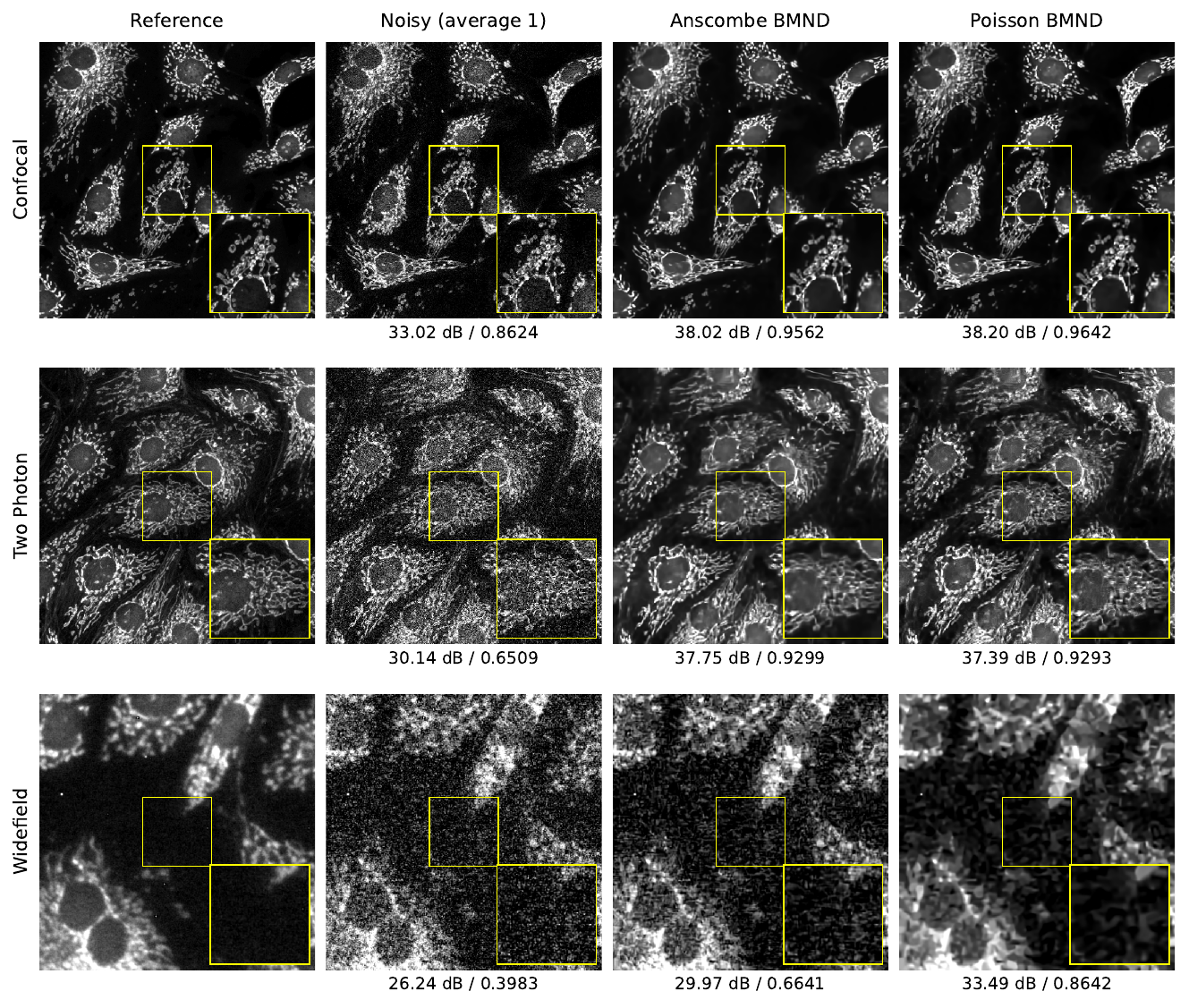}
    \caption{Single-acquisition FMD denoising examples from held-out FOV 19.
    Rows show confocal, two-photon, and widefield microscopy.
    Columns show the reference, noisy input, Anscombe \ac{bmnd}, and direct Poisson \ac{bmnd} with modality-specific profiles.
    Yellow boxes indicate enlarged regions.
    Labels report full-image \ac{psnr}/\ac{ssim}.}
    \label{fig:fmd:fov19}
\end{figure}

\subsection{1-Dimensional ECG Data Denoising}
\label{sec:experiments:ecg}
To demonstrate the applicability of the dimension-independent formulation beyond images and volumes, we evaluate one-dimensional \ac{bmnd} in a controlled experiment with additive white Gaussian noise using channel zero from all records in the MIT--BIH Arrhythmia Database.
The dataset is split into 16 development subjects and 31 held-out test subjects.
For each record, we evaluate five fixed 20-second windows and retain three seconds of unscored context on either side.

We add Gaussian noise at input \ac{snr} levels of $0$, $6$, $12$, and $18\,\mathrm{dB}$ without clipping the resulting observations.
One noise realization is used during development and two during confirmation.
We select the parameters for \ac{bmnd} and each comparator on the development subjects by maximizing the mean \ac{snr} improvement across subjects, with equal weighting across the four input \ac{snr} levels.
\Cref{app:ecg} shows the bounds of this parameter search.
All configurations are then fixed before evaluation on the confirmation subjects.

The comparison includes tuned soft thresholding in the \ac{dwt} domain~\cite{donoho1994ideal,donoho1995denoising,addison2005wavelet}, \ac{nlm}~\cite{tracey2012nlm}, \ac{sscf}~\cite{liu2021cooperative}, as well as fixed and adaptive Savitzky--Golay filtering~\cite{huang2019adaptive}.

The primary reconstruction metric is \ac{snr} improvement, calculated as the difference between the output \ac{snr} of the denoised signal and the measured input \ac{snr} of the noisy observation.
Positive values therefore correspond to a reduction in squared reconstruction error.
Output \ac{snr} is calculated from the energy of the mean-centered reference \ac{ecg} and the squared error between the reconstruction and reference.

Global reconstruction error does not separately quantify distortion of the QRS complex, so we also evaluate QRS-specific waveform and amplitude errors.
The QRS complex reflects ventricular electrical activation and contains the sharp deflections characteristic of each heartbeat.
We measure the \ac{rmse} within $80\,\mathrm{ms}$ of each expert-annotated beat, together with the relative error and signed bias of beat-centered peak-to-peak QRS amplitude.

Metrics are calculated first for each segment and noise realization, then aggregated within subjects and summarized across subjects.
We obtain $95\%$ confidence intervals by bootstrapping subjects, thereby preserving the dependence among segments and repeated noise realizations from the same subject.

\begin{figure}[htbp]
    \centering
    \includegraphics[width=0.7\linewidth]{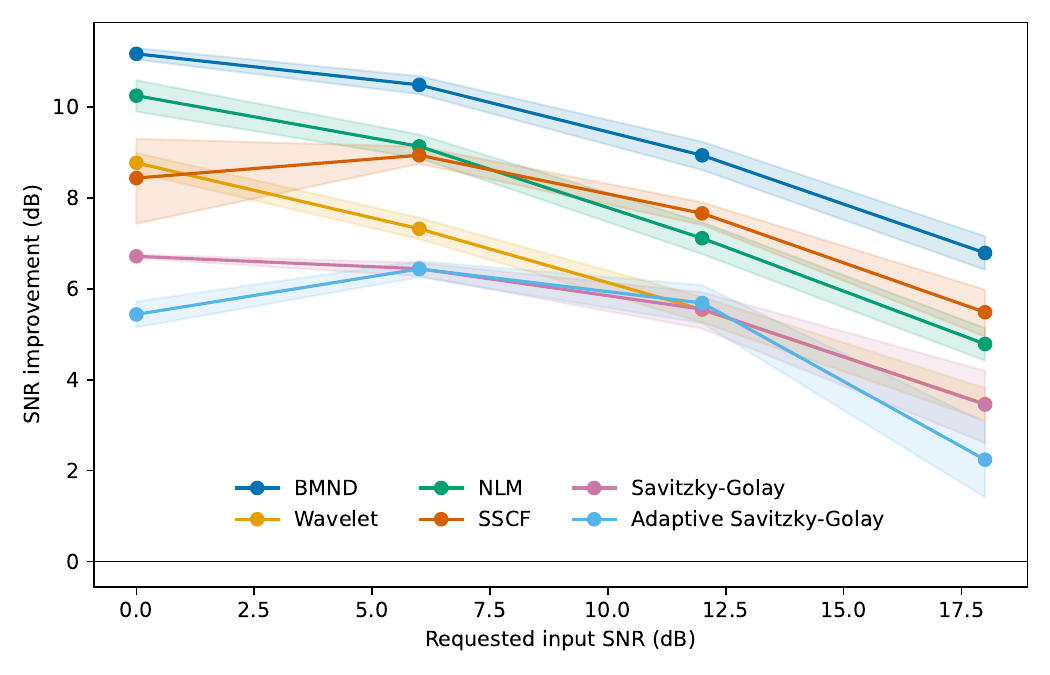}
    \caption{Mean \ac{snr} improvement across confirmation subjects at each input \ac{snr} level. Shaded areas denote 95\% subject-bootstrap confidence intervals.}
    \label{fig:ecg:snr-improvement}
\end{figure}

\begin{figure}[htbp]
    \centering
    \includegraphics[width=0.7\linewidth]{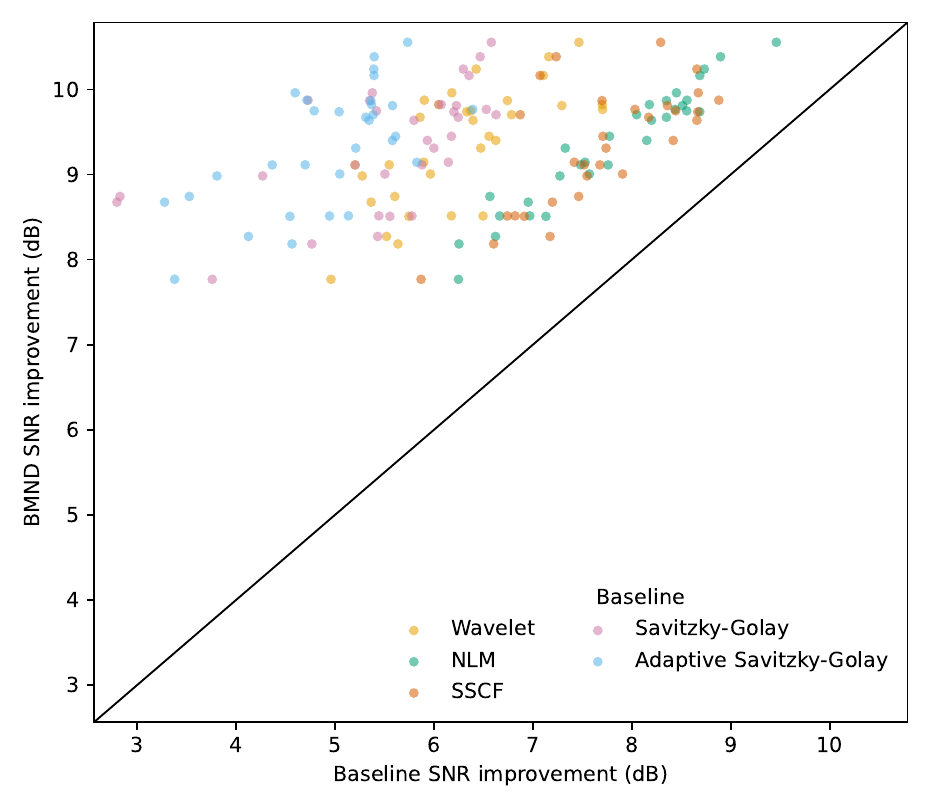}
    \caption{Paired subject-level comparison of \ac{bmnd} with each baseline, averaged across input \ac{snr} levels. Points above the diagonal indicate greater \ac{snr} improvement with \ac{bmnd}. Colors identify the corresponding baseline.}
    \label{fig:ecg:paired-comparison}
\end{figure}

As shown in \Cref{fig:ecg:snr-improvement}, \ac{bmnd} gives the largest mean \ac{snr} improvement at every input noise level.
Its improvement decreases from $11.17\,\mathrm{dB}$ at an input \ac{snr} of $0\,\mathrm{dB}$ to $6.79\,\mathrm{dB}$ at $18\,\mathrm{dB}$.

The subject-level results in \Cref{fig:ecg:paired-comparison} follow the same pattern.
Every point lies above the identity line, so \ac{bmnd} outperforms each baseline for every confirmation subject after averaging over the four input \ac{snr} levels.
The differences are largest for the two Savitzky--Golay methods, while \ac{nlm} and \ac{sscf} are the closest competitors.

\begin{table}[htbp]
    \centering
    \caption{QRS morphology preservation across input \ac{snr} levels, measured by waveform \ac{rmse}, relative amplitude error, and signed relative amplitude bias. Brackets report 95\% subject-bootstrap confidence intervals. Best results are bold and second-best results are underlined.}
    \label{tab:ecg:morphology}
    \vspace{3mm}
    \setlength{\tabcolsep}{3pt}
    \begin{tabular}{@{}lcccc@{}}
\toprule
Method & 0 dB & 6 dB & 12 dB & 18 dB \\
\midrule
\multicolumn{5}{@{}l}{\textit{QRS RMSE $\downarrow$}} \\
BMND & \shortstack{$\mathbf{0.1105}$\\{\scriptsize $[0.0965,0.1254]$}} & \shortstack{$\mathbf{0.0638}$\\{\scriptsize $[0.0557,0.0724]$}} & \shortstack{$\mathbf{0.0388}$\\{\scriptsize $[0.0341,0.0437]$}} & \shortstack{$\mathbf{0.0238}$\\{\scriptsize $[0.0210,0.0267]$}} \\
Wavelet & \shortstack{$0.1901$\\{\scriptsize $[0.1699,0.2113]$}} & \shortstack{$0.1188$\\{\scriptsize $[0.1055,0.1322]$}} & \shortstack{$0.0726$\\{\scriptsize $[0.0644,0.0808]$}} & \shortstack{$0.0430$\\{\scriptsize $[0.0383,0.0478]$}} \\
NLM & \shortstack{$\underline{0.1395}$\\{\scriptsize $[0.1184,0.1627]$}} & \shortstack{$0.0768$\\{\scriptsize $[0.0682,0.0858]$}} & \shortstack{$0.0532$\\{\scriptsize $[0.0475,0.0592]$}} & \shortstack{$0.0350$\\{\scriptsize $[0.0308,0.0393]$}} \\
SSCF & \shortstack{$0.2234$\\{\scriptsize $[0.1505,0.3079]$}} & \shortstack{$\underline{0.0700}$\\{\scriptsize $[0.0615,0.0791]$}} & \shortstack{$\underline{0.0415}$\\{\scriptsize $[0.0366,0.0465]$}} & \shortstack{$\underline{0.0247}$\\{\scriptsize $[0.0220,0.0276]$}} \\
Savitzky--Golay & \shortstack{$0.1583$\\{\scriptsize $[0.1384,0.1787]$}} & \shortstack{$0.0860$\\{\scriptsize $[0.0761,0.0962]$}} & \shortstack{$0.0533$\\{\scriptsize $[0.0477,0.0588]$}} & \shortstack{$0.0401$\\{\scriptsize $[0.0353,0.0453]$}} \\
Adaptive Savitzky--Golay & \shortstack{$0.1935$\\{\scriptsize $[0.1645,0.2241]$}} & \shortstack{$0.0947$\\{\scriptsize $[0.0835,0.1065]$}} & \shortstack{$0.0617$\\{\scriptsize $[0.0568,0.0664]$}} & \shortstack{$0.0534$\\{\scriptsize $[0.0487,0.0580]$}} \\
\addlinespace
\multicolumn{5}{@{}l}{\textit{QRS amplitude error (\%) $\downarrow$}} \\
BMND & \shortstack{$\mathbf{7.97}$\\{\scriptsize $[6.85,9.18]$}} & \shortstack{$\mathbf{4.43}$\\{\scriptsize $[3.99,4.90]$}} & \shortstack{$\mathbf{2.65}$\\{\scriptsize $[2.43,2.89]$}} & \shortstack{$\mathbf{1.61}$\\{\scriptsize $[1.47,1.76]$}} \\
Wavelet & \shortstack{$17.67$\\{\scriptsize $[15.42,20.00]$}} & \shortstack{$11.69$\\{\scriptsize $[10.14,13.32]$}} & \shortstack{$7.46$\\{\scriptsize $[6.51,8.45]$}} & \shortstack{$4.59$\\{\scriptsize $[4.02,5.19]$}} \\
NLM & \shortstack{$15.20$\\{\scriptsize $[11.25,19.63]$}} & \shortstack{$6.09$\\{\scriptsize $[5.13,7.15]$}} & \shortstack{$4.10$\\{\scriptsize $[3.66,4.57]$}} & \shortstack{$2.63$\\{\scriptsize $[2.34,2.95]$}} \\
SSCF & \shortstack{$21.37$\\{\scriptsize $[14.45,29.67]$}} & \shortstack{$\underline{4.95}$\\{\scriptsize $[4.37,5.59]$}} & \shortstack{$\underline{2.89}$\\{\scriptsize $[2.65,3.15]$}} & \shortstack{$\underline{1.74}$\\{\scriptsize $[1.58,1.90]$}} \\
Savitzky--Golay & \shortstack{$\underline{12.40}$\\{\scriptsize $[10.69,14.17]$}} & \shortstack{$6.91$\\{\scriptsize $[5.76,8.33]$}} & \shortstack{$5.62$\\{\scriptsize $[4.21,7.34]$}} & \shortstack{$5.37$\\{\scriptsize $[3.83,7.23]$}} \\
Adaptive Savitzky--Golay & \shortstack{$19.15$\\{\scriptsize $[15.79,22.71]$}} & \shortstack{$7.29$\\{\scriptsize $[6.50,8.08]$}} & \shortstack{$5.33$\\{\scriptsize $[4.43,6.38]$}} & \shortstack{$5.68$\\{\scriptsize $[4.58,6.86]$}} \\
\addlinespace
\multicolumn{5}{@{}l}{\textit{QRS amplitude bias (\%) $\to 0$}} \\
BMND & \shortstack{$\mathbf{-1.04}$\\{\scriptsize $[-2.47,0.26]$}} & \shortstack{$\mathbf{-0.01}$\\{\scriptsize $[-0.41,0.38]$}} & \shortstack{$\mathbf{0.06}$\\{\scriptsize $[-0.11,0.23]$}} & \shortstack{$\mathbf{0.16}$\\{\scriptsize $[0.05,0.27]$}} \\
Wavelet & \shortstack{$-15.46$\\{\scriptsize $[-18.47,-12.53]$}} & \shortstack{$-10.85$\\{\scriptsize $[-12.71,-9.05]$}} & \shortstack{$-6.93$\\{\scriptsize $[-8.05,-5.83]$}} & \shortstack{$-4.27$\\{\scriptsize $[-4.95,-3.61]$}} \\
NLM & \shortstack{$-12.00$\\{\scriptsize $[-16.82,-7.67]$}} & \shortstack{$\underline{0.47}$\\{\scriptsize $[-0.63,1.47]$}} & \shortstack{$2.80$\\{\scriptsize $[2.37,3.26]$}} & \shortstack{$1.95$\\{\scriptsize $[1.60,2.30]$}} \\
SSCF & \shortstack{$-19.34$\\{\scriptsize $[-28.08,-12.14]$}} & \shortstack{$-1.17$\\{\scriptsize $[-1.81,-0.57]$}} & \shortstack{$\underline{0.16}$\\{\scriptsize $[-0.07,0.39]$}} & \shortstack{$\underline{-0.18}$\\{\scriptsize $[-0.43,0.07]$}} \\
Savitzky--Golay & \shortstack{$\underline{6.44}$\\{\scriptsize $[3.07,9.68]$}} & \shortstack{$-1.19$\\{\scriptsize $[-3.50,0.96]$}} & \shortstack{$-3.99$\\{\scriptsize $[-6.02,-2.21]$}} & \shortstack{$-4.87$\\{\scriptsize $[-6.83,-3.20]$}} \\
Adaptive Savitzky--Golay & \shortstack{$15.71$\\{\scriptsize $[11.20,20.35]$}} & \shortstack{$1.04$\\{\scriptsize $[-0.98,3.08]$}} & \shortstack{$-3.77$\\{\scriptsize $[-5.21,-2.47]$}} & \shortstack{$-5.41$\\{\scriptsize $[-6.68,-4.20]$}} \\
\bottomrule
\end{tabular}

\end{table}

\begin{figure}[htbp]
    \centering
    \includegraphics[width=0.8\linewidth]{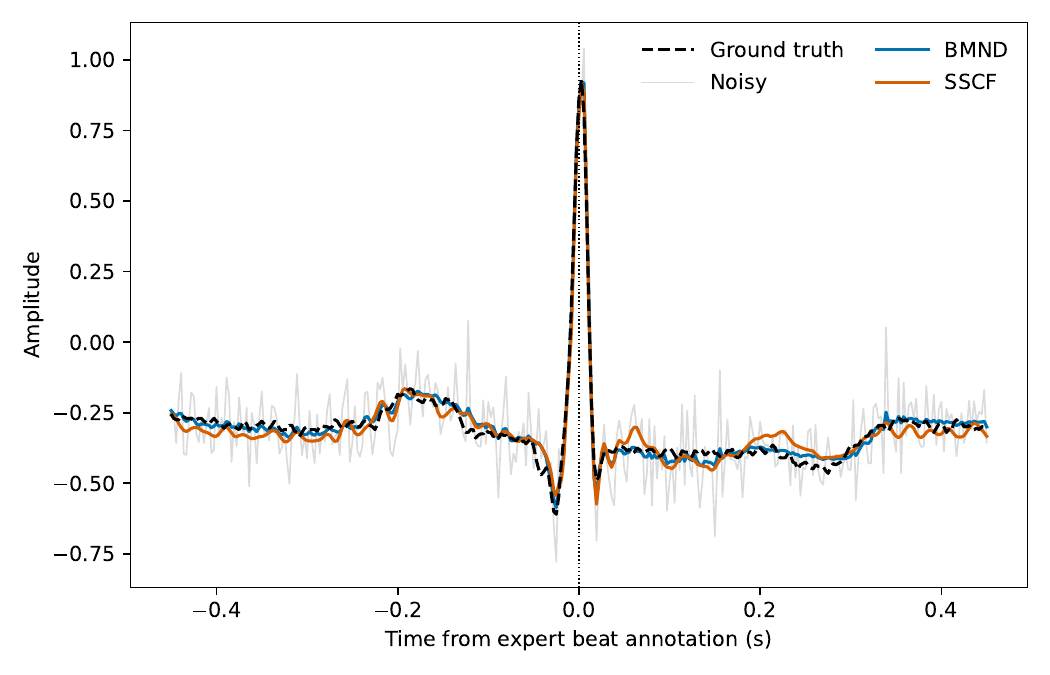}
    \caption{Example beat-centered \ac{ecg} waveforms comparing the reference, noisy observation, \ac{bmnd}, and \ac{sscf}.
    The vertical dotted line marks the expert beat annotation.}
    \label{fig:ecg:qrs-morphology}
\end{figure}

\Cref{tab:ecg:morphology} shows that the gain in \ac{snr} is accompanied by lower errors in the evaluated QRS morphology measures.
\Ac{bmnd} has the lowest QRS waveform \ac{rmse} and relative amplitude error at every input \ac{snr} level.
Its QRS \ac{rmse} decreases from $0.1105$ to $0.0238$, while its relative amplitude error decreases from $7.97\%$ to $1.61\%$.
The signed amplitude bias remains between $-1.04\%$ and $0.16\%$.
In contrast, the wavelet method consistently underestimates QRS amplitude, and both Savitzky--Golay methods introduce larger biases at the higher input \ac{snr} levels.
\Cref{fig:ecg:qrs-morphology} provides a qualitative comparison with \ac{sscf} for an example beat.
Both methods preserve the dominant QRS peak, while \ac{bmnd} shows smaller deviations from the reference immediately after the peak in this example.
These results indicate that \ac{bmnd} reduces noise with less distortion of QRS amplitude and waveform shape than the evaluated baselines.

\section{Discussion and Conclusion}
\label{sec:conclusion}
In this work, we presented \ac{bmnd}, a dimension-independent formulation of block-matching collaborative filtering for Gaussian and Poisson noise.
The formulation incorporates noise-aware patch matching, collaborative filtering, and aggregation, with optional mass conservation.
We evaluated the method on one-dimensional physiological signals, two-dimensional images, and three-dimensional volumes.

The ablation study shows that the benefits of noise-aware modeling depend on the algorithmic component.
Adapting patch matching and Wiener gains to the observation statistics improves marginal mean reconstruction quality across all evaluated dataset groups.
Noise-aware aggregation gives smaller improvements, with positive marginal mean effects of Wiener-stage variance and risk weighting on natural images, fluorescence microscopy acquisitions, and anatomical phantom data.
Bias-aware risk weighting gives no clear additional benefit over variance weighting, and exact covariance treatment does not consistently improve reconstruction quality across datasets.

The Shepp--Logan experiment shows that mass conservation can reduce denoising-induced intensity loss at low Poisson counts.
Preserving the observed total does not, however, guarantee unbiased regional intensities or improved reconstructed-image \ac{psnr}.
Conservation only in the hard-thresholding stage gives the largest low-count reconstruction \ac{psnr} gains in this experiment, whereas two-stage conservation combines preservation of the observed total with improved reconstruction quality.

On \ac{fmd}, direct Poisson \ac{bmnd} uses an intensity shift to match the calibrated affine mean--variance relationship, although this does not ensure that the shifted observations follow an exact Poisson distribution.
Its advantage over Anscombe \ac{bmnd} is largest for single-acquisition widefield images and decreases with acquisition averaging.
This trend is consistent with a more accurate Gaussian approximation after variance stabilization at higher effective counts.
At every evaluated averaging level, modality-specific \ac{bmnd} gives higher aggregate \ac{psnr} and \ac{ssim} than the evaluated Anscombe baseline and the published \ac{vst} + \ac{bm3d} and PURE-LET results, but remains below the best published learning-method scores.
The improvement over the selected global configuration also shows that a common formulation does not remove the need for application-specific parameter selection.

The \ac{ecg} evaluation demonstrates the applicability of the formulation beyond imaging.
Under added Gaussian noise, one-dimensional \ac{bmnd} gives larger mean \ac{snr} improvements and lower QRS waveform and amplitude errors than the evaluated baselines.
This experiment assesses reconstruction and QRS morphology under controlled noise rather than performance on physiological recording artifacts.

Future work will investigate \ac{bmnd} on higher-dimensional positron emission tomography projection data, including sinograms with time-of-flight, dynamic, and gating axes.
Such acquisitions can yield six-dimensional or higher-dimensional data representations, providing a natural application of the dimension-independent formulation.
Evaluation will assess the benefits of joint processing across these axes while examining sinogram consistency and the preservation of quantitative uptake and temporal fidelity after reconstruction.
An explicit mixed Poisson–Gaussian observation model would extend the framework beyond the present scaled-Poisson formulation and its moment-matching approximation for fluorescence data.
Finally, we will investigate approximate global block matching under Gaussian and Poisson noise, including search representations adapted to the geometry of the matching statistics.  
This evaluation will assess computational cost and denoising quality, particularly the trade-off between access to additional self-similar patches and increased susceptibility to noise-driven matches.

\section*{Acknowledgments}
This work was supported by grants from Deutsche Forschungsgemeinschaft (CRC1450 project ID 431460824).

OpenAI's GPT-5.2, GPT-5.4, GPT-5.5, GPT-5.6, and GPT-6 were used during all development steps.
OpenAI's GPT-6 was used for language editing and revisions to the \LaTeX{} source.

\bibliographystyle{plainnat}
\bibliography{bibliography}

\clearpage
\appendix

\crefname{appendix}{appendix}{appendices}
\Crefname{appendix}{Appendix}{Appendices}
\crefalias{section}{appendix}

\section{Standard Configuration}
\label{app:standard-configuration}
\Cref{tab:standard-gaussian-profiles} shows the parameters used for experiments if not stated otherwise.
These are taken from the modern \Ac{bm3d}/\Ac{bm4d} implementations.

\begingroup
\footnotesize
\begin{longtable}{@{}>{\raggedright\arraybackslash}p{0.28\textwidth}>{\raggedright\arraybackslash}p{0.10\textwidth}>{\raggedright\arraybackslash}p{0.23\textwidth}>{\raggedright\arraybackslash}p{0.23\textwidth}@{}}
    \caption{Standard Gaussian BM3D and BM4D profile values used by BMND.}
    \label{tab:standard-gaussian-profiles} \\
        \toprule
        Parameter & Stage & BM3D & BM4D \\
        \midrule
        \endfirsthead

        \toprule
        Parameter & Stage & BM3D & BM4D \\
        \midrule
        \endhead

        \midrule
        \multicolumn{4}{r}{Continued on next page} \\
        \endfoot

        \bottomrule
        \endlastfoot

        \multicolumn{4}{@{}l}{\textit{General and scheduling}} \\
        Reference schedule & Both & \texttt{generated} & \texttt{generated} \\
        Shift density & Both & 2.0 & 2.0 \\
        Schedule density & Both & 2 & 2 \\
        \addlinespace
        \multicolumn{4}{@{}l}{\textit{Covariance and matching}} \\
        Local-variance domain $n_f$ & Both & $32 \times 32$ & $16 \times 16 \times 16$ \\
        Exact covariance planes $k$ & Both & 4 & 4 \\
        Matching distance & Both & \texttt{ssd} & \texttt{ssd} \\
        HT SSD bias factor $\gamma$ & HT & 3.0 & 3.0 \\
        \addlinespace
        \multicolumn{4}{@{}l}{\textit{Aggregation}} \\
        Weight model / domain / scope & HT & \texttt{variance} / \texttt{coefficient} / \texttt{patch} & \texttt{variance} / \texttt{coefficient} / \texttt{patch} \\
        Weight model / domain / scope & Wiener & \texttt{variance} / \texttt{coefficient} / \texttt{patch} & \texttt{variance} / \texttt{coefficient} / \texttt{patch} \\
        \addlinespace
        \multicolumn{4}{@{}l}{\textit{Patch grouping}} \\
        Block size & HT & $8 \times 8$ & $4 \times 4 \times 4$ \\
        Block size & Wiener & $8 \times 8$ & $5 \times 5 \times 5$ \\
        Reference step & HT & $3 \times 3$ & $3 \times 3 \times 3$ \\
        Reference step & Wiener & $3 \times 3$ & $3 \times 3 \times 3$ \\
        Search window & HT & $19 \times 19$ & $7 \times 7 \times 7$ \\
        Search window & Wiener & $19 \times 19$ & $7 \times 7 \times 7$ \\
        Minimum group size & HT & 2 & 2 \\
        Minimum group size & Wiener & 2 & 2 \\
        Maximum group size & HT & 16 & 16 \\
        Maximum group size & Wiener & 32 & 32 \\
        Matching threshold & HT & 2.9527 & 2.9527 \\
        Matching threshold & Wiener & 0.3937 & 0.7689 \\
        \addlinespace
        \multicolumn{4}{@{}l}{\textit{Collaborative filtering}} \\
        Thresholding & HT & hard & hard \\
        Threshold multiplier $\lambda$ & HT & 3.0 & 3.0 \\
        Variance scale & Wiener & 0.4 & 0.4 \\
        Kaiser window $\beta$ & HT & 2.0 & 2.0 \\
        Kaiser window $\beta$ & Wiener & 2.0 & 2.0 \\
        Transforms & HT & Haar (group), bior1.5 (spatial) & Haar (group), bior1.5 (spatial) \\
        Transforms & Wiener & Haar (group), DCT (spatial) & Haar (group), DCT (spatial) \\
\end{longtable}
\endgroup

\section{Ablation Study}
\label{app:ablation}

\Cref{tab:ablation-profile-space} shows the parameters used for the ablation experiment.
All profile fields not listed here retain the dimension-specific standard configuration in \Cref{app:standard-configuration}.

\begingroup
\small
\setlength{\tabcolsep}{4pt}
\renewcommand{\arraystretch}{1.12}
\begin{longtable}{@{}>{\raggedright\arraybackslash}p{0.30\textwidth}>{\raggedright\arraybackslash}p{0.55\textwidth}r@{}}
    \caption{Profile space evaluated in the Cartesian ablation. The level count is the number of independently crossed options.}
    \label{tab:ablation-profile-space} \\

    \toprule
    Profile parameter & Candidate values or fixed setting & Levels \\
    \midrule
    \endfirsthead

    \toprule
    Profile parameter & Candidate values or fixed setting & Levels \\
    \midrule
    \endhead

    \midrule
    \multicolumn{3}{r}{Continued on next page} \\
    \endfoot

    \bottomrule
    \endlastfoot

    \multicolumn{3}{@{}l}{\textit{Varied Cartesian factors}} \\
    HT matching distance / policy & \texttt{poisson\_deviance} / \texttt{fixed}, \texttt{poisson\_deviance} / \texttt{reference\_finite\_count}, \texttt{poisson\_deviance} / \texttt{candidate\_standardized}, \texttt{pearson} / \texttt{fixed}, \texttt{pearson} / \texttt{reference\_finite\_count}, \texttt{pearson} / \texttt{candidate\_standardized}, \texttt{anscombe\_ssd} / \texttt{fixed}, \texttt{anscombe\_ssd} / \texttt{reference\_finite\_count}, \texttt{anscombe\_ssd} / \texttt{candidate\_standardized}, \texttt{ssd} / \texttt{fixed} & 10 \\
    Exact covariance planes $k$ & $\{ 0, 1, 4, 32 \}$ & 4 \\
    Wiener gain & $\{\,$\texttt{classic}, \texttt{noise-floor}, \texttt{variance-scaled}$\,\}$ & 3 \\
    HT aggregation-weight model & $\{\,$\texttt{classic}, \texttt{variance}$\,\}$ & 2 \\
    HT aggregation-weight domain & $\{\,$\texttt{coefficient}, \texttt{windowed\_synthesis}$\,\}$ & 2 \\
    HT aggregation-weight scope & $\{\,$\texttt{group}, \texttt{patch}$\,\}$ & 2 \\
    Wiener aggregation-weight model & $\{\,$\texttt{classic}, \texttt{variance}, \texttt{risk}$\,\}$ & 3 \\
    Wiener aggregation-weight domain & $\{\,$\texttt{coefficient}, \texttt{windowed\_synthesis}$\,\}$ & 2 \\
    Wiener aggregation-weight scope & $\{\,$\texttt{group}, \texttt{patch}$\,\}$ & 2 \\
    Group-mass conservation stage & $\{\,$\texttt{none}, \texttt{ht}, \texttt{wiener}, \texttt{both}$\,\}$ & 4 \\
    HT threshold rule & $\{\,$\texttt{hard}, \texttt{soft}$\,\}$ & 2 \\
    HT threshold multiplier $\lambda$ & $\{ 1.5, 3, 4.5 \}$ & 3 \\
    \addlinespace
    \multicolumn{2}{r}{\textbf{Total Cartesian profiles}} & \textbf{276,480} \\
    \addlinespace
    \multicolumn{3}{@{}l}{\textit{Fixed and conditional settings}} \\
    Noise model & \texttt{poisson} & 1 \\
    Reference schedule & \texttt{generated} with shift density 2 and schedule density 2 & 1 \\
    Wiener matching & \texttt{pilot} source / \texttt{fixed} policy & 1 \\
    Wiener variance scale & 0.4 & 1 \\
    Global mass conservation & \texttt{none} & 1 \\
    Finite-count structural allowance & $\beta_s = 0.1$ and intensity power $\kappa_s = 1$ when the HT policy is \texttt{reference\_finite\_count} & 1 \\
    HT SSD bias factor $\gamma$ & $3$ when the matching distance is \texttt{ssd} & 1 \\
\end{longtable}
\endgroup

\begin{figure}[htbp]
    \centering
    \includegraphics[width=0.8\linewidth]{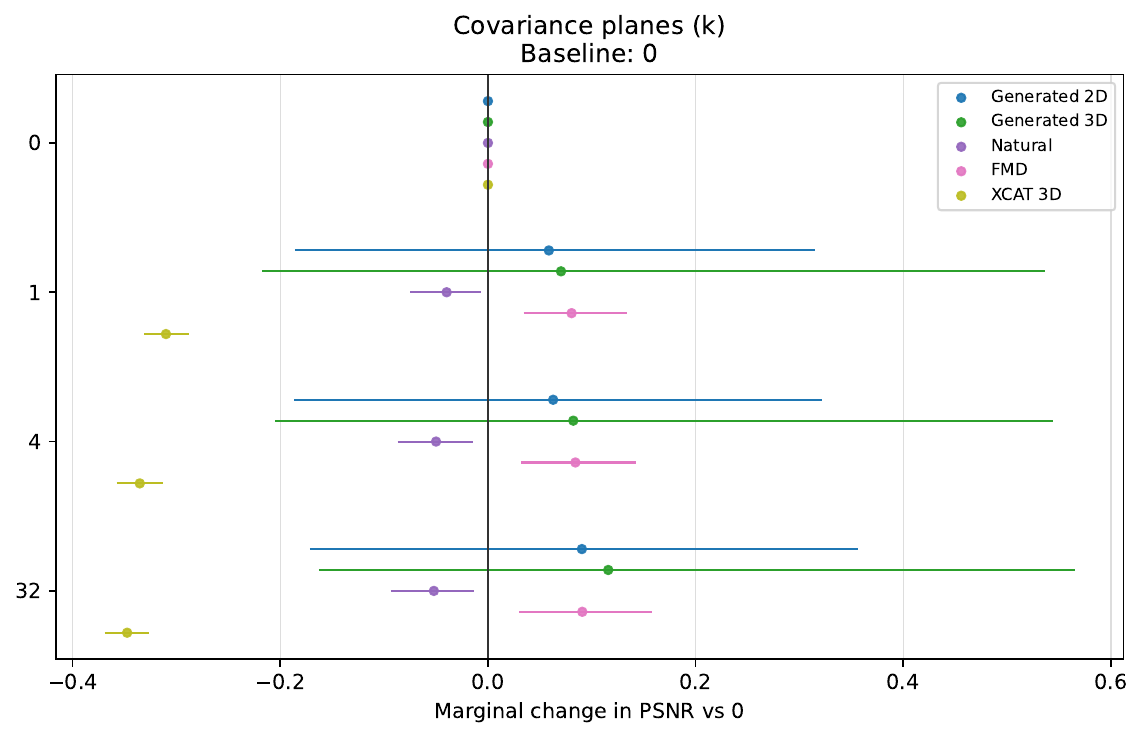}
    \caption{Marginal effects of the number of covariance planes $k$ on \protect\ac{psnr} relative to $k = 0$.
    Error bars denote 95\% source-unit bootstrap intervals.}
    \label{fig:ablation:covariance-planes}
\end{figure}

\begin{figure}[htbp]
    \centering
    \includegraphics[width=0.8\linewidth]{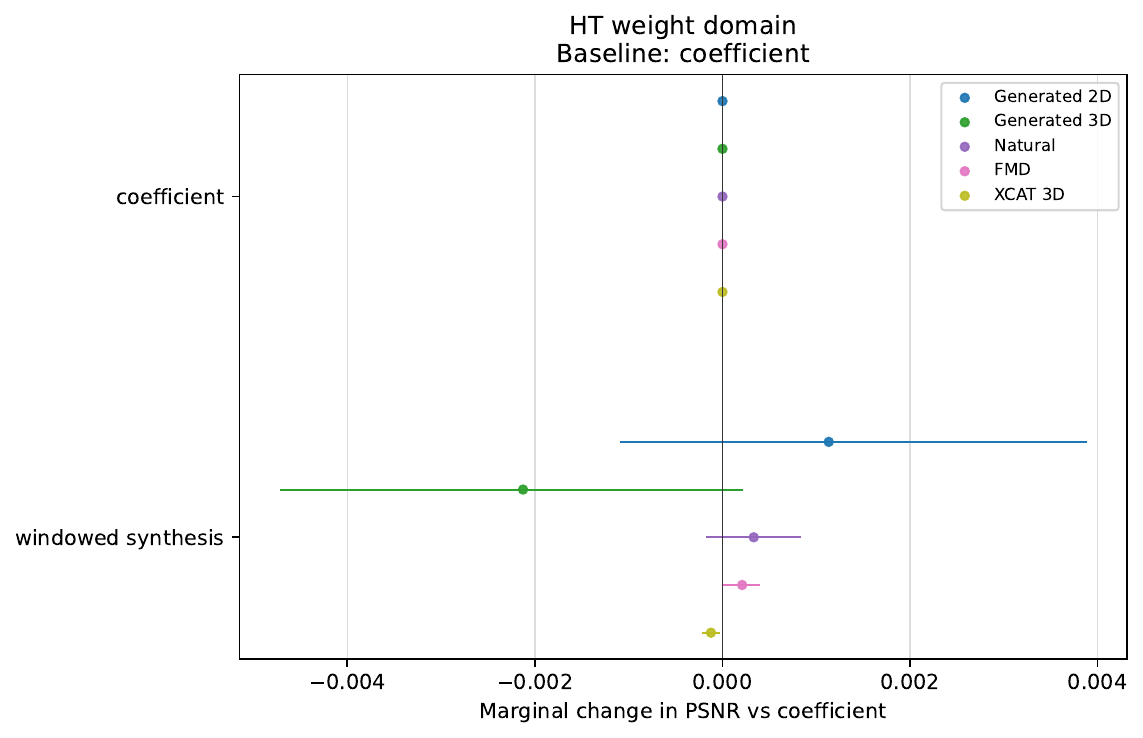}
    \caption{Marginal effects of the hard-thresholding aggregation-weight domain on \protect\ac{psnr} relative to coefficient-domain weighting.
    Error bars denote 95\% source-unit bootstrap intervals.}
    \label{fig:ablation:ht-weight-domain}
\end{figure}

\begin{figure}[htbp]
    \centering
    \includegraphics[width=0.8\linewidth]{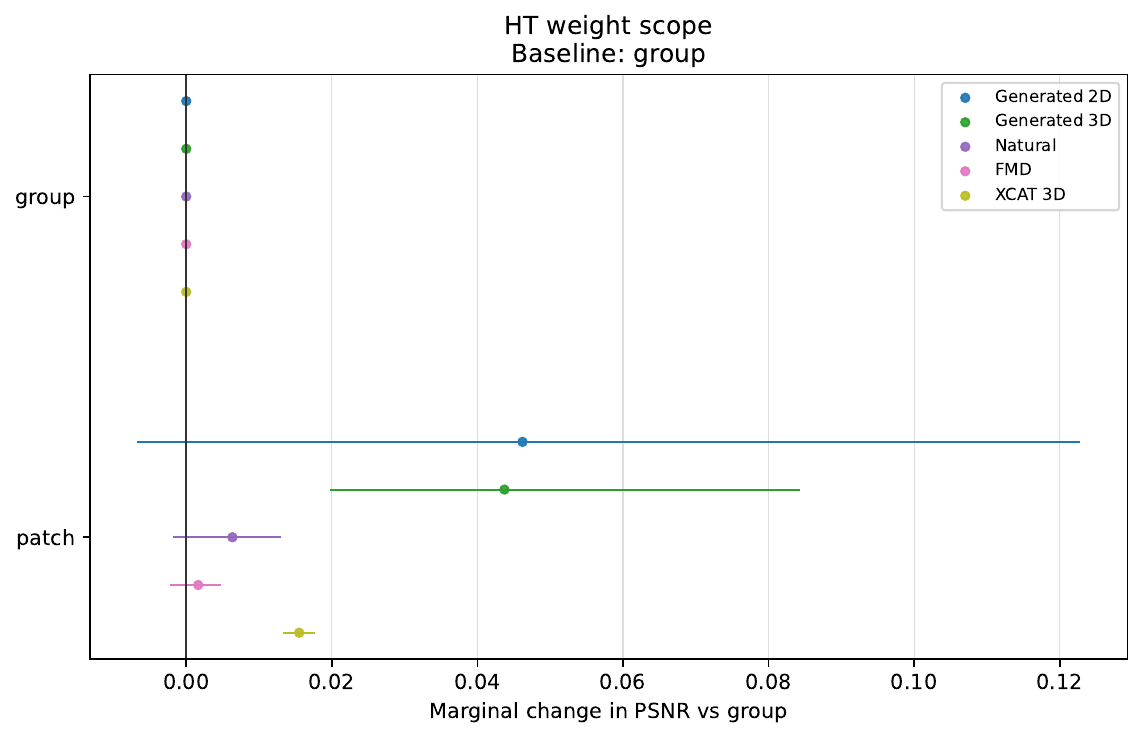}
    \caption{Marginal effects of the hard-thresholding aggregation-weight scope on \protect\ac{psnr} relative to group-level weighting.
    Error bars denote 95\% source-unit bootstrap intervals.}
    \label{fig:ablation:ht-weight-scope}
\end{figure}

\begin{figure}[htbp]
    \centering
    \includegraphics[width=0.8\linewidth]{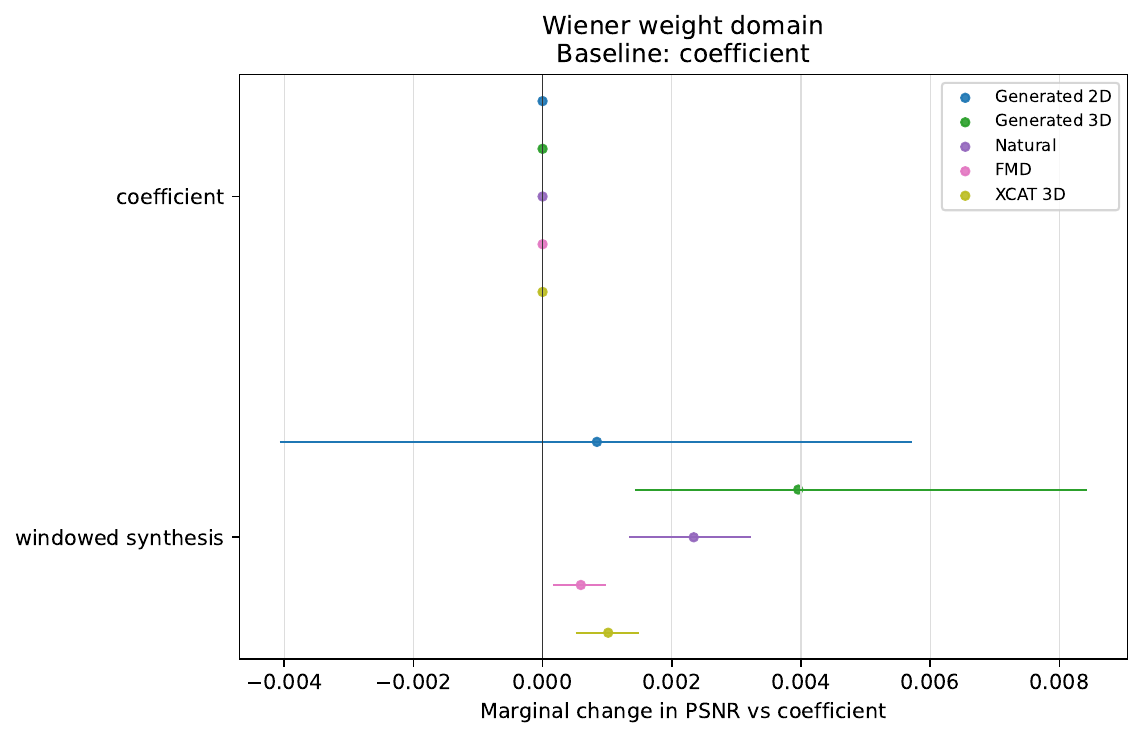}
    \caption{Marginal effects of the Wiener-stage aggregation-weight domain on \protect\ac{psnr} relative to coefficient-domain weighting.
    Error bars denote 95\% source-unit bootstrap intervals.}
    \label{fig:ablation:wiener-weight-domain}
\end{figure}

\begin{figure}[htbp]
    \centering
    \includegraphics[width=0.8\linewidth]{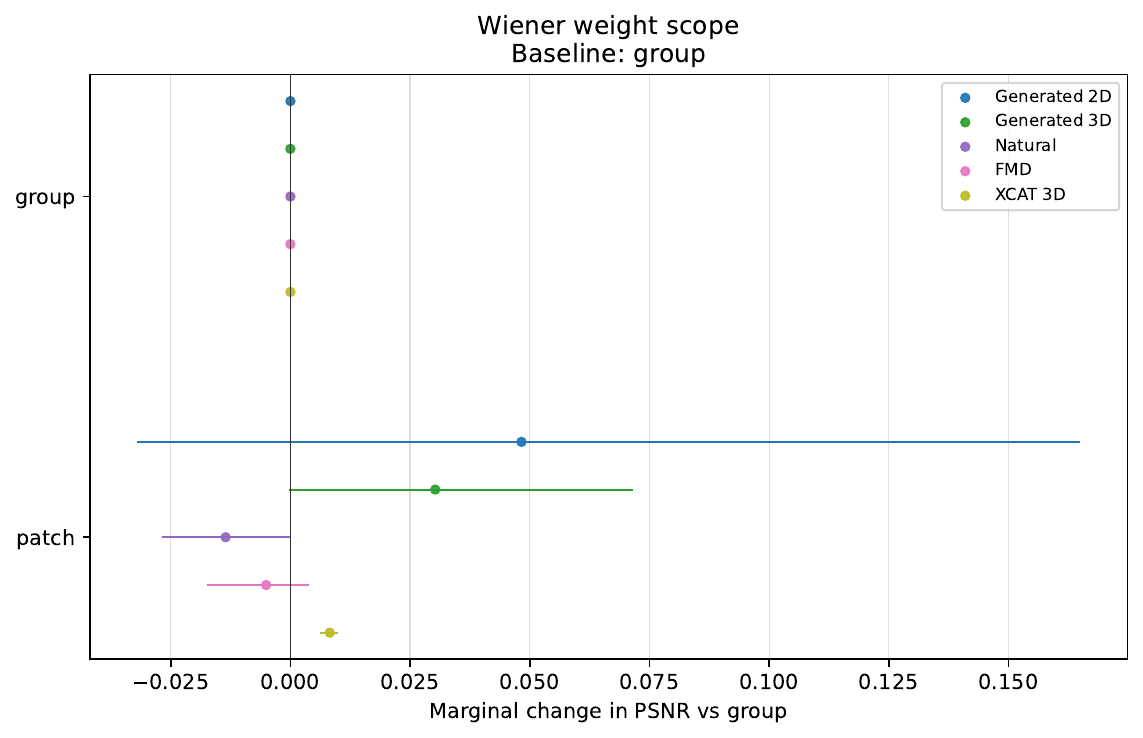}
    \caption{Marginal effects of the Wiener-stage aggregation-weight scope on \protect\ac{psnr} relative to group-level weighting.
    Error bars denote 95\% source-unit bootstrap intervals.}
    \label{fig:ablation:wiener-weight-scope}
\end{figure}

\begin{figure}[htbp]
    \centering
    \includegraphics[width=0.8\linewidth]{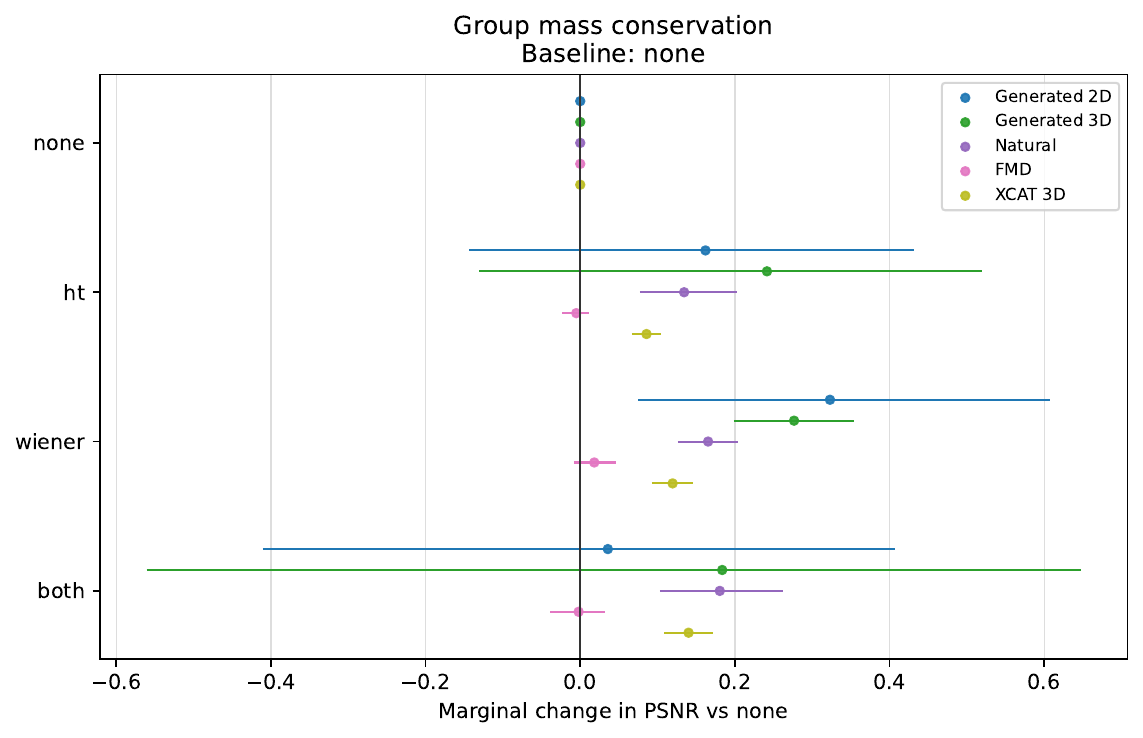}
    \caption{Marginal effects of group mass conservation on \protect\ac{psnr} relative to denoising without mass conservation.
    Conservation is applied in the hard-thresholding stage, the Wiener stage, or both stages.
    Error bars denote 95\% source-unit bootstrap intervals.}
    \label{fig:ablation:group-mass}
\end{figure}
\section{ECG Experiment}
\label{app:ecg}

\begingroup
\small
\setlength{\tabcolsep}{4pt}
\renewcommand{\arraystretch}{1.12}

\begin{longtable}{@{}>{\raggedright\arraybackslash}p{0.4\textwidth}>{\raggedright\arraybackslash}p{0.55\textwidth}r@{}}
    \caption{Parameter spaces evaluated in the controlled ECG experiment. Candidate counts include only combinations varied during development, fixed settings are reported for reproducibility.}
    \label{tab:ecg-parameter-spaces}
    \\

    \toprule
    Method and parameter
        & Candidate values or fixed setting
        & \makebox[0pt][r]{Candidates} \\
    \midrule
    \endfirsthead

    \toprule
    Method and parameter
        & Candidate values or fixed setting
        & \makebox[0pt][r]{Candidates} \\
    \midrule
    \endhead

    \midrule
    \multicolumn{3}{r}{Continued on next page} \\
    \endfoot

    \bottomrule
    \endlastfoot

    \textbf{1D BMND} & & \textbf{240} \\
    \quad Patch length $L$
        & $\{ 8, 16, 32, 64, 128 \}$ samples
        & \\
    \quad Reference step
        & $L / 4$ samples
        & \\
    \quad One-sided search radius
        & $\{ 1, 2, 3, 4 \}\,\mathrm{s}$
        & \\
    \quad Maximum group size
        & $\{ 8, 16, 32, 64 \}$
        & \\
    \quad Matching-threshold scale $c_m$
        & $\{ 0.5, 1, 2 \}$
        & \\
    \quad Hard-thresholding match threshold
        & $2.9527 ( L / 64 ) c_m$
        & \\
    \quad Wiener match threshold
        & $0.3937 ( L / 64 ) c_m$
        & \\
    \quad Transforms
        & Haar group and biorthogonal-1.5 local transforms in the
          hard-thresholding stage; Haar group and DCT local transforms in the
          Wiener stage
        & \\

    \addlinespace
    \textbf{DWT shrinkage} & & \textbf{84} \\
    \quad Wavelet
        & $\{ \mathrm{db4}, \mathrm{sym4}, \mathrm{coif3} \}$
        & \\
    \quad Decomposition level
        & $\{ 3, 4, 5, 6 \}$
        & \\
    \quad Threshold multiplier $c_{\lambda}$
        & $\{ 0.1, 0.25, 0.5, 0.75, 1, 1.25, 1.5 \}$
        & \\
    \quad Threshold rule
        & Soft thresholding with
          $\lambda = c_{\lambda} \sigma \sqrt{2 \log n}$ and periodized
          boundaries
        & \\

    \addlinespace
    \textbf{Nonlocal means} & & \textbf{120} \\
    \quad Patch length
        & $\{ 8, 16, 32, 64, 128 \}$ samples
        & \\
    \quad One-sided search radius
        & $\{ 1, 2, 3, 4 \}\,\mathrm{s}$
        & \\
    \quad Bandwidth multiplier $c_h$
        & $\{ 0.2, 0.4, 0.6, 0.8, 1, 1.2 \}$
        & \\
    \quad Bandwidth and distance correction
        & $h = c_h \sigma$; patch distances were corrected by subtracting
          $2 \sigma^2$
        & \\

    \addlinespace
    \textbf{Similar-segment cooperative filtering} & & \textbf{3} \\
    \quad Segment length
        & $17$ samples
        & \\
    \quad Matching-threshold scale $c_{\tau}$
        & $\{ 0.5, 1, 2 \}$
        & \\
    \quad Matching threshold
        & $\tau = c_{\tau} \max \left(
          1325 v^3 - 252.986 v^2 + 20.359 v + 0.0695,
          \epsilon \right)$
        & \\
    \quad Fixed Savitzky--Golay settings
        & Seven-sample prefilter; polynomial orders $2$ and $4$ for non-peak and peak groups, respectively
        & \\
    \quad Fixed grouping settings
        & Robust peak detection, distance-sorted group members, first-order
          fitting across similar segments, and uniform overlap averaging
        & \\

    \addlinespace
    \textbf{Savitzky--Golay} & & \textbf{27} \\
    \quad Window length
        & $\{ 5, 7, 9, 11, 15, 21, 31, 41, 61 \}$ samples
        & \\
    \quad Polynomial order
        & $\{ 2, 3, 4 \}$
        & \\

    \addlinespace
    \textbf{Adaptive Savitzky--Golay} & & \textbf{84} \\
    \quad Window-length and maximum-order pairs
        & $\{ (9, 7), (11, 7), (11, 9), (15, 9), (15, 13),
          (21, 9), (21, 13) \}$
        & \\
    \quad Maximum curvature-search length
        & $\{ 2, 5, 10 \}$ samples
        & \\
    \quad DSS angle threshold
        & $\{ 0.01, 0.025, 0.05, 0.1 \}\,\mathrm{rad}$
        & \\
    \quad Order assignment
        & Curvature mapped onto integer polynomial orders from $1$ to the
          selected maximum
        & \\

    \addlinespace
    \multicolumn{2}{r}{\textbf{Total development candidates}}
        & \textbf{558} \\
\end{longtable}

\noindent
Here, $n$ is the number of samples in the denoised segment,
$\sigma$ is the known standard deviation of the added Gaussian noise, $v$ is the \ac{sscf} prefilter-residual variance, and $\epsilon$ is a small positive numerical constant.

\endgroup

\end{document}